\numberwithin{theorem}{section} 
\newtheorem{prop}[theorem]{Proposition}
\newtheorem{lem}[theorem]{Lemma}
\newtheorem{defi}[theorem]{Definition}
\begin{document}

\title{Physics-informed kernel learning}

\author{%
 \name Nathan Doumèche \email nathan.doumeche@sorbonne-universite.fr \\
 \addr Sorbonne University, EDF R\&D
 \AND
 \name Francis Bach \email francis.bach@inria.fr \\
 \addr INRIA, \'Ecole Normale Supérieure,  PSL University
 \AND
 \name Gérard Biau \email gerard.biau@sorbonne-universite.fr\\
 \addr Sorbonne University, Institut universitaire de France
 \AND
 \name Claire Boyer \email claire.boyer@universite-paris-saclay.fr \\
 \addr Université Paris-Saclay, Institut universitaire de France
}

\editor{---}
\maketitle

\begin{abstract}
Physics-informed machine learning typically integrates physical priors into the learning process by minimizing a loss function 
that includes both a data-driven term and a partial differential equation (PDE) regularization. Building on the formulation of the problem as a kernel regression task, we use Fourier methods to approximate the associated kernel, and propose a tractable estimator that minimizes the physics-informed risk function. We refer to this approach as physics-informed kernel learning (PIKL). This framework provides theoretical guarantees, enabling the quantification of the physical prior’s impact on convergence speed. We demonstrate the numerical performance of the PIKL estimator through simulations, both in the context of hybrid modeling and in solving PDEs. In particular, we show that PIKL can outperform physics-informed neural networks in terms of both accuracy and computation time. Additionally, we identify cases where PIKL surpasses traditional PDE solvers, particularly in scenarios with noisy boundary conditions.
\end{abstract}

\begin{keywords}
  Physics-informed machine learning, Kernel methods, Physics-informed neural networks, Rates of convergence, Physical regularization
\end{keywords}

\section{Introduction}

\paragraph{Physics-informed machine learning.} Physics-informed machine learning (PIML), as described by \citet{raissi2019PINN}, is a promising framework that combines statistical and physical principles to leverage the strengths of both fields. PIML can be applied to a variety of problems, such as solving partial differential equations (PDEs) using machine learning techniques, leveraging PDEs to accelerate the learning of unknown functions (hybrid modeling), and learning PDEs directly from data (inverse problems). For an introduction to the field and a literature review, we refer to \citet{karniadakis2021piml} and \citet{cuomo2022scientific}. 

\paragraph{Hybrid modeling setting.}
We consider in this paper the classical regression model, which aims at learning the unknown function $f^\star : \mathbb{R}^d \to \mathbb{R}$ such that $Y = f^\star(X) + \varepsilon$, where $Y \in \mathbb{R}$ is the output, $X \in \Omega$ are the features with $\Omega \subseteq [-L, L]^d$ the input domain, and $\varepsilon$ is a random noise. 
Using $n$ observations $(X_1, Y_1), \ldots, (X_n, Y_n)$, independent copies of $(X, Y)$, the goal is to construct an estimator $ \hat{f}_n$ of $f^\star$. What makes PIML special compared to other regression settings is the prior knowledge that $f^\star$ approximately follows a PDE. Therefore, we assume that $f^\star$ is weakly differentiable up to the order $s > \frac{d}{2}$ and that there exists a known differential operator~$\mathscr{D}$ such that $\mathscr{D}(f^\star) \simeq 0$. This framework typically accounts for modeling error by recognizing that $\mathscr{D}(f^\star)$ may not be exactly zero, since most PDEs in physics are derived under ideal conditions and may not hold exactly in practice. 
For example, if $f^\star$ is expected to satisfy the wave equation $\partial^2_{t} f(x,t) \simeq \partial^2_{x} f(x,t)$, we define the operator $\mathscr{D}(f)(x,t) = \partial^2_{t} f(x,t) - \partial^2_{x} f(x,t)$ for $(x,t) \in \Omega$.

To estimate $f^\star$, we consider the minimizer of the physics-informed empirical risk 
\begin{equation}
    R_n(f) = \frac{1}{n}\sum_{i=1}^n |f(X_i) - Y_i|^2 + \lambda_n \|f\|^2_{H^s(\Omega)} + \mu_n \|\mathscr{D}(f)\|^2_{L^2(\Omega)}
    \label{eq:riskBase}
\end{equation}
over the class $\mathscr{F} = H^s(\Omega)$ of candidate functions, where $\lambda_n > 0 $ and $\mu_n \geqslant  0$ are hyperparameters that weight the relative importance of each term. Here, $H^s(\Omega)$ denotes the Sobolev space of functions with weak derivatives up to order $s$.
The empirical risk function $R_n(f)$ is characteristic of hybrid modeling, as it is composed of:
\begin{itemize}
\item A data fidelity term $\frac{1}{n}\sum_{i=1}^n |f(X_i) - Y_i|^2$, which is standard in supervised learning and measures the discrepancy between the predicted values $f(X_i)$ and the observed targets $Y_i$;
\item A	regularization term $\lambda_n \|f\|_{H^s(\Omega)}^2$, which penalizes the regularity of the estimator; 
\item A model error term $\mu_n \|\mathscr{D}(f)\|_{L^2(\Omega)}^2$, which measures the deviation of $f$ from the physical prior encoded in the differential operator $\mathscr{D}$.
To put it simply, the lower this term, the more closely the estimator aligns with the underlying physical principles.
\end{itemize}
Throughout the paper, we refer to $\hat{f}_n$ as the unique minimizer of the empirical risk function, i.e.,
\begin{equation}
    \hat f_n = \mathop{\mathrm{argmin}}_{f \in H^s(\Omega)}\; R_n(f).
    \label{eq:estimator_sob_0}
\end{equation}

\paragraph{Algorithms to solve the PIML problem.} 
Various algorithms have been proposed to compute the estimator $\hat f_n$, and physics-informed neural networks (PINNs) have emerged as a leading approach \citep[e.g.,][]{raissi2019PINN, arzani2021uncovering, karniadakis2021piml, kurz2022hybrid, Agharafeie2023from}. 
PINNs are usually trained by minimizing a discretized version of the risk over a class of neural networks using gradient descent strategies.
Leveraging the good approximation properties of neural networks, as the size of the PINN grows, this type of estimator typically converges to the unique minimizer over the entire space $H^s(\Omega)$ \citep{shin2020convergence,  doumeche2023convergence, mishra2022generalization, shin2023error, bonito2024convergenceerrorcontrolconsistent}. 
However, apart from the fact that optimizing PINNs by gradient descent is an art in itself, the theoretical understanding of the estimators derived through this approach is far from complete \citep{bonfanti2024challengesnonlinearregimephysicsinformed,  rathore2024challengestrainingpinnsloss}, and only a few initial studies have begun to outline their theoretical contours \citep[][]{krishnapriyan2021characterizing, wang2022when,doumeche2023convergence}.
Alternative algorithms for physics-informed learning have since been developed, primarily based on kernel methods, and are seen as promising candidates for bridging the gap between machine learning and PDEs. 
The connections between PDEs and kernel methods are now well established \citep[e.g.,][]{schaback2006from, chen2021solving, batlle2023error}. 
Recently, a kernel method has been adapted to perform operator learning \citep{nelsen2024operator}. It consists of solving a PDE using samples of the initial condition (with a purely data driven empirical risk).
In the case of hybrid modeling, including noise ($\varepsilon \neq 0$) and modeling error ($\mathscr D(f^\star) \neq 0$), \citet{doumeche2024physicsinformed} show that the PIML problem \eqref{eq:estimator_sob_0} can be reformulated as a kernel regression task. Provided the associated kernel $K$ is made explicit, this reformulation allows to obtain a closed-form estimator that converges at least at the Sobolev minimax rate. However, the kernel $K$ is highly dependent on the underlying PDE, and its computation can be tedious even for the the most simple priors, such as $\mathscr{D} = \frac{d}{dx}$ in one dimension.

\paragraph{Quantifying the impact of physics.} 
Understanding how physics can enhance learning is of critical importance to the PIML community. 
\citet{arnone2022spatialRegression} show that for second-order elliptic PDEs in dimension $d=2$, 
the PIML estimator converges at a rate of $n^{-4/5}$, outperforming the Sobolev minimax rate of $n^{-2/3}$. For general linear PDEs in dimension $d$, \citet{doumeche2024physicsinformed} adapt to PIML the notion of effective dimension, a central idea in kernel methods that quantify their convergence rate. In particular, for $d=1$, $s = 1$, $\Omega = [-L, L]$, and $\mathscr D = \frac{d}{dx}$, these authors show that the $L^2$-error of the physics-informed kernel method is of the order of ${\log(n)^2}/{n}$ when $\mathscr D(f^\star) = 0$, and achieves the Sobolev minimax rate $n^{-2/3}$ otherwise. However, extending this type of results to more complex differential operators $\mathscr D$ remains a challenge.

\paragraph{Contributions.}
Building on the characterization of the PIML problem as a kernel regression task, we use Fourier methods to approximate the associated kernel $K$ and, in turn, propose a tractable estimator minimizing the physics-informed risk function.
The approach involves developing the kernel $K$ along the Fourier modes with frequencies bounded my $m$, and then taking $m$ as large as possible.
We refer to this approach as the physics-informed kernel learning (PIKL) method. 
Subsequently, for general linear operators $\mathscr{D}$, a numerical strategy is developed to estimate the effective dimension of the kernel problem, allowing for the quantification of the expected statistical convergence rate when incorporating the physics prior into the learning process.   
Finally, we demonstrate the numerical performance of the PIKL estimator through simulations, both in the context of hybrid modeling and in solving partial differential equations. In short, the PIKL algorithm consistently outperforms specialized PINNs from the literature, which were specifically designed for the applications under consideration.

\section{The PIKL estimator}
\label{sec:finite_approx}
In this section, we detail the construction of the PIKL estimator, our approximate kernel method for physics-informed learning. We begin by observing that solving the PIML problem \eqref{eq:estimator_sob_0} is equivalent to performing a kernel regression task, as shown by \citet[][Theorem 3.3]{doumeche2024physicsinformed}. Thus, leveraging the extensive literature on kernel methods, it follows that the estimator $\hat{f}_n$ has the closed-form expression
\[
\hat f_n = \Big(x\mapsto (K(x, X_1), \hdots, K(x, X_n)) (\mathbb K + n I_n)^{-1} \mathbb Y\Big),
\]
where $K: \Omega^2 \to \mathbb{R}$ is the PIML kernel associated with the problem, and $\mathbb{K} \in \mathcal{M}_n(\mathbb R)$ is the kernel matrix defined by $\mathbb K_{i,j} = K(X_i, X_j)$. 

\paragraph{A finite-element-method approach.} 
The analysis of \cite{doumeche2024physicsinformed} reveals that the kernel related to the PIML problem is uniquely characterized as the solution to a weak PDE. Indeed, for all $x \in \Omega$, the function $y \mapsto K(x, y)$ is the unique solution in $H^s(\Omega)$ to the weak formulation
\begin{equation}
    \forall \phi \in H^s(\Omega), \quad \lambda_n  \int_{\Omega}\big[K(x, \cdot)\; \phi +\sum_{|\alpha| = d}\partial^\alpha K(x, \cdot)\; \partial^\alpha  \phi\big] + \mu_n \int_{\Omega}\mathscr{D}(K(x, \cdot)) \; \mathscr{D}(\phi) = \phi(x).
    \label{eq:weakPDE}
\end{equation}
A spontaneous idea is to approximate the kernel $K$ using finite element methods (FEM). For illustrative purposes, we have applied this approach in numerical experiments with $d=1$, $\Omega = [0,1]$, and $\mathscr{D}(f) = \frac{d}{dx}f - f$. Figure \ref{fig:fem} (Left) depicts the associated kernel function $K(0.4, \cdot)$ with $\lambda_n = 10^{-2}$, $\mu_n = 0$, and $100$ nodes. Figure \ref{fig:fem} (Right) shows that the PIML method \eqref{eq:estimator_sob_0} successfully reconstructs $f^\star(x) = \exp(x)$ using $n = 10$ data points, $\varepsilon \sim \mathcal{N}(0, 10^{-2})$, $\lambda_n = 10^{-10}$, and $\mu_n = 1000$. However, solving the weak formulation \eqref{eq:weakPDE} in full generality is quite challenging, particularly when dealing with arbitrary domains $\Omega$ in dimension $d>1$. In fact, FEM strategies need to be specifically tailored to the PDE and the domain in question. Additionally, standard kernel methods combined with FEM approaches come at a high computational cost, since storing the matrix $\mathbb{K}$ requires $O(n^2)$ memory. This becomes prohibitive for large amounts of data, as $n = 10^4$ already requires several gigabytes of RAM. 
\begin{figure}
    \centering
    \includegraphics[width = 0.4\textwidth]{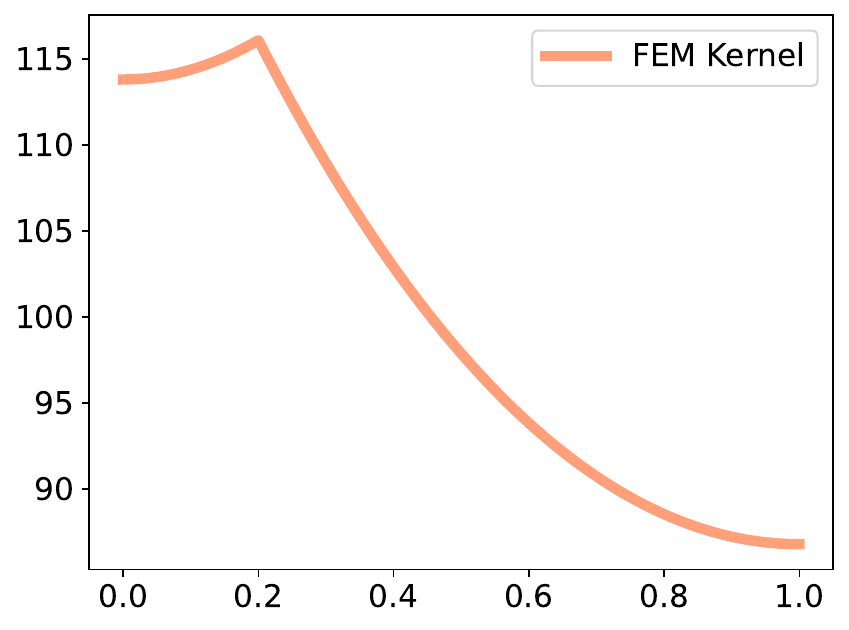}
    \includegraphics[width = 0.4\textwidth]{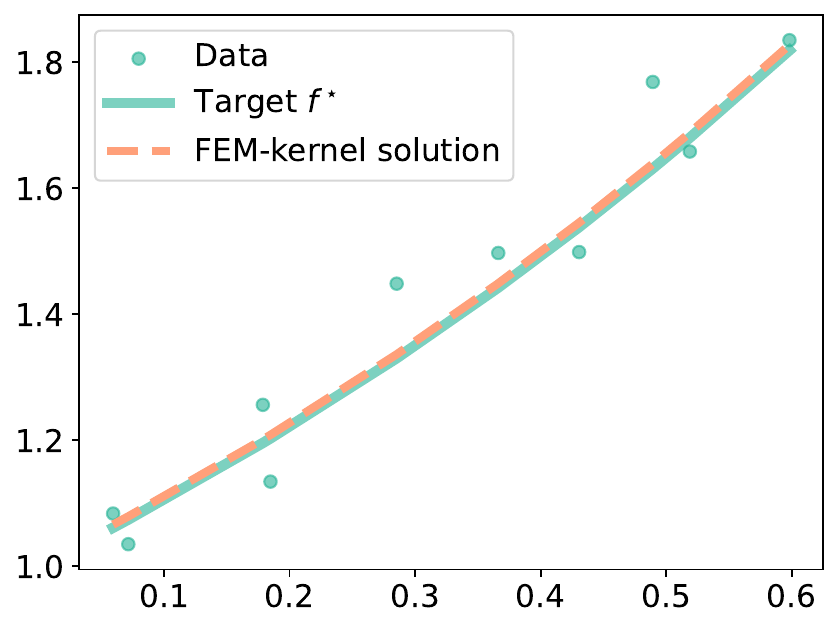}
    \caption{\textbf{Left:} Kernel function $K(0.4, \cdot)$ estimated by the FEM. \textbf{Right:} Kernel method $\hat f_n$ combined with the FEM.}
    \label{fig:fem}
\end{figure}

\paragraph{Fourier approximation.}
Our primary objective in this article is to develop a more agile, flexible, and efficient method capable of handling arbitrary domains $\Omega$. To achieve this, a natural approach is to expand the kernel $K$ as a truncated Fourier series, i.e.,
$K_m(x,y) = \sum_{\|k\|_\infty \leqslant m} a_k \phi_k(x)\phi_k(y)$, 
with $(\phi_k)_{\|k\|_\infty \leqslant m}$ the Fourier basis, $(a_k)_{\|k\|_\infty \leqslant m}$ the kernel coefficients in this basis, and $m$ the order of approximation.
This idea is at the core of techniques such as random Fourier features (RFF) \citep[e.g.,][]{rahimi2007random, yang2012nystrom}. 
However, unlike RFF, the Fourier features in our problem are not random quantities, as they systematically correspond to the low-frequency modes. 
This low-frequency approximation is particularly well-suited to the Sobolev penalty, which more strongly regularizes high frequencies (the analogous RFF algorithm would involve sampling random frequencies $k$ according to a density that is proportional to the Sobolev decay). 
In addition, and more importantly, the use of such approximations bypasses the need to discretize the domain into finite elements and requires only the knowledge of the (partial) Fourier transform of $\mathbf{1}_\Omega$, as will be explained later.

So, following \citet{doumeche2024physicsinformed}, our PIKL algorithm first requires extending the learning problem from $\Omega\subseteq [-L,L]^d$ to the torus $[-2L,2L]^d$. This initial technical step allows us to use approximations with the standard Fourier basis, given for $k\in\mathbb{Z}^d$ and $x\in [-2L,2L]^d$ by
\[
\phi_k(x)=(4L)^{-d/2} e^{\frac{i \pi}{2L}\langle k, x\rangle},
\]
particularly adapted to periodic functions on $[-2L,2L]^d$. 
The minimization of the risk $R_n$ from \eqref{eq:riskBase} over $H^s(\Omega)$ can be then transferred into the minimization of the PIML risk 
\begin{equation}
    \bar R_n(f) = \frac{1}{n}\sum_{i=1}^n |f(X_i) - Y_i|^2 + \lambda_n \|f\|^2_{H^s_{\mathrm{per}}([-2L,2L]^d)} + \mu_n \|\mathscr{D}(f)\|^2_{L^2(\Omega)}
    \label{eq:risk_function}
\end{equation}
over the periodic Sobolev space $H^s_{\mathrm{per}}([-2L,2L]^d)$.
The kernel underlying \eqref{eq:risk_function} is determined by the RKHS norm
\[\|f\|^2_{\mathrm{RKHS}} = \lambda_n \|f\|_{H^s_{\mathrm{per}}([-2L,2L]^d)}^2 + \mu_n \|\mathscr D(f)\|_{L^2(\Omega)}^2.\] 
It is important to note that the estimators derived from the minimization of either $R_n$ or $\bar{R}_n$ share the same statistical guarantees, as both kernel methods have been shown to converge to $f^\star$ at the same rate \citep[][Theorem 4.6]{doumeche2024physicsinformed}. 

Since the kernel $K$ associated with problem \eqref{eq:estimator_sob_0} is most often intractable, a key milestone in the development of our method is to 
minimize $\bar{R}_n$ not over the entire space $H^s_\mathrm{per}([-2L,2L]^d)$, but rather on the finite-dimensional Fourier subspace $H_m=\mathrm{Span}((\phi_k)_{\|k\|_\infty \leqslant m})$. This leads to the PIKL estimator, defined by
\begin{align}
\hat{f}^{\mathrm{PIKL}} = \mathop{\mathrm{argmin}}_{f \in H_m}\; \bar R_n(f).
\label{pb:PIML_on_Hm}
\end{align}
This naturally transforms the PIML problem into a finite-dimensional kernel regression task, where the associated kernel $K_m$ corresponds to a Fourier expansion of $K$, as will be clarified in the following paragraph. Of course, $H_m$ provides better approximates of $H^s_\mathrm{per}([-2L,2L]^d)$
as $m$ increases, since for any function $f \in H^s_\mathrm{per}([-2L,2L]^d)$, $\lim_{m\to\infty}\min_{g\in H_m}\|f-g\|_{H^s_\mathrm{per}([-2L,2L]^d)} = 0$. 
Remarkably, the key advantage of using Fourier approximations in our PIKL algorithm lies in the fact that
both the Sobolev norm $\|f\|_{H^s_\mathrm{per}([-2L,2L]^d)}$ and the PDE penalty $\|\mathscr{D}(f)\|_{L^2(\Omega)}$ are bilinear functions of the Fourier coefficients of $f$. As shown below, these bilinear forms can be represented as closed-form matrices, easing the computation of the estimator.

\paragraph{RKHS norm in Fourier space.} Suppose that the differential operator $\mathscr{D}$ is linear with constant coefficients, i.e., it can be expressed as
$\mathscr{D}(f) = \sum_{|\alpha| \leqslant s} a_\alpha \partial^\alpha f$ for some $s \in \mathbb{N}^\star$ and $a_\alpha \in \mathbb{R}$.
If $f \in H_m$, then $f$ can be rewritten in terms of its Fourier coefficients as
\[
f(x) = \langle z, \Phi_m(x) \rangle_{\mathbb C^{(2m+1)^d}},
\]
where $\langle \cdot, \cdot \rangle_{\mathbb C^{(2m+1)^d}}$ denotes the canonical inner product on $\mathbb C^{(2m+1)^d}$, $z$ is the vector of Fourier coefficients of $f$, and 
\[
\Phi_m(x) = \Big(x \mapsto (4L)^{-d/2} e^{\frac{i \pi}{2L}\langle k, x\rangle}\ \Big)_{\|k\|_\infty\leqslant m}.
\] 
According to Parseval's theorem, the $L^2$-norm of the derivatives of $f\in H^s_{\mathrm{per}}([-2L,2L]^d)$ can be expressed using the Fourier coefficients of $f$ as follows: for $r\leqslant s$ and $1 \leqslant i_1, \hdots, i_r \leqslant d$, 
\[ 
\|\partial^r_{i_1, \hdots, i_r}f\|_{L^2([-2L,2L]^d)}^2 = (2L)^{-2k}\sum_{\|j\|_\infty\leqslant m} |z_j|^2 \prod_{\ell=1}^r j_{i_\ell}^2.
\]
With this notation, the Sobolev norm reads
\[
\|f\|_{H^s_{\mathrm{per}}([-2L,2L]^d)}^2 
= \sum_{\|j\|_\infty\leqslant m,\|k\|_\infty\leqslant m} z_j \bar{z_k} \Big(1+\Big(\frac{\|k\|_2^2}{(2L)^d}\Big)^{s}\Big)\delta_{j,k},
\]
and, similarly,
\[\|\mathscr D(f)\|_{L^2(\Omega)}^2 = \sum_{\|j\|_\infty\leqslant m,\|k\|_\infty\leqslant m} z_j \bar{z_k} \frac{P(j)\bar P(k)}{(4L)^d}\int_\Omega e^{\frac{i \pi}{2L}\langle k-j, x\rangle}dx,\]
where $P(k) = \sum_{|\alpha| \leqslant s} a_\alpha (\frac{-i\pi}{2L})^{|\alpha|}\prod_{\ell =1}^d  (k_\ell)^{\alpha_\ell} $. 
Therefore, introducing $M_m \in \mathcal{M}_{(2m+1)^d}(\mathbb C)$ the matrix with coefficients indexed by $j,k\in \{-m, \hdots, m\}^d$,
\begin{equation}
    (M_m)_{j,k} = \lambda_n \Big(1+\Big(\frac{\|k\|_2^2}{(2L)^d}\Big)^{s}\Big)\delta_{j,k}+\mu_n \frac{P(j)\bar P(k)}{(4L)^d}\int_\Omega e^{\frac{i \pi}{2L}\langle k-j, x\rangle}dx,
    \label{eq:Mm}
\end{equation}
we obtain that the RKHS norm of $f$ is expressed as a bilinear form of its Fourier coefficients $z$, i.e.,  
\[
\|f\|^2_{\mathrm{RKHS}} = \langle z, M_mz\rangle_{\mathbb C^{(2m+1)^d}}.
\]
It is important to note that $M_m$ is Hermitian,\footnote{since $M_m^\star = \bar M_m^\top = M_m$.} positive,\footnote{since $\langle z, M_mz\rangle_{\mathbb C^{(2m+1)^d}} = \|f\|^2_{\mathrm{RKHS}} \geqslant  0$.} and definite.\footnote{since $\langle z, M_mz\rangle_{\mathbb C^{(2m+1)^d}} = 0$ implies $ \|f\|_{H^s([-2L,2L]^d)} = 0$, i.e., $f = 0$.} Therefore, the spectral theorem (see Theorem~\ref{thm:spectral}) ensures that $M_m$ is invertible, and that its positive inverse square root $M_m^{-1/2}$ is unique and well-defined. We have now all the ingredients to define the PIKL algorithm.

\begin{remark}[Linear PDEs with non-constant coefficients]
    This framework could be adapted to PDEs with non-constant coefficients, i.e., to operators $\mathscr{D}(f) = \sum_{|\alpha| \leqslant s} a_\alpha \partial^\alpha f$ for some $s \in \mathbb{N}^\star$ and $a_\alpha \in C^0(\mathbb{R})$. In this case, the polynomial $P$ in \eqref{eq:Mm} should be replaced by convolutions involving the Fourier coefficients of the functions $a_\alpha$.
    \end{remark}

\paragraph{Computing the PIKL estimator.} 
For a function $f\in H_m$, one can evaluate $f$ at $x$ by $f(x) = \langle M_m^{1/2}z, M_m^{-1/2}\Phi_m(x)\rangle_{\mathbb C^{(2m+1)^d}}$. This reproducing property indicates that minimizing the risk $\bar R_n$ on $H_m$ is a kernel method governed by the kernel
\[
K_m(x,y) = \langle  M_m^{-1/2} \Phi_m(x), M_m^{-1/2}\Phi_m(y)\rangle_{\mathbb C^{(2m+1)^d}}.
\]
Define $\mathbb Y = (Y_1,\hdots,Y_n)^\top$ and $\mathbb K_m \in \mathcal M_n(\mathbb C)$ to be the matrix such that $(\mathbb K_m)_{i,j} = K_m(X_i, X_j)$ for all $1 \leqslant i, j \leqslant n$. 
The PIKL estimator \eqref{pb:PIML_on_Hm}, minimizer of $\bar{R}_n$ restricted to $H_m$, is therefore given by
\begin{align}           \hat{f}^{\mathrm{PIKL}} (x)
&= (K_m(x, X_1), \hdots, K_m(x, X_n)) (\mathbb K_m + n I_n)^{-1} \mathbb Y \nonumber\\
&= \Phi_m(x)^\star (\mathbb{\Phi}^\star\mathbb{\Phi} + n M_m)^{-1} \mathbb \Phi^\star \mathbb Y,\label{eq:PIKL}
\end{align}
where 
$
\mathbb{\Phi} = \begin{pmatrix}
    \Phi_m(X_1)^\star\\
    \vdots\\
    \Phi_m(X_n)^\star
\end{pmatrix} \in  \mathcal{M}_{n, (2m+1)^d}(\mathbb C)$. The formula obtained in \eqref{eq:PIKL} is provided by the so-called kernel trick.  
This step offers a significant advantage to the PIKL estimator as it reduces the computational burden in large sample regimes: instead of storing and inverting the $n \times n$ matrix $\mathbb{K}_m + nI_n$, we only need to store and invert the $(2m+1)^d \times (2m+1)^d$ matrix $\mathbb{\Phi}^\star \mathbb{\Phi} + n M_m$. 
Moreover, the computation of $\mathbb{\Phi}^\star \mathbb{\Phi}$ and $\mathbb \Phi^\star \mathbb Y$ can be performed online and in parallel as $n$ grows.
Of course, this approach is subject to the curse of dimensionality. However, it is unreasonable to try to learn more parameters than the sample complexity $n$. Therefore, in practice, $(2m + 1)^d \ll n$, which justifies the preference of the $(2m + 1)^{2d} $ storage complexity over the $n^2$ storage complexity of the FEM-based algorithm. In addition, similar to PINNs, the PIKL estimator has the advantage that its training phase takes longer than its evaluation at certain points. In fact, once the $(2m + 1)^d$ Fourier modes of $\hat{f}_n(x)$ (given by $(\mathbb{\Phi}^\star\mathbb{\Phi} + nM_m)^{-1}\mathbb{\Phi}^\star Y$) are computed, the evaluation of $\Phi^\star_m(x)$ is straightforward. This is in sharp contrast to the FEM-based strategy, which requires approximating the kernel vector $(K(x, X_1), \dots, K(x, X_n))$ at each query point $x$.

We also emphasize that the PIKL predictor is characterized by low-frequency Fourier coefficients, which, in turn, enhance its interpretability.  This methodology differs significantly from PINNs, which are less interpretable and rely on gradient descent for optimization \citep[see, e.g.,][]{wang2022when}.

\begin{remark}[PIKL vs.\ spectral methods] In the PIML context, the RFF approach resembles a well-known class of powerful tools for solving PDEs, known as spectral and pseudo-spectral methods \citep[e.g.,][]{canuto2007spectral}. These methods solve PDEs by selecting a basis of orthogonal functions and computing the coefficients of the solution on that basis to satisfy both the boundary conditions and the PDE itself. For example, the Fourier basis $(x \mapsto \exp(\frac{i \pi}{2L} \langle k, x\rangle (2L)^{-1}))_{k \in \mathbb{Z}^d}$ already used in this paper is particularly well suited for solving linear PDEs on the square domain $[-2L, 2L]^d$ with periodic boundary conditions. Spectral methods such as these have already been used in the PIML community to integrate PDEs with machine learning techniques \citep[e.g.,][]{meuris2023machine}. However, the basis functions used in spectral and pseudo-spectral methods must be specifically tailored to the domain $\Omega$, the differential operator $\mathscr{D}$, and the boundary conditions.
For more information on this topic, please refer to Appendix~\ref{sec:discussion_spec}.
\end{remark}

\paragraph{Computing $M_m$ for specific domains.} Computing the matrix $M_m$ requires the evaluation of the integrals $(j,k) \mapsto \int_\Omega e^{\frac{i \pi}{2L}\langle k-j, x\rangle}dx$.
In general, these integrals can be approximated using numerical integration schemes or Monte Carlo methods. However, it is possible to provide closed-form expressions for specific domains $\Omega$. 
To do so, for $d \in \mathbb{N}^\star$, $L > 0$, and $\Omega \subseteq [-L, L]^d$, we define the characteristic function $F_\Omega $ of $\Omega$ by 
    \[
    F_\Omega(k) =  \frac{1}{(4L)^{d}}\int_\Omega e^{\frac{i \pi}{2L}\langle k, x\rangle}dx.
    \] 
\begin{prop}[Closed-form characteristic functions]
The characteristic functions associated with the cube and the Euclidean ball can be analytically obtained as follows.
\begin{itemize}
\item (Cube) Let $\Omega = [-L,L]^d$. Then, for $k\in\mathbb{Z}^d$,
    \[F_\Omega(k) =  \prod_{j=1}^d\frac{\sin(\pi k_j/2)}{\pi k_j}.\]
    \label{prop:geom_cube}
\item (Euclidean ball) Let $d=2$ and $\Omega = \{x \in [-L,L], \|x\|_2 \leqslant L\}$. Then, for $k\in\mathbb{Z}^d$,
    \[F_\Omega(k) =   \frac{J_1(\pi \|k\|_2/2)}{4\|k\|_2},\]
where $J_1$ is the  Bessel function of the first kind of parameter 1.   
\end{itemize}
\end{prop}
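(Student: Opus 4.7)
The plan is to compute both integrals directly, using Fubini for the cube and polar coordinates for the ball. For the cube $\Omega=[-L,L]^d$, I would write
\[
\int_\Omega e^{\frac{i\pi}{2L}\langle k,x\rangle}dx = \prod_{j=1}^d \int_{-L}^L e^{\frac{i\pi k_j x_j}{2L}}dx_j,
\]
and evaluate each factor by taking an antiderivative. For $k_j\neq 0$, the 1D integral equals $\frac{2L}{i\pi k_j}(e^{i\pi k_j/2}-e^{-i\pi k_j/2}) = \frac{4L\sin(\pi k_j/2)}{\pi k_j}$. Multiplying the $d$ factors and dividing by the prefactor $(4L)^d$ yields the claimed product formula. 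The case $k_j=0$ is handled by continuity (the integrand is $1$, the factor is $2L$, and the ratio $\frac{\sin(\pi k_j/2)}{\pi k_j}$ is interpreted as its limit $\tfrac{1}{2}$).

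For the Euclidean ball in $d=2$, I would switch to polar coordinates $x = r(\cos\theta,\sin\theta)$ and exploit the rotation-invariance of Lebesgue measure. Writing $\langle k,x\rangle = r\|k\|_2\cos(\theta-\phi)$ for $\phi$ the argument of $k$, the integral becomes
\[
\int_0^L r \int_0^{2\pi} e^{i\frac{\pi r\|k\|_2}{2L}\cos(\theta-\phi)}\,d\theta\,dr.
\]
The inner angular integral is $2\pi$-periodic in $\theta$, so the shift by $\phi$ is harmless, and by the standard integral representation $J_0(z)=\tfrac{1}{2\pi}\int_0^{2\pi} e^{iz\cos\theta}d\theta$ it equals $2\pi J_0(\pi r\|k\|_2/(2L))$.

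It then remains to compute $\int_0^L r J_0(\pi r\|k\|_2/(2L))\,dr$. The substitution $u = \pi r\|k\|_2/(2L)$ reduces this to $\frac{4L^2}{(\pi\|k\|_2)^2}\int_0^{\pi\|k\|_2/2} u J_0(u)\,du$, and the classical identity $\int_0^a u J_0(u)\,du = a J_1(a)$, which is a direct consequence of the Bessel recurrence $(uJ_1(u))'=uJ_0(u)$, gives $\frac{2L^2}{\pi\|k\|_2}J_1(\pi\|k\|_2/2)$. Multiplying by the $2\pi$ from the angular integration and dividing by the prefactor $(4L)^2 = 16L^2$ yields $F_\Omega(k) = J_1(\pi\|k\|_2/2)/(4\|k\|_2)$, as claimed.

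There is no serious obstacle here; both formulas are computations of Fourier transforms of indicator functions and reduce to classical identities. The only ingredients that are not elementary calculus are the integral representation of $J_0$ and the antiderivative of $uJ_0(u)$, both standard, and one should take mild care at $k=0$ (where limits/$J_1(0)=0$ with the quotient interpreted by continuity give the correct value $F_\Omega(0) = |\Omega|/(4L)^d$).
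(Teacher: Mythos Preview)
Your proof is correct. For the cube, your argument is essentially identical to the paper's: factor via Fubini, integrate each one-dimensional factor explicitly, and simplify using $e^{i\pi k_j/2}-e^{-i\pi k_j/2}=2i\sin(\pi k_j/2)$. For the Euclidean ball, the paper does not actually derive the formula but simply cites \citet[][Table 13.4]{bracewell}; your self-contained computation via polar coordinates, the integral representation $J_0(z)=\tfrac{1}{2\pi}\int_0^{2\pi}e^{iz\cos\theta}d\theta$, and the antiderivative identity $(uJ_1(u))'=uJ_0(u)$ is the standard route to that tabulated result and is carried out accurately. So your approach coincides with the paper's on the first item and supplies the omitted derivation on the second.
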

This proposition, along with similar analytical results for other domains, can be found in \citet[][Table 13.4]{bracewell}, noting that
$F_\Omega$
is the Fourier transform of the indicator function $\mathbf{1}_\Omega$ and is also the characteristic function of the uniform distribution on $\Omega$ evaluated at $\frac{k}{2L}$.
We can extend these computations further since, given the characteristic functions of elementary domains $\Omega$, it is easy to compute the characteristic functions of translation, dilation, disjoint unions, and Cartesian products of such domains
(see Proposition~\ref{prop:op_char} in Appendix~\ref{app:Theory}). 
For instance, it is straightforward to obtain the characteristic function of the three-dimensional  cylinder $\Omega = \{x \in [-L,L], \|x\|_2 \leqslant L\} \times [-L,L]$ as
\[
F_\Omega(k_1, k_2, k_3) = \frac{J_1(\pi (k_1^2+k_2^2)^{1/2}/2)}{4(k_1^2+k_2^2)^{1/2}} \times \frac{\sin(\pi k_3/2)}{\pi k_3}.
\]

\section{The PIKL algorithm in practice}

\begin{wrapfigure}[15]{r}{0.4\textwidth}
    \centering
    \includegraphics[width=\linewidth]{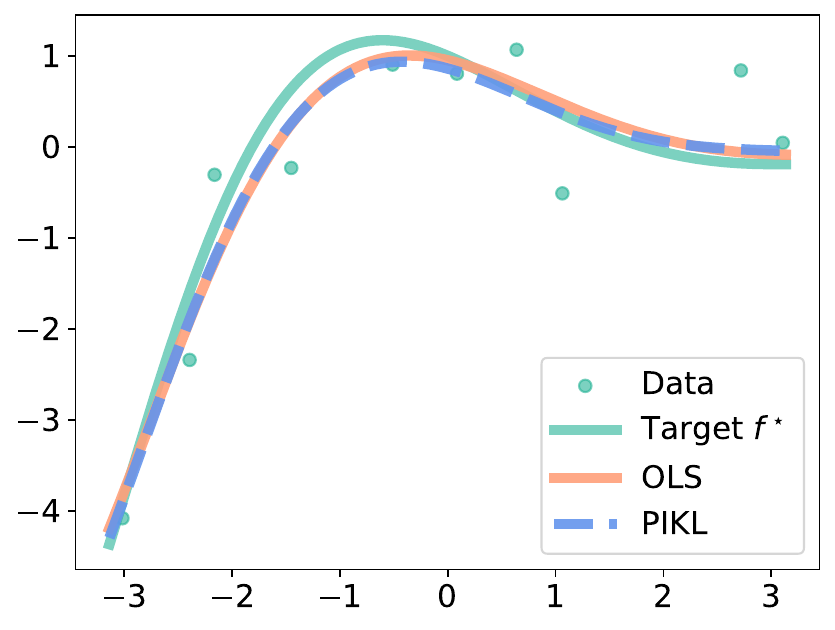}
    \caption{OLS and PIKL estimators for the harmonic oscillator with $d=1$, sample size $n=10$.
    \label{fig:ressort_least_square}}
\end{wrapfigure}
To enhance the reproducibility of our work, we provide a \texttt{Python} package that implements the PIKL estimator, designed to handle any linear PDE prior with constant coefficients in dimensions $d = 1$ and $d = 2$. This package is available at \url{https://github.com/NathanDoumeche/numerical_PIML_kernel}. $\;\;$ Note that this package implements the matrix inversion of the PIKL formula \eqref{eq:PIKL} by solving a linear system using the LU decomposition. Of course, any other efficient method to avoid direct matrix inversion could be used  instead, such as solving a linear system with the conjugate gradient method. 

Through numerical experiments, we demonstrate the performance of our approach in simulations for hybrid modeling (Subsection~\ref{sec:hybrid_mod}),
and derive experimental convergence rates that quantify the benefits of incorporating PDE knowledge into a learning regression task (Subsection~\ref{sec:eff_dim}).

\subsection{Hybrid modeling}
\label{sec:hybrid_mod}

\paragraph{Perfect modeling with closed-form PDE solutions.} 
We start by assessing the performance of the PIKL estimator in a perfect modeling situation (i.e., $\mathscr D(f^\star) = 0$), where the solutions of the PDE $\mathscr D(f) = 0$ can be decomposed on a basis $(f_k)_{k\in \mathbb N}$ of closed-form solution functions. 
In this ideal case, the spectral method suggests an alternative estimator, which involves learning the coefficients $a_k \in \mathbb R$ of $f^\star = \sum_{k\in \mathbb N} a_k f_k$ in this basis.
For example, consider the one-dimensional case ($d = 1$) with domain $\Omega = [-\pi, \pi]$, and the harmonic oscillator differential prior $\mathscr D(f) = \frac{d^2 f}{dx^2} + \frac{df}{dx} + f$. In this case, the solutions of $\mathscr D(f) = 0$ are the linear combinations $f = a_1 f_1 + a_2 f_2$, where  $(a_1, a_2) \in \mathbb R^2$, $f_1(x) = \exp(-x/2)\cos(\sqrt{3}x/2)$, and $f_2(x) = \exp(-x/2)\sin(\sqrt{3}x/2)$. 
Thus, the spectral method focuses on learning the vector $(a_1, a_2)\in \mathbb R^2$, instead of learning the Fourier coefficients of $f^\star$, which is the approach taken by the PIKL algorithm.

A baseline that exactly leverages the particular structure of this problem, referred to as the ordinary least squares (OLS) estimator, is therefore
$\hat g_n = \hat a_1 f_1 + \hat a_2 f_2$, where 
\[
(\hat a_1, \hat a_2) = \mathop{\mathrm{argmin}}_{(a_1, a_2)\in \mathbb R^2}\; \frac{1}{n}\sum_{i=1}^n |a_1 f_1(X_i)+ a_2 f_2(X_i)-Y_i|^2.
\]
\begin{wrapfigure}[20]{r}{0.4\textwidth}
    \centering
    \vspace{-0.5cm}
\includegraphics[width=\linewidth]{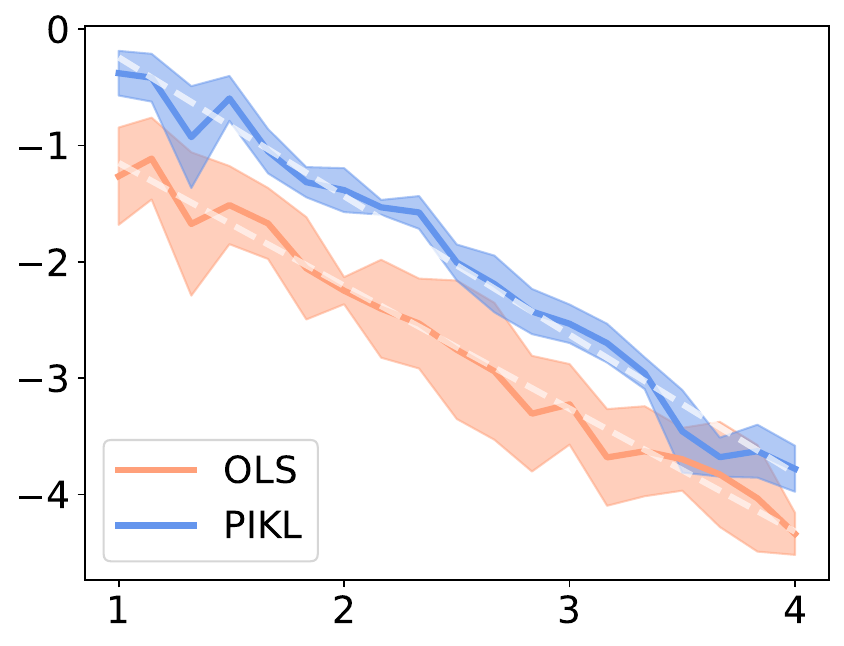}
     \caption{$L^2$-error (mean $\pm$ std over 5 runs) of the OLS and PIKL estimators for the harmonic oscillator with $d=1$, w.r.t.\ $n$ in $\log_{10}-\log_{10}$ scale.  The dashed lines represent adjusted linear models w.r.t.\ $n$, for both $L^2$-errors.\label{fig:perf_ressort_least_square}}
\end{wrapfigure}To compare the PIKL and OLS estimators, we generate data such that $Y = f^\star(X) + \varepsilon$, where $X \sim \mathcal{U}(\Omega)$, $\varepsilon \sim \mathcal{N}(0, \sigma^2)$ with $\sigma = 0.5$, and the target function is $f^\star = f_1$ (corresponding to $(a_1, a_2) = (1, 0)$). We implement the PIKL algorithm with $601$ Fourier modes ($m = 300$) and $s = 2$. 
Figure~\ref{fig:ressort_least_square} shows that even with very few data points ($n = 10$) and high noise levels, both the OLS and PIKL methods effectively reconstruct $f^\star$, both incorporating physical knowledge in their own way. In Figure~\ref{fig:perf_ressort_least_square}, we display the $L^2$-error of both estimators for different sample sizes $n$. The two methods have an experimental convergence rate of $n^{-1.1}$, which is consistent with the expected parametric rate of $n^{-1}$.
This sanity check shows that under perfect modeling conditions, the PIKL estimator with $m=300$ performs as well as the OLS estimator specifically designed to explore the space of PDE solutions.

\paragraph{Combining the best of physics and data in imperfect modeling.}
In this paragraph, we deal with an imperfect modeling scenario using the heat differential operator $\mathscr D(f) = \partial_1 f - \partial^2_{2,2} f$ in dimension $d=2$ over the domain $\Omega = [-\pi, \pi]^2$. The data are generated according to the model $Y = f^\star(X) + \varepsilon$, where $\|\mathscr D(f^\star)\|_{L^2(\Omega)} \neq 0$. 
We assume, however, that the PDE serves as a good physical prior, meaning that $\|f^\star\|_{L^2(\Omega)}^2$ is significantly larger than the modeling error $\|\mathscr D(f^\star)\|_{L^2(\Omega)}^2$.
The hybrid model is implemented using the PIKL estimator with parameters $s=2$, $\lambda_n = n^{-2/3}/10$, and $\mu_n = 100/n$. These hyperparameters are selected to ensure that, when only a small amount of data is available, the model relies heavily on the PDE. 
Yet, as more data become available, the model can use the data to correct the modeling error. 
The performance of the PIKL estimator is compared with that of a purely data-driven estimator, referred to as the Sobolev estimator, and a strongly PDE-penalized estimator, referred to as the PDE estimator. 
The Sobolev estimator uses the same parameter $s=2$ and $\lambda_n = n^{-2/3}/10$, but sets $\mu_n = 0$. This configuration ensures that the estimator relies entirely on the data without considering the PDE as a prior. On the other hand, the PDE estimator is configured with parameters $s=2$, $\lambda_n = 10^{-10}$, and $\mu_n = 10^{10}$. These hyperparameters are set to ensure that the resulting PDE estimator effectively satisfies the heat equation, making it highly dependent on the physical model.

We perform an experiment where $\varepsilon \sim \mathcal{N}(0, \sigma^2)$ with $\sigma = 0.5$, and $f^\star(t,x) = \exp(-t)\cos(x) + 0.5 \sin(2x)$. This scenario is an example of imperfect modeling, since $\|\mathscr{D}(f^\star)\|_{L^2(\Omega)}^2 = \pi > 0$. However, the heat equation serves as a strong physical prior, since 
$\|\mathscr{D}(f^\star)\|_{L^2(\Omega)}^2/ \|f^\star\|_{L^2(\Omega)}^{2}\simeq 4 \times 10^{-3}$.
Figure~\ref{fig:expe_heat} illustrates the performance of the different estimators. 

\begin{wrapfigure}[22]{r}{0.5\linewidth}
    \centering
    \includegraphics[width=\linewidth]{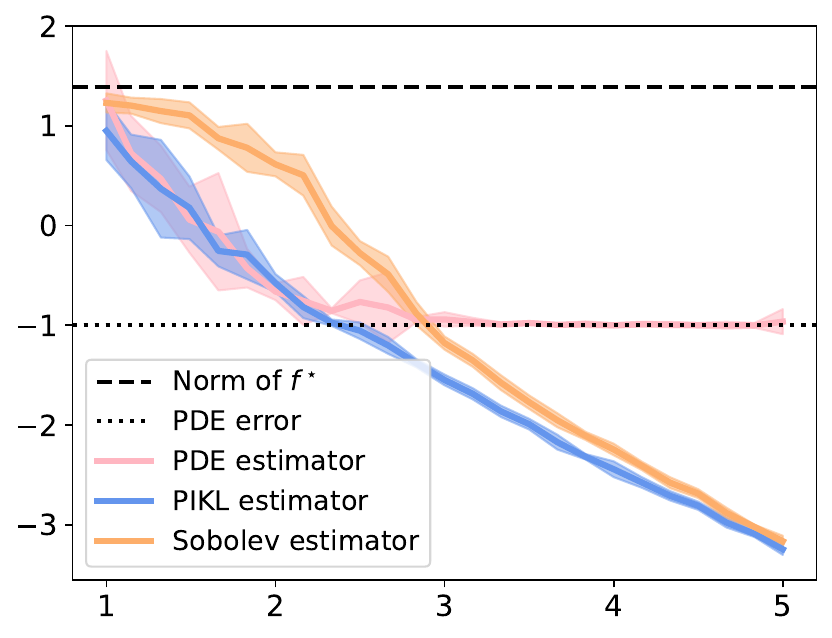}
    \caption{$L^2$-error  (mean $\pm$ std over 5 runs) of the PDE, PIKL, and Sobolev estimators  for imperfect modeling with the heat equation, as a function of $n$ in $\log_{10}-\log_{10}$ scale. The PDE error is the $L^2$-norm between $f^\star$ and the PDE solution that is closest to $f^\star$.}
    \label{fig:expe_heat}
\end{wrapfigure}
Clearly, the PDE estimator outperforms the Sobolev estimator when the data set is small ($n \leqslant 10^2$). 
As expected, the performance of the Sobolev estimator improves as the sample size increases ($n \geqslant  10^3$), but it remains consistently inferior to that of the PIKL.
When only a small amount of data is available, the PDE provides significant benefits, and the $L^2$-error decreases at the super-parametric rate of $n^{-2}$ for both the PIKL and the PDE estimators. However, in the context of imperfect modeling, the PDE estimator cannot overcome the PDE error, resulting in no further improvement beyond $n \geqslant  100$. In addition, when a large amount of data is available, the data become more reliable than the PDE. In this case, the errors for both the PIKL and the Sobolev estimators decrease at the Sobolev minimax rate of $n^{-2/3}$. Overall, the PIKL estimator successfully combines the strengths of both approaches, using the PDE when data is scarce and relying more on data when it becomes abundant.

\subsection{Measuring the impact of physics with the effective dimension}
\label{sec:eff_dim}
The important question of measuring the impact of the differential operator $\mathscr D$ on the convergence rate of the PIML estimator has not yet found a clear answer in the literature. In this subsection, we propose an approach to experimentally compare the PIKL convergence rate to the Sobolev minimax rate in $H^s(\Omega)$, which is $n^{-2s/(2s+d)}$ \citep[e.g.,][Theorem 2.1]{tsybakov2009introduction}. 
\paragraph{Theoretical backbone.}
According to \citet[Theorem 4.3]{doumeche2024physicsinformed}, if $X$ has a bounded density and the noise $\varepsilon$ is sub-Gamma with parameters $(\sigma, M)$, the $L^2$-error of both estimators \eqref{eq:estimator_sob_0} and \eqref{pb:PIML_on_Hm} satisfies
\begin{align}
     &\mathbb{E}\int_\Omega |\hat f_n-f^\star|^2 d{\mathbb P}_X \nonumber\\
        &\quad \leqslant  C_4 \log^2(n)\Big(\lambda_n \|f^\star\|_{H^s(\Omega)}^2 + \mu_n \|\mathscr{D}(f^\star)\|_{L^2(\Omega)}^2 + \frac{M^2}{n^2 \lambda_n} + \frac{\sigma^2\mathscr{N}(\lambda_n, \mu_n)}{n}\Big), \label{eq:err_l2}
\end{align} 
where $\mathbb P_X$ is the distribution of $X$. The quantity $\mathscr{N}(\lambda_n, \mu_n)$ on the right-hand side of~\eqref{eq:err_l2} is referred to as the effective dimension \citep[see, e.g.,][]{caponnetto2007optimal}. Since $\lambda_n$ and $\mu_n$ can be freely chosen by the practitioner, the effective dimension $\mathscr{N}(\lambda_n, \mu_n)$ becomes a key consideration that help quantify the impact of the physics on the learning problem. 
Unfortunately, bounding $\mathscr{N}(\lambda_n, \mu_n)$ is not trivial. 
\citet{doumeche2024physicsinformed} have shown that 
\[
\mathscr N(\lambda_n, \mu_n) \leqslant  \sum_{\lambda\in \sigma(C\mathscr O_nC)} \frac{1}{1+\lambda^{-1}},
\]
 where $\mathscr O_n$ is the operator $\mathscr O_n = \lim_{m\to \infty} M_m^{-1}$ (where the limit is taken in the sense of the operator norm --- see Definition~\ref{defi:op_norm}) and $C$ is the operator $C(f) = 1_\Omega f$. Therefore, a natural idea to assess the effective dimension is to replace $C\mathscr O_nC$ by $C_mM_m^{-1}C_m$, where $C_m: H_m \to H_m$ is defined by \[\forall j,k\in \{-m, \hdots, m\}^d, \quad (C_m)_{j,k} = \frac{1}{(4L)^{d}}\int_\Omega e^{\frac{i \pi}{2L}\langle k, x\rangle}dx.\]
The following theorem shows that this is a sound strategy, in the sense that computing the effective dimension using the eigenvalues of $C_mM_m^{-1}C_m$ becomes increasingly accurate as $m$ grows.
\begin{theorem}[Convergence of the effective dimension]
\label{thm:convergence_eff_dim}\hfill
    \begin{itemize}
        \item[(i)] One has
        \[\lim_{m\to\infty}\sum_{\lambda\in \sigma(C_mM_m^{-1}C_m)} \frac{1}{1+\lambda^{-1}} = \sum_{\lambda\in \sigma(C\mathscr O_nC)} \frac{1}{1+\lambda^{-1}}.\]
        \item[(ii)] Let $\sigma^\downarrow_k(C_mM_m^{-1}C_m)$ be the $k$-th highest eigenvalue of $C_mM_m^{-1}C_m$. The spectrum of the matrix $C_mM_m^{-1}C_m$ converges to the spectrum of $C\mathscr O_nC$ in the following sense:
        \[\forall k \in \mathbb N^\star,\quad \lim_{m\to\infty} \sigma^\downarrow_k(C_mM_m^{-1}C_m) = \sigma^\downarrow_k(C\mathscr O_nC).\]
    \end{itemize}
\end{theorem}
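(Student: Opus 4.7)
The plan is to lift every finite matrix to an operator on $L^2([-2L,2L]^d)$: let $P_m$ denote the orthogonal projection onto $H_m$, and set $\tilde M_m^{-1} := P_m M_m^{-1} P_m$, $\tilde C_m := P_m C P_m$, and $T_m := \tilde C_m \tilde M_m^{-1} \tilde C_m$. These liftings merely add zero eigenvalues, so the non-zero spectra of $T_m$ and $C_m M_m^{-1} C_m$ coincide, and both claims reduce to spectral convergence $T_m \to T := C \mathscr O_n C$.

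The first goal is operator-norm convergence $\|T_m - T\|\to 0$. By definition, $\|\tilde M_m^{-1} - \mathscr O_n\|\to 0$, so $\mathscr O_n$ is compact as a norm-limit of finite-rank operators. Moreover, $P_m \to I$ strongly on $L^2$ by convergence of Fourier series, whence $\tilde C_m \to C$ strongly (and its adjoint too, by self-adjointness). Decomposing
\[
T_m - T = \tilde C_m (\tilde M_m^{-1} - \mathscr O_n) \tilde C_m + (\tilde C_m - C)\mathscr O_n \tilde C_m + C \mathscr O_n (\tilde C_m - C),
\]
the first summand vanishes in operator norm thanks to $\|\tilde C_m\|\leqslant 1$, while the other two are handled by the classical principle that if $K$ is compact and $A_m\to A$ strongly, then $A_m K$ and $K A_m$ converge in operator norm. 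From $\|T_m - T\|\to 0$, Weyl's perturbation inequality $|\sigma_k^\downarrow(T_m) - \sigma_k^\downarrow(T)| \leqslant \|T_m - T\|$ immediately delivers part (ii).

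For part (i), the strategy is to combine the pointwise eigenvalue convergence from (ii) with a dominated convergence argument; this requires a summable envelope of $\sigma_k^\downarrow(T_m)$ that is uniform in $m$. Non-negativity of the PDE-penalty block in \eqref{eq:Mm} yields the Loewner ordering $M_m \succeq \lambda_n D_m$, where $D_m$ is diagonal with entries $1+(\|k\|_2^2/(2L)^d)^s$. Combining Weyl monotonicity with the singular-value bound $\sigma_k^\downarrow(AB)\leqslant \|A\|\,\sigma_k^\downarrow(B)$ and with $\|\tilde C_m\|\leqslant 1$, one obtains
\[
\sigma_k^\downarrow(T_m) \;\leqslant\; \sigma_k^\downarrow(\tilde M_m^{-1}) \;\leqslant\; \frac{1}{\lambda_n\bigl(1+(\|k_{(k)}\|_2^2/(2L)^d)^s\bigr)},
\]
where $k_{(k)}$ denotes the $k$-th element of $\mathbb Z^d$ when ordered by increasing norm. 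Since the $n$-th smallest value in $\{\|j\|_2 : j \in \mathbb Z^d\}$ grows like $n^{1/d}$, the assumption $s>d/2$ makes the right-hand side summable in $k$ independently of $m$, and dominated convergence closes (i).

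The delicate point is this uniform summable domination: operator-norm convergence of $T_m$ alone is not enough to exchange limit and infinite sum, and Weyl's inequality provides only pointwise, not uniform, control of the eigenvalues. The remedy exploits the explicit Fourier block structure of $M_m$: the Sobolev penalty supplies a diagonal minoration $\lambda_n D_m$ that is uniform in $m$, and the assumption $s>d/2$ converts the resulting envelope into a summable sequence.
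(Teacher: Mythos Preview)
Your proof is correct and follows the same overall architecture as the paper: lift to $L^2$, establish operator-norm convergence $T_m\to T$ via a three-term telescoping, deduce eigenvalue convergence from a Weyl-type perturbation bound, and close part~(i) by dominated convergence with a uniform summable envelope on $\sigma_k^\downarrow(T_m)$.

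The differences are in the execution of two steps. For the cross terms $(\tilde C_m-C)\mathscr O_n\tilde C_m$ and $C\mathscr O_n(\tilde C_m-C)$, the paper diagonalises $M^{-1}$ explicitly and uses $\sum_k\sigma_k^\downarrow(M^{-1})<\infty$ together with $\|(C_m-C)f_k\|_2\to0$ on each eigenfunction; you instead invoke the general fact that strong convergence composed with a compact operator yields norm convergence, which is cleaner. For the dominated-convergence envelope, the paper bounds $\sigma_k^\downarrow(C_mM_m^{-1}C_m)\leqslant\sigma_k^\downarrow(M^{-1})$ via Courant--Fischer and then appeals to an external reference (their Proposition~B.6 from \cite{doumeche2024physicsinformed}) for summability of $\sigma_k^\downarrow(M^{-1})$; your Loewner argument $M_m\succeq\lambda_n D_m$ produces the explicit majorant $\lambda_n^{-1}\bigl(1+(\|k_{(k)}\|_2^2/(2L)^d)^s\bigr)^{-1}$ and reads off summability directly from $s>d/2$, which makes the argument self-contained. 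Both routes are equivalent in strength; yours is somewhat more transparent about where the hypothesis $s>d/2$ enters.
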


The provided \texttt{Python} package\footnote{\url{https://github.com/NathanDoumeche/numerical_PIML_kernel}} includes numerical approximations of the effective dimension in dimensions $d=1$ and $d=2$ for any linear operator $\mathscr{D}$ with constant coefficients, when $\Omega$ is either a cube or a Euclidean ball. The code is available is designed to run on both CPU and GPU. The convergence of the effective dimension as $m$ grows is studied in greater detail in Appendix \ref{sec:eff_dim_m}.

\paragraph{Comparison to the closed-form case.} We start by assessing the quality of the approximation encapsulated in Theorem \ref{thm:convergence_eff_dim} in a scenario where the eigenvalues can be theoretically bounded. 
When $d=1$, $s=1$, $\mathscr D = \frac{d}{dx}$, and $\Omega = [-\pi,\pi]$, one has \citep[Proposition 5.2]{doumeche2024physicsinformed} 
\[\frac{4}{(\lambda_n+\mu_n)(k+4)^2} \leqslant \sigma^\downarrow_k(C\mathscr O_nC)\leqslant \frac{4}{(\lambda_n+\mu_n)(k-2)^2}.\] 
This shows that $\log \sigma^\downarrow_k(C\mathscr O_nC) \sim_{k\to \infty} - 2\log(k)$.
Figure \ref{fig:1d_spectrum} (Left) represents the eigenvalues of $C_mM_m^{-1}C_m$ in decreasing order, for increasing values of $m$, with $\lambda_n = 0.01$ and $\mu_n = 1$. 
For any fixed $m$, two distinct regimes can be clearly distinguished: initially, the eigenvalues decrease linearly on a $\log-\log$ scale and align with the theoretical values of $- 2\log(k)$.
Afterward, the eigenvalues suddenly drop to zero. As $m$ increases, the spectrum progressively approaches the theoretical bound.

In Appendix~\ref{sec:eff_dim_m}, we show that $m = 10^{2}$ Fourier modes are sufficient to accurately approximate the effective dimension when $n \leqslant 10^4$. It is evident from Figure \ref{fig:1d_spectrum} (Right) that the effective dimension exhibits a sub-linear behavior in the $\log-\log$ scale, experimentally confirming the findings of \citet{doumeche2024physicsinformed}, which show that $\mathcal{N}(\frac{\log(n)}{n}, \frac{1}{\log(n)}) = o_{n\to\infty}(n^\gamma)$ for all $\gamma > 0$. 
So, plugging this into \eqref{eq:err_l2} with $\lambda_n = n^{-1}\log(n)$ and $\mu_n = \log(n)^{-1}$ leads to
\begin{equation*}
        \mathbb{E}\int_{[-L,L]} |\hat f_n-f^\star|^2 d{\mathbb P}_X = (\|f^\star\|_{H^1(\Omega)}^2 + \sigma^2 + M^2)O_n \big( n^{-1} \log^3(n)\big) 
\end{equation*}
when $\mathscr D(f^\star) = 0$, i.e., when the modeling is perfect.
The Sobolev minimax rate on $H^1(\Omega)$ is $n^{-2/3}$, whereas the experimental bound in this context gives a rate of $n^{-1}$. This indicates that when the target $f^{\star}$ satisfies the underlying PDE, the gain in terms of speed from incorporating the physics into the learning problem is  $n^{-1/3}$.
\begin{figure}
    \centering
    \includegraphics[scale=0.5]{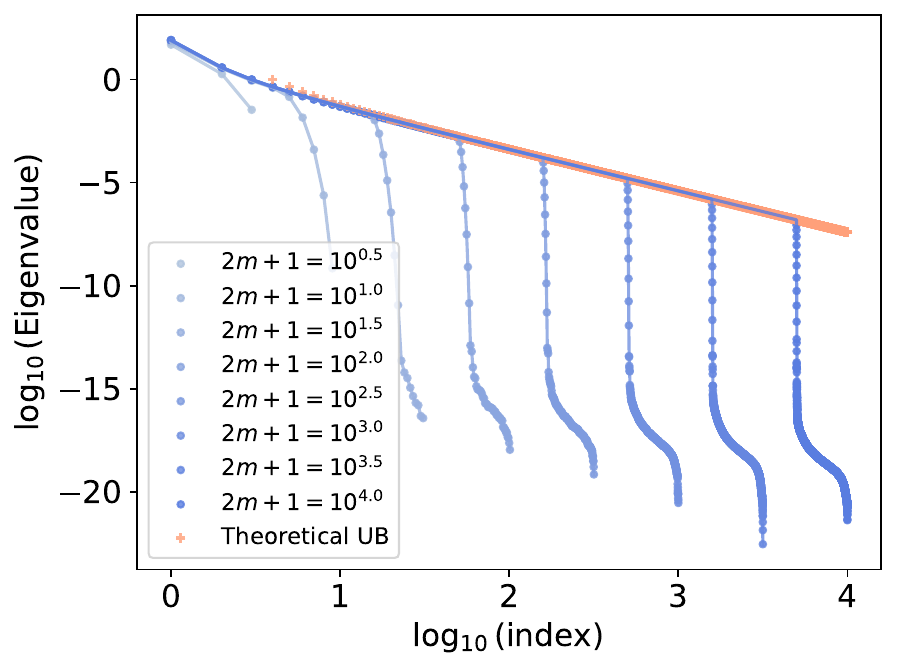}
    \includegraphics[scale=0.5]{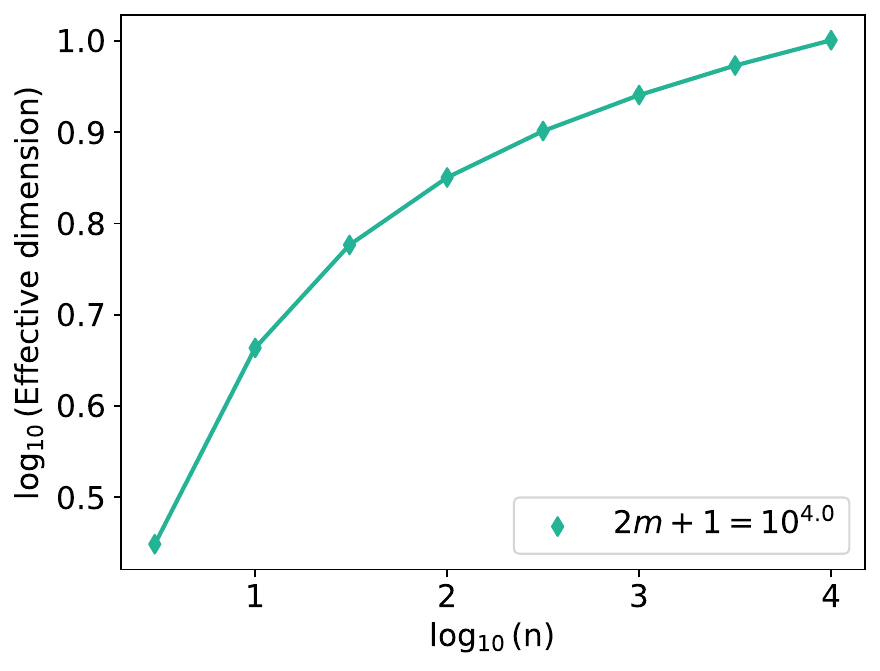}
    \caption{The case of $\mathscr D = \frac{d}{dx}$. \textbf{Left:} Spectrum of $C_mM_m^{-1}C_m$. \textbf{Right:} Estimation of the effective dimension $n \mapsto \mathcal N(\frac{\log(n)}{n}, \frac{1}{\log(n)})$. 
    }
    \label{fig:1d_spectrum}
\end{figure}

\paragraph{Harmonic oscillator equation.} Here, we follow up on the example of Subsection~\ref{sec:hybrid_mod}, as presented in Figures~\ref{fig:ressort_least_square} and \ref{fig:perf_ressort_least_square}. 
Thus, we set $d=1$, $s=2$, $\mathscr D(u) = \frac{d^2}{dx^2}u+\frac{d}{dx}u+u$, and $\Omega=[-\pi,\pi]$. Recall that in this perfect modeling experiment, we observed a parametric convergence rate of $n^{-1}$, which is not surprising since the regression problem essentially involves learning the two parameters $a_1$ and $a_2$. 
Figure \ref{fig:oscillator_spectrum} (Left) shows the eigenvalues of $C_m M_M^{-1} C_m$, while Figure \ref{fig:oscillator_spectrum} (Right) shows the effective dimension as a function of $n$. Similarly to the previous closed-form case, we observe that $\mathcal{N}(\frac{\log(n)}{n}, \frac{1}{\log(n)}) = o_{n\to\infty}(n^\gamma)$ for all $\gamma > 0$. 
The same argument as in the paragraph above shows that this results in a parametric convergence rate, provided $\mathscr D(f^\star) = 0$. 
\begin{figure}
    \centering
    \includegraphics[scale=0.5]{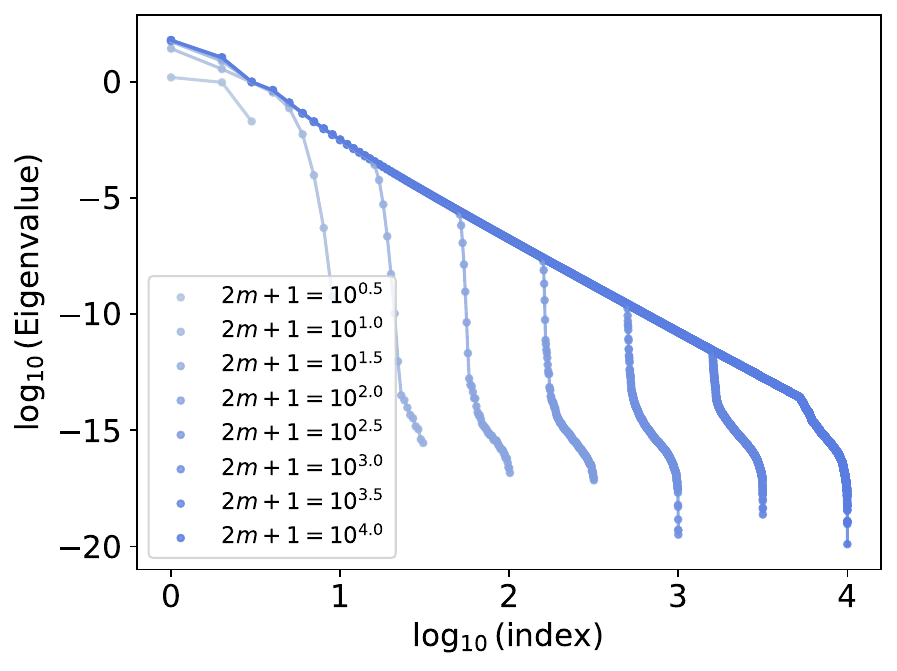}
    \includegraphics[scale=0.5]{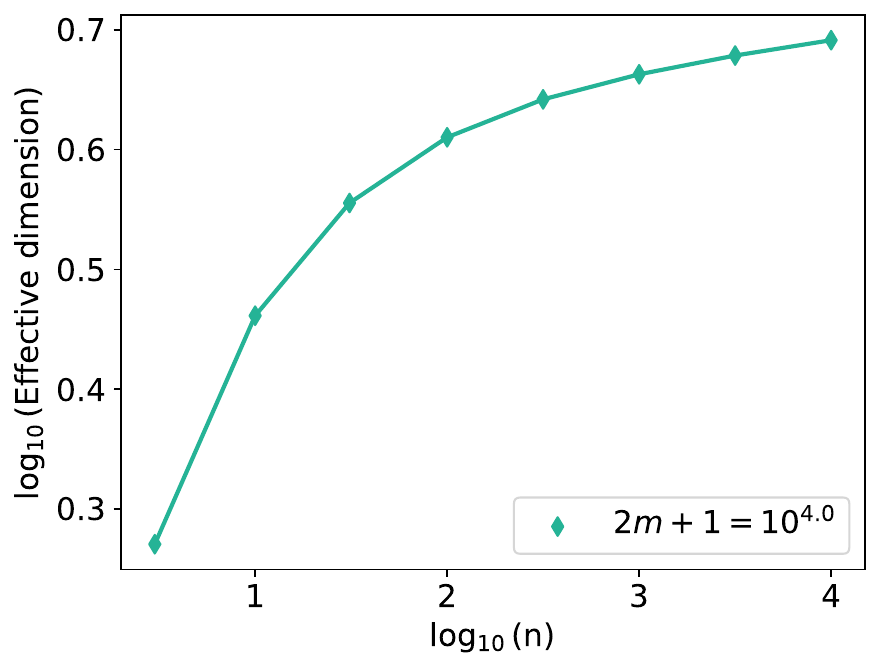}
    \caption{Harmonic oscillator. \textbf{Left:} Spectrum of $C_mM_m^{-1}C_m$. \textbf{Right:} Estimation of the effective dimension $n \mapsto \mathcal N(\frac{\log(n)}{n}, \frac{1}{\log(n)})$.}
    \label{fig:oscillator_spectrum}
\end{figure}

\paragraph{Heat equation on the disk.} Let us now consider the one-dimensional heat equation $\mathscr D = \frac{\partial}{\partial x}-\frac{\partial^2}{\partial y^2}$, with $d=2$, $s=2$, and the disk  $\Omega=\{x\in \mathbb R^2, \|x\|_2\leq\pi\}$. Since the heat equation is known to have $C^\infty$ solutions with bounded energy \citep[see, e.g.,][Chapter~2.3, Theorem~8]{evans2010partial}, we expect the convergence rate to match that of $H^{\infty}(\Omega)$, which corresponds to the parametric rate of $n^{-1}$.
\begin{figure}
    \centering
    \includegraphics[scale=0.5]{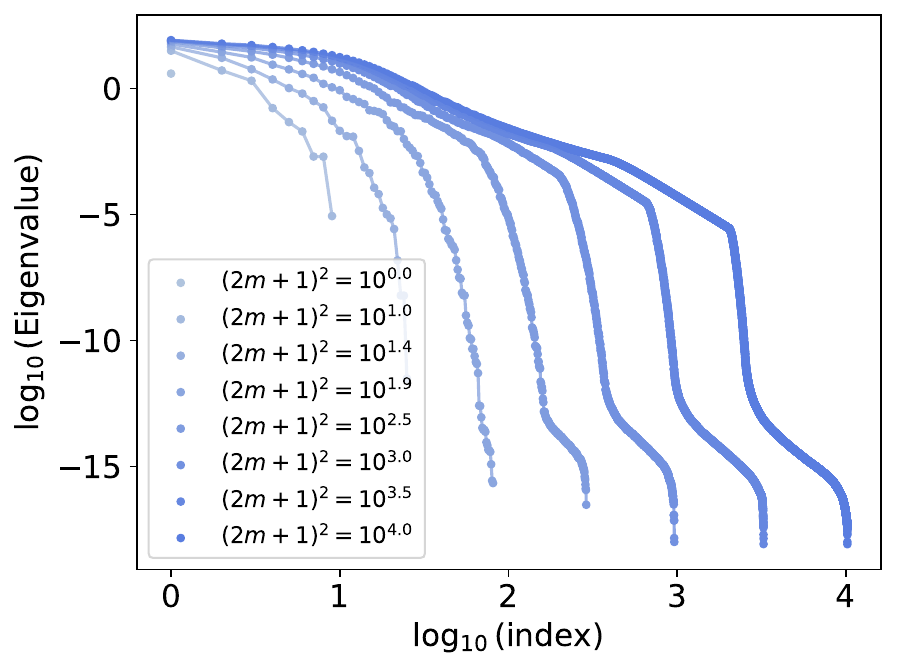}
    \includegraphics[scale=0.5]{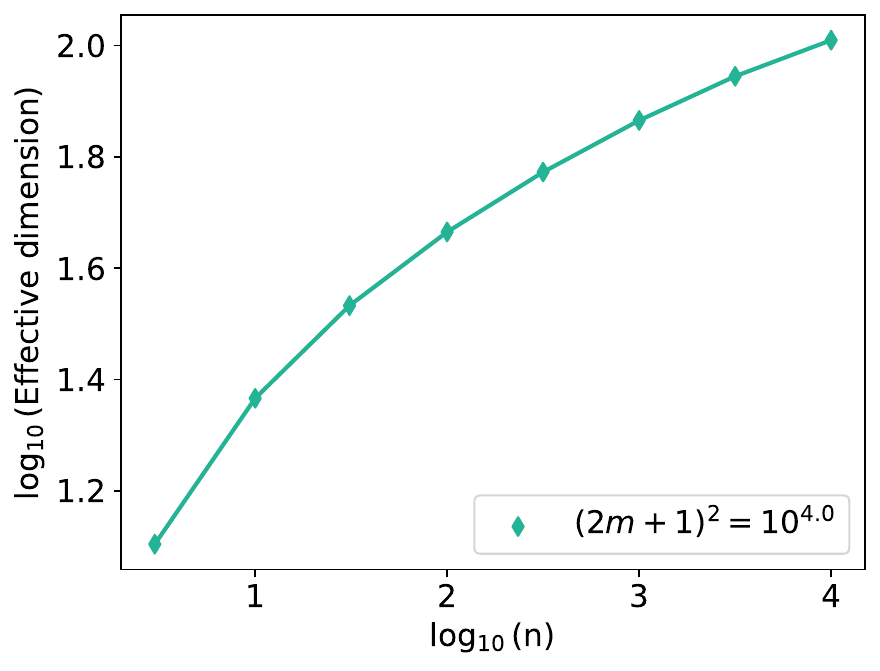}
    \caption{Heat equation. \textbf{Left:} Spectrum of $C_mM_m^{-1}C_m$. \textbf{Right:} Estimation of the effective dimension $n \mapsto \mathcal N(\frac{\log(n)}{n}, \frac{1}{\log(n)})$.}
    \label{fig:heat_spectrum}
\end{figure}
Once again, we observe $\mathcal{N}(\frac{\log(n)}{n}, \frac{1}{\log(n)}) = o_{n\to\infty}(n^\gamma)$ for all $\gamma > 0$, and thus an improvement over the $n^{-2/3}$-Sobolev minimax rate on $H^2(\Omega)$ when $\mathscr D(f^\star) = 0$.

\paragraph{Quantifying the impact of physics.} The three examples above show how incorporating physics can enhance the learning process by reducing the effective dimension, leading to a faster convergence rate. In all cases, the rate becomes parametric due to the PDE, achieving the fastest possible speed, as predicted by the central limit theorem. Our package can be directly applied to any linear PDE with constant coefficients to compute the effective convergence rate given a scaling of $\lambda_n$ and~$\mu_n$. By identifying  the optimal convergence rate, this approach can assist in determining the best parameters $\lambda_n$ and $\mu_n$ for use in other PIML techniques, such as PINNs.

\section{PDE solving: Mitigating the difficulties of PINNs with PIKL}
\label{sec:PDE_solving}
It turns out that our PIKL algorithm can be effectively used as a PDE solver. In this scenario, there is no noise (i.e., $\varepsilon = 0$), no modeling error (i.e., $\mathscr{D}(f^\star) = 0$), and the data consist of samples of boundary and initial conditions, as is typical for PINNs. Assume for example that the objective is to solve the Laplacian equation $\Delta(f^\star) = 0$ on a domain $\Omega \subseteq [-1,1]^2$ with the Dirichlet boundary condition $f^\star|_{\partial \Omega} = g$, where $g$ is a known function. Then this problem can be addressed by implementing the PIKL estimator, which minimizes the risk $\bar R_n(f) = \frac{1}{n}\sum_{i=1}^n |f(X_i) - Y_i|^2 + \lambda_n \|f\|^2_{H^2_{\mathrm{per}}([-1,1]^2)} + \mu_n \|\Delta(f)\|^2_{L^2(\Omega)}$, where the $X_i$ are uniformly sampled on $\partial \Omega$ and $Y_i = g(X_i)$.
Of course, this example focuses on Dirichlet boundary conditions, but PIKL is a highly flexible framework that can incorporate a wide variety of boundary conditions, such as periodic and Neumann boundary conditions, as the next two examples will illustrate.

\paragraph{Comparison with PINNs for the convection equation.} 
To begin, we compare the performance of our PIKL algorithm with the PINN approach developed by \citet{krishnapriyan2021characterizing} for solving the one-dimensional convection equation $\mathscr{D}(f) = \partial_{t} f + \beta \partial_{x} f$ on the domain $\Omega = [0,1]\times [0,2\pi]$. The problem is subject to the following periodic boundary conditions:
\[
\left\{
\begin{array}{l}
\forall x \in [0,1], \quad f(0, x) = \sin(x),\\
\forall t \in [0,1], \quad f(t, 0) = f(t, 2\pi) = 0.
\end{array}
\right.
\] 

The solution of this PDE is given by $f^\star(t,x) = \sin(x - \beta t)$. \citet{krishnapriyan2021characterizing} show that for high values of $\beta$, PINNs struggle to solve the PDE effectively. To address this challenge, we train our PIML kernel method using $n = 100$ data points and $1681$ Fourier modes (i.e., $m = 20$). The training data set $(X_i, Y_i)_{1 \leqslant i \leqslant n}$ is constructed such that $X_i = (0, U_i)$ and $Y_i = \sin(U_i)$, where $(U_i)_{1 \leqslant i \leqslant n}$ are i.i.d.~uniform random variables. To enforce the periodic boundary conditions,  we center $\Omega$ at $\tilde \Omega = \Omega-(0.5, \pi)$, extend it to $[-1, 1]\times[-\pi, \pi]$, and consider $\tilde H_m = \mathrm{Span}((t,x)\mapsto e^{i (\frac{\pi}{2}k_1 t + k_2 x)})_{\|k\|_\infty \leq m}$. Noting that for all $(j_1, k_1), (j_2, k_2)\in \mathbb Z^2$,
\[
\int_{[-1, 1]\times[-\pi, \pi]}e^{i (\frac{\pi}{2}(k_1-j_1) t + (k_2 - j_2) x)}dx = \frac{\sin(\pi (k_1-j_1)/2)}{\pi } \delta_{k_2,j_2},
\]
we let the matrix $(M_m)_{j,k}$ be as follows:
\begin{equation*}
    (M_m)_{j,k} = \lambda_n \Big(1+\frac{\|k\|_2^2}{(2L)^2}\Big)\delta_{j,k}+\mu_n \frac{P(j)\bar P(k)}{(4L)^2}\frac{\sin(\pi (k_1-j_1)/2)}{\pi } \delta_{k_2,j_2},
\end{equation*}
where $P$ is the polynomial associated with the operator $\mathscr D$.  Notice that, although $f^\star$ is a sinusoidal function, the frequency vector of $f^\star$ is $(-\beta, 1)$, which does not belong to $\frac{\pi}{2}\mathbb Z \oplus \mathbb Z$. As a result, $f^\star$ does not lie in $\tilde H_m$ for any $m$.

Table~\ref{table_perf_PDE_convection} compares the performance of various PIML methods using a sample of $n = 100$ initial condition points. The performance of an estimator $\hat{f}_n$ on a test set ($\mathrm{Test}$) is evaluated based on the $L^2$ relative error $(\sum_{x \in \mathrm{Test}} \|\hat f_n(x)-f^\star(x)\|_2^2/\sum_{y \in \mathrm{Test}} \|f^\star(y)\|_2^2)^{1/2}$.
Standard deviations are computed across 10 trials. The results show that the PIML kernel estimator clearly outperforms PINNs in terms of accuracy. 

\begin{table}[ht]
\centering
\caption{\textbf{$L^2$ relative error of the kernel method in solving the wave equation.} } 

\begin{tabular*}{\textwidth}{@{\extracolsep{\fill}}lccc}
  \toprule
  & Vanilla PINNs$^{\diamond}$ & Curriculum-trained PINNs$^{\diamond}$ & PIKL estimator\\
  \midrule
  \textit{$\beta = 20$ } & $7.50 \times 10^{-1}$&   $9.84 \times 10^{-3}$& $\mathbf{(1.56{\scriptstyle \pm 3.46}) \times 10^{-8}}$\\
  \textit{$\beta = 30$} & $8.97 \times 10^{-1}$ & $2.02 \times 10^{-2}$ & $\mathbf{(0.91{\scriptstyle \pm 2.20})\times 10^{-7}}$\\
  \textit{$\beta = 40$} & $9.61 \times 10^{-1}$ & $5.33 \times 10^{-2}$ & $\mathbf{(7.31{\scriptstyle \pm 6.44}) \times 10^{-9}}$\\
   \bottomrule
\end{tabular*}

\flushleft 
\textit{$^\diamond$ \citet[][Table 1]{krishnapriyan2021characterizing}}

\label{table_perf_PDE_convection}
\end{table}

\paragraph{Comparison with PINNs for the 1d-wave equation.} The performance of the PIKL algorithm is compared to the PINN methodology of \citet[Section 7.3]{wang2022when} for solving the one-dimensional wave equation $\mathscr{D}(f) = \partial^2_{t,t} f - 4 \partial_{x,x}^2 f$ on the square domain $[0,1]^2$, with the following boundary conditions: 
\[
\left\{
\begin{array}{l}
\forall x \in [0,1], \quad f(0, x) = \sin(\pi x) + \sin(4\pi x)/2,\\
\forall x \in [0,1], \quad \partial_tf(0, x) = 0, \\
\forall t \in [0,1], \quad f(t, 0) = f(t, 1) = 0.
\end{array} 
\right.
\]

The solution of the PDE is $f^\star(t,x) = \sin(\pi x)\cos(2\pi t) + \sin(4\pi x)\cos(8\pi t)/2$. This solution serves as an interesting benchmark since $f^\star$ exhibits significant variations, with $\|\partial_t f^\star\|_2^2/ \|f^\star \|_2^2 = 16 \pi^2$ (Figure~\ref{fig:wave_comp}, Left). Meanwhile, PINNs are known to have a spectral bias toward low frequencies \citep[e.g.,][]{Deshpande2022investigations, wang2022physics}. The optimization of the PINNs in \citet{wang2022physics} is carried out using stochastic gradient descent with $80,000$ steps, each drawing $300$ points at random, resulting in a sample size of $n = 2.4 \times 10^6$. The architecture of the PINNs these authors employ is a dense neural network with $\tanh$ activation functions and layers of sizes $(2, 500, 500, 500, 1)$, resulting in $m = (2 \times 500 + 500) + 2 \times (500 \times 500 + 500) + (500 \times 1 + 1) = 503,001$ parameters.
The training time for Vanilla PINNs is 7 minutes on an Nvidia L4 GPU (24 GB of RAM, 30.3 teraFLOPs for Float32). We obtain an $L^2$ relative error of $4.21\times 10^{-1}$, which is consistent with the results of \citet{wang2022when}, who report a $L^2$ relative error of $4.52\times 10^{-1}$. Figure~\ref{fig:wave_comp} (Middle) shows the Vanilla PINNs. 

We train our PIKL method using $n = 10^5$ data points and $1681$ Fourier modes (i.e., $m = 20$). Let $(U_i)_{1 \leqslant i \leqslant n}$ be i.i.d.~random variables uniformly distributed on 
$[0,1]$. The training data set $(X_i, Y_i)_{1 \leqslant i \leqslant n}$ is constructed such that
\begin{itemize}
    \item if $\ 1\leqslant i\leqslant \lfloor n/4\rfloor$, then $X_i = (0, U_i)$  and $Y_i = \sin(\pi U_i) + \sin(4\pi U_i)/2$,
    \item if $ \lfloor n/4\rfloor+1\leqslant i\leqslant 2\lfloor n/4\rfloor$, then $X_i = (U_i, 0)$ and $Y_i = 0$,
    \item if $ 2\lfloor n/4\rfloor+1\leqslant i\leqslant 3\lfloor n/4\rfloor$, then $X_i = (U_i, 1)$ and $Y_i = 0$,
    \item if $ 3\lfloor n/4\rfloor+1\leqslant i\leqslant  n$, then  $X_i = (1/n, U_i)$ and 
    \begin{align*}
        Y_i &= f(0, U_i) + \frac{1}{2n^2}\partial^2_{t,t}f(0, U_i) = f(0, U_i) + \frac{2}{n^2}\partial^2_{x,x}f(0, U_i) \\
        &= \Big(1-\frac{2\pi^2}{n^2}\Big)\sin(\pi U_i) + \Big(\frac{1}{2}-\frac{16\pi^2}{n^2}\Big)\sin(4\pi U_i).
    \end{align*}
\end{itemize}
The final requirement enforces the initial condition $\partial_t f = 0$ in a manner similar to that of a second-order numerical scheme.

Table~\ref{table_perf_PDE} compares the performance of the PINN approach from \citet{wang2022when} with the PIKL estimator. Across $10$ trials, the PIKL method achieves an $L^2$ relative error of $(8.70 \pm 0.08) \times 10^{-4}$, which is $50\%$ better than the performance of the PINNs. This demonstrates that the kernel approach is more accurate, requiring fewer data points and parameters than the PINNs. The training time for the PIKL estimator is 6 seconds on an Nvidia L4 GPU. Thus, the PIKL estimator can be computed 70 times faster than the Vanilla PINNs. Figure~\ref{fig:wave_comp} (Right) shows the PIKL estimator. 
Note that in this case, the solution $f^\star$ can be represented by a sum of complex exponential functions ($f^\star\in H_{16}$), which could have biased the result in favor of the PIKL estimator by canceling its approximation error. However, the results remain unchanged when altering the frequencies in $H_m$ (e.g., taking $L=0.55$ in \eqref{eq:Mm} instead of $L=0.5$ yields an $L^2$ relative error of $(9.6\pm 0.3)\times 10^{-4} $).
\begin{table}[ht]
\centering
\caption{\textbf{Performance of PINN/PIKL methods for solving the wave equation on $\Omega=[0,1]^2$}}. 

\begin{tabular*}{\textwidth}{@{\extracolsep{\fill}}lccc}
  \toprule
  & Vanilla PINNs$^{\diamond}$ & NTK-optimized PINNs$^{\diamond}$ & PIKL estimator\\
  \midrule
  \textit{$L^2$ relative error} & $4.52 \times 10^{-1}$&   $1.73 \times 10^{-3}$& $\mathbf{(8.70 {\scriptstyle \pm 0.08})  \times 10^{-4}}$\\
  \textit{Training data (n)} & $2.4 \times 10^{6}$ & $2.4 \times 10^{6}$ & $\mathbf{10^5}$\\
  \textit{Number of parameters} & $5.03 \times 10^5$ & $5.03 \times 10^5$ & $\mathbf{1.68 \times 10^3}$\\
   \bottomrule
\end{tabular*}

\flushleft 
\textit{$^\diamond$ \citet[][Figure 6]{wang2022when}}

\label{table_perf_PDE}
\end{table}
\begin{figure}
    \centering
    \includegraphics[width=0.3\textwidth]{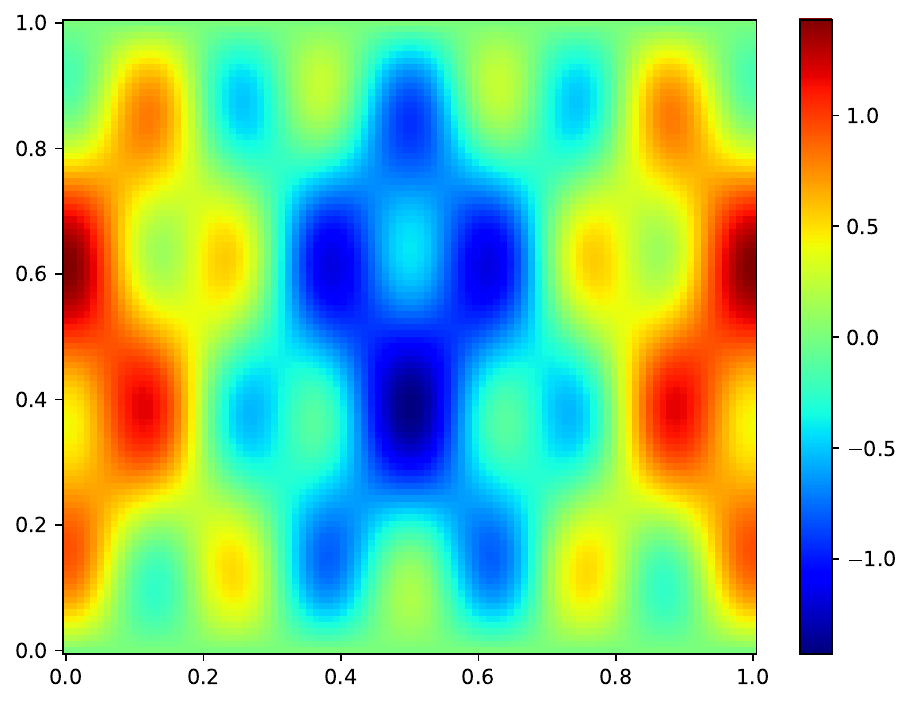}
    \includegraphics[width=0.3\textwidth]{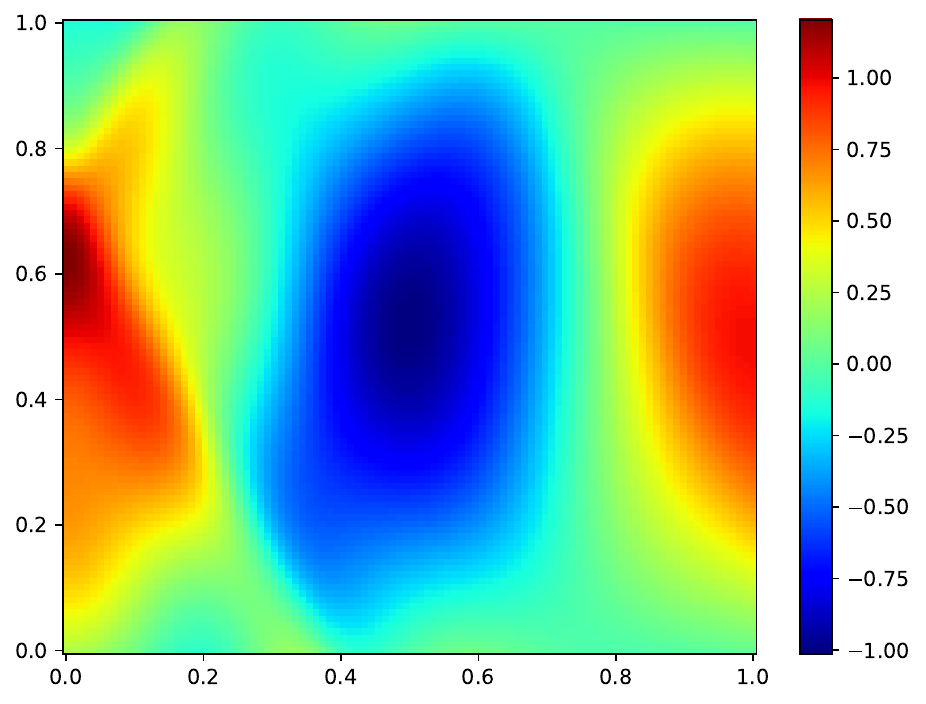}
    \includegraphics[width=0.3\textwidth, trim={15.6cm 0.65cm 0 0},clip]{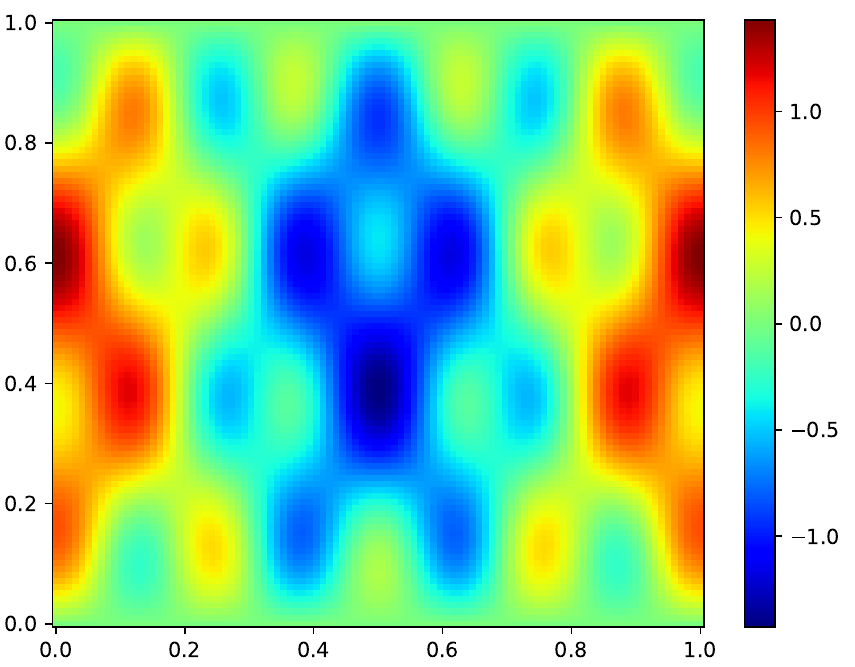}
    \caption{\textbf{Left:} ground truth solution $f^\star$ to the wave equation \citep[taken from][Figure~6]{wang2022when}. \textbf{Middle:} Vanilla PINNs from \citet{wang2022when}. \textbf{Right:} PIKL estimator. }
    \label{fig:wave_comp}
\end{figure}

\section{Conclusion and future directions}
\paragraph{Conclusion.} In this article, we developed an efficient algorithm to solve the PIML hybrid problem \eqref{eq:estimator_sob_0}. The PIKL estimator can be computed exactly through matrix inversion and possesses strong theoretical properties. Specifically, we demonstrated how to estimate its convergence rate based on the PDE prior $\mathscr{D}$. Moreover, through various examples, we showed that it outperforms PINNs in terms of performance, stability, and training time in certain PDE-solving tasks where PINNs struggle to escape local minima during optimization.
Future work could focus on comparing PIKL with the implementation of RFF and exploring its performance against PINNs in the case of PDEs with non-constant coefficients. Another avenue for future research is to assess the effectiveness of the kernel approach compared to traditional PDE solvers, as discussed below.

\paragraph{Comparison with traditional PDE solvers.} PIML is a promising framework for solving PDEs, particularly due to its adaptability to domains $\Omega$ with complex geometries, where most traditional PDE solvers tend to be highly domain-dependent. However, its comparative performance against traditional PDE solvers remains unclear in scenarios where both approaches can be easily implemented. The meta-analysis by \citet{mcgreivy2024weakbaselinesreportingbiases} indicates that, in some cases, PINNs may be faster than traditional PDE solvers, although they are often less accurate. In our study, solving the wave equation on a simple square domain represents a setting where traditional numerical methods are straightforward to implement and are known to perform well. Table~\ref{table_perf_PDE_solvers} summarizes the performance of classical techniques, including the explicit Euler, Runge-Kutta 4 (RK4), and Crank-Nicolson (CN) schemes (see Appendix~\ref{sec:numerical} for a brief presentation of these methods). These methods clearly outperform both PINNs and the PIKL algorithm, even with fewer data points.
\begin{table}[ht]
\centering
\caption{\textbf{Performance of traditional PDE solvers for the wave equation on $\Omega=[0,1]^2$}.} 

\begin{tabular*}{\textwidth}{@{\extracolsep{\fill}}lccc}
  \toprule
  & Euler explicit  & RK4 & CN\\
  \midrule
  \textit{$L^2$ relative error} & $\mathbf{3.8 \times 10^{-6}}$&   $6.8 \times 10^{-6}$& $ 5.6 \times 10^{-3}$\\
  \textit{Training data (n)} & $10^{4}$ & $10^{4}$ & $10^4$\\
   \bottomrule
\end{tabular*}
\label{table_perf_PDE_solvers}
\end{table}

However, a more relevant setting for comparing the performance of these methods arises when noise is introduced into the boundary conditions. This situation is common, for instance, when the initial condition of the wave is measured by a noisy sensor. Such a setting aligns with hybrid modeling, where $\varepsilon \neq 0$, but there is no modeling error (i.e., $\mathscr{D}(f^\star) = 0$). Table~\ref{table_perf_PDE_solvers_noisy} compares the performance of all methods with the same number $n$ of training samples, with Gaussian noise of variance of $10^{-2}$. In this case, the PIKL estimator outperforms all other approaches.
\begin{table}[ht]
\centering
\caption{\textbf{Performance for the wave equation with noisy boundary conditions}.} 
\begin{tabular*}{\textwidth}{@{\extracolsep{\fill}}lcccccc}
  \toprule
  & PINNs & Euler explicit & RK4 & CN  & PIKL estimator\\
  \midrule
  \textit{$L^2$ relative error} & $4.61 \times 10^{-1}$& $1.25 \times 10^{-1}$&   $6.05\times 10^{-2}$ & $2.01  \times 10^{-2}$&    $\mathbf{1.87 \times 10^{-2}}$\\
  \textit{Training data (n)} & $2.4\times10^{6}$ & $4\times10^{4}$ & $4\times 10^{4}$ & $4\times 10^{4}$& $4\times 10^4$\\
   \bottomrule
\end{tabular*}
\label{table_perf_PDE_solvers_noisy}
\end{table}
Such PDEs with noisy boundary conditions are special cases of the hybrid modeling framework, where the data is located on the boundary of the domain. This situation arises, for example, in \citet{cai2021physics} which models the temperature in the core of a nuclear reactor.

\newpage

\appendix
\section{Spectral methods and PIKL}
\label{sec:discussion_spec}
The Fourier approximation on which the PIKL algorithm relies resembles usual spectral methods. Spectral methods are a class of numerical techniques used to solve PDEs by representing the solution as a sum of basis functions, typically trigonometric (Fourier series) or polynomial (Chebyshev or Legendre polynomials). These methods are particularly powerful for problems with smooth solutions and periodic or well-behaved boundary conditions \citep[e.g.,][]{canuto2007spectral}. However the basis functions used in spectral and pseudo-spectral methods must be
specifically tailored to the domain $\Omega$, the differential operator $\mathscr D$, and the boundary conditions. 
This customization ensures that the method effectively captures the characteristics of the problem being solved. For example, the Fourier basis is unable to accurately reconstruct non-periodic functions on a square domain, leading to the Gibbs phenomenon at points of periodic discontinuity. A natural solution to this problem is to extend the solution of the PDE from the domain $\Omega$ to a simpler domain that admits a known spectral basis \citep[e.g.,][for Fourier basis extension]{Matthysen2016fast}. If the solution of the PDE on $\Omega$ can be extended to a solution of the same PDE on the extended domain, it becomes possible to apply a spectral method directly to the extended domain \citep[e.g.,][]{badea2001on, LUI2009541}. However, the PDE must satisfy certain regularity conditions (e.g., ellipticity), and there must be a method to implement the boundary conditions on $\partial \Omega$ instead of on the boundary of the extended domain.

In this article, we take a slightly different approach. Although we extend $\Omega \subseteq [-L, L]^d$ to $[-2L, 2L]^d$, we impose the PDE only on $\Omega$ and not on the entire extended domain $[-2L, 2L]^d$. Also, unlike spectral methods, we do not require that $\mathscr{D}(\hat{f}^{\mathrm{PIKL}}) = 0$. Instead, to ensure that the problem is well-posed, we regularize the PIML problem using the Sobolev norm of the periodic extension. This Tikhonov regularization is a conventional approach in kernel learning and is known to resemble spectral methods because it acts as a low-pass filter \citep[see, e.g.,][]{caponnetto2007optimal}. However, given a kernel, it is non-trivial to identify the basis of orthogonal functions that diagonalize it. The main contribution of this article is to establish an explicit connection between the Fourier basis and the PIML kernel, leading to surprisingly simple formulas for the kernel matrix $M_m$.

\section{Fundamentals of functional analysis on complex Hilbert spaces}
Let $L>0$ and $d\in \mathbb N^\star$. We define $L^2([-2L,2L]^d, \mathbb C)$ as the space of complex-valued functions $f$ on the hypercube $[-2L,2L]^d$ such that $\int_{[-2L,2L]^d} |f|^2 < \infty$. The real part of $f$ is denoted by $\Re(f)$, and the imaginary part by $\Im(f)$, such that $f = \Re(f) + i \Im(f)$. Throughout the appendix, for the sake of clarity, we use the dot symbol $\cdot$ to represent functions. For example, $\|\cdot\| $ denotes the function $x\mapsto \|x\|$, and $\langle\cdot, \cdot\rangle$ stands for the function $( x,y) \mapsto \langle x, y\rangle$.
\begin{defi}[$L^2$-space and $\|\cdot\|_2$-norm]
    The separable Hilbert space $L^2([-2L,2L]^d, \mathbb C)$ is associated with the inner product $\langle f,g\rangle = \int_{[-2L,2L]^d} f\bar g$ and the norm $\|f\|_2^2 = \int_{[-2L,2L]^d} |f|^2$.
\end{defi}
Let $s\in\mathbb N$. 
\begin{defi}[Periodic Sobolev spaces]
    The periodic Sobolev space $H^s_{\mathrm{per}}([-2L,2L]^d, \mathbb R)$ is the space of real functions $f(x) = \sum_{k\in\mathbb Z^d} z_k \exp(\frac{i\pi}{2L}\langle k,x\rangle)$ such that the Fourier coefficients $z_k$ satisfy $\sum_k |z_k|^2(1+\|k\|^{2s}) <\infty$. The corresponding complex periodic Sobolev space is defined by \[H^s_{\mathrm{per}}([-2L,2L]^d, \mathbb C) = H^s_{\mathrm{per}}([-2L,2L]^d, \mathbb R) \oplus i\; H^s_{\mathrm{per}}([-2L,2L]^d, \mathbb R).\] It is the space of complex-valued functions $f(x) = \sum_{k\in\mathbb Z^d} z_k \exp(\frac{i\pi}{2L}\langle k,x\rangle)$ such that $\sum_k |z_k|^2(1+\|k\|^{2s}) <\infty$.
\end{defi}
We recall that, given two Hilbert spaces $\mathcal H_1$ and $\mathcal H_2$, an operator is a linear function from $\mathcal H_1$ to $\mathcal H_2$.
\begin{defi}[Operator norm]\citep[e.g.,][Section 2.6]{brezis2010functional}
\label{defi:op_norm}
    Let $\mathscr O: L^2([-2L, 2L]^d, \mathbb C)\to L^2([-2L, 2L]^d, \mathbb C)$ be an operator. Its operator norm $|||\mathscr O|||_2$ is defined by
    \[|||\mathscr O|||_2 =\underset{\|g\|_2 = 1}{\sup_{g \in L^2([-2L, 2L]^d, \mathbb C)}} \|\mathscr O g\|_2 = \underset{g \neq 0}{\sup_{g \in L^2([-2L, 2L]^d, \mathbb C)}} \|g\|_2^{-1}\|\mathscr O g\|_2. \]
\end{defi}
The operator norm is sub-multiplicative, i.e., $|||\mathscr O_1 \circ \mathscr O_2|||_2 \leqslant |||\mathscr O_1|||_2 \times |||\mathscr O_2|||_2$. 
\begin{defi}[Adjoint]
    Let $(\mathcal H, \langle\cdot, \cdot\rangle_{\mathcal H})$ be an Hilbert space and $\mathscr O:\mathcal H \to \mathcal H$ be an operator. The adjoint $\mathscr O^\star$ of $\mathscr O$ is the unique operator such that $\forall f,g \in \mathcal H$, $\langle f, \mathscr O g\rangle_{\mathcal H} = \langle \mathscr O^\star f,  g\rangle_{\mathcal H}$.
\end{defi}
If $\mathcal H = \mathbb R^d$ with the canonical scalar product, then $\mathscr O^\star$ is the $d\times d$ matrix $\mathscr O^\star = \mathscr O^T$. If $\mathcal H = \mathbb C^d$ with the canonical sesquilinear inner product, then $\mathscr O^\star$ is the $d\times d$ matrix $\mathscr O^\star = \bar{\mathscr O}^T$.
\begin{defi}[Hermitian operator]
    Let $\mathcal H$ be an Hilbert space and $\mathscr O:\mathcal H \to \mathcal H$ be an operator. The operator $\mathscr O$ is said to be Hermitian if $\mathscr O = \mathscr O^\star$.
\end{defi}
\begin{theorem}[Spectral theorem]\citep[e.g.][Theorems 12.29 and 12.30]{rudin1991functional}
    Let $\mathscr O$ be a positive Hermitian compact operator. Then $\mathscr O$ is diagonalizable on an Hilbert basis with positive eigenvalues that tend to zero. We denote its eigenvalues, ordered in decreasing order, by $\sigma(\mathscr O) = (\sigma_k^\downarrow(\mathscr O))_{k\in \mathbb N^\star}$.
    \label{thm:spectral}
\end{theorem}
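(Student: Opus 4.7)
The plan is to prove this classical spectral theorem by a greedy construction: repeatedly maximize the Rayleigh quotient $g \mapsto \langle \mathscr O g, g\rangle$ on successive orthogonal complements, extracting one eigenvector at each stage. Two ingredients will be used throughout: first, for a positive Hermitian operator, the identity $|||\mathscr O|||_2 = \sup_{\|g\|_2=1}\langle \mathscr O g, g\rangle$ (obtained from the polarization identity together with Cauchy--Schwarz), and second, the defining property that a compact operator sends weakly convergent sequences to strongly convergent ones.

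First I would build the top eigenvector. Set $\lambda_1 = |||\mathscr O|||_2$ and pick a maximizing sequence $(g_n)$ with $\|g_n\|_2 = 1$ and $\langle \mathscr O g_n, g_n\rangle \to \lambda_1$. By Banach--Alaoglu, extract a weakly convergent subsequence $g_n \rightharpoonup g$; compactness of $\mathscr O$ turns this into $\mathscr O g_n \to \mathscr O g$ strongly, so $\langle \mathscr O g_n, g_n\rangle \to \langle \mathscr O g, g\rangle = \lambda_1$. Combined with $\langle \mathscr O g, g\rangle \leqslant \lambda_1 \|g\|_2^2$ and $\|g\|_2 \leqslant 1$ (weak lower semicontinuity of the norm), this forces $\|g\|_2 = 1$. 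A perturbation argument on the unit sphere --- differentiating $t \mapsto \langle \mathscr O g_t, g_t\rangle$ at $t=0$ along $g_t = (g+th)/\|g+th\|_2$ for $h \perp g$, and then again for $ih$ --- yields $\mathscr O g = \lambda_1 g$.

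Next I would iterate on the orthogonal complement $V_1 := \{g_1\}^\perp$, which is $\mathscr O$-invariant because $\mathscr O$ is Hermitian. The restriction $\mathscr O|_{V_1}$ is still compact, Hermitian, and positive, so the same construction yields $g_2$ with eigenvalue $\lambda_2 := |||\mathscr O|_{V_1}|||_2 \leqslant \lambda_1$, and inductively an orthonormal family $(g_k)$ with decreasing positive eigenvalues $(\lambda_k)$. To see $\lambda_k \to 0$, note that if $\lambda_k \geqslant c > 0$ for all $k$, then $(g_k)$ is orthonormal, hence weakly null, so compactness of $\mathscr O$ forces $\mathscr O g_k \to 0$ strongly, contradicting $\|\mathscr O g_k\|_2 = \lambda_k \geqslant c$. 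Completeness follows by the same trick: let $V$ be the closed span of the $(g_k)$; then $V^\perp$ is $\mathscr O$-invariant, and if $|||\mathscr O|_{V^\perp}|||_2 > 0$ the above construction would produce yet another eigenvector in $V^\perp$, contradicting the exhaustive selection. Hence $\mathscr O|_{V^\perp} = 0$, so $V^\perp \subseteq \ker \mathscr O$, and one completes any orthonormal basis of $V^\perp$ (with $0$-eigenvectors if present) to obtain the desired Hilbert basis.

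The main obstacle is the first step: upgrading the maximizing sequence for the Rayleigh quotient into an actual eigenvector. The subtlety is that weak convergence in the unit ball does not preserve norms a priori, so the argument only closes because compactness of $\mathscr O$ promotes weak convergence of $(g_n)$ to strong convergence of $(\mathscr O g_n)$, allowing one to pass to the limit in the quadratic form, and because the first-order optimality on the sphere --- a Lagrange-type calculation --- then forces the quadratic maximizer to be an honest eigenvector rather than merely a critical point of the Rayleigh quotient. All subsequent steps (iteration, decay of eigenvalues, completeness) are variants of this core argument combined with the Hermiticity of $\mathscr O$ to preserve invariance of orthogonal complements.
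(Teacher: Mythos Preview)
The paper does not actually prove this theorem: it is stated with a citation to \citet[Theorems 12.29 and 12.30]{rudin1991functional} and used as a black box throughout. Your proposal supplies a complete and correct argument via the classical Rayleigh--quotient (variational) construction, which is the standard elementary route to the spectral theorem for compact self-adjoint operators. For comparison, Rudin's proof goes through the general spectral theory of normal operators (spectral measures and the Gelfand transform), so your approach is more direct and self-contained; the only place where a reader might want one more line is the completeness step, where ``contradicting the exhaustive selection'' unpacks as: any eigenvalue $\mu>0$ arising in $V^\perp$ would satisfy $\mu \leqslant \lambda_k$ for every $k$ (since $V^\perp \subseteq V_{k-1}^\perp$), contradicting $\lambda_k \to 0$.
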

We emphasize that, given an invertible positive self-adjoint compact operator $\mathscr O$ and its inverse $\mathscr O^{-1}$, the eigenvalues of $\mathscr O^{-1}$ can also be ordered in increasing order, i.e., 
\begin{equation}
    \sigma(\mathscr O^{-1}) = (\sigma_k^\uparrow(\mathscr O^{-1}))_{k\in \mathbb N^\star} = (\sigma_k^\downarrow(\mathscr O)^{-1})_{k\in \mathbb N^\star}. \label{eq:order}
\end{equation}
\begin{theorem}[Courant-Fischer minmax theorem ]\citep[][Problem 37]{brezis2010functional}
    Let $\mathscr O: \mathcal H \to \mathcal H$ be a positive Hermitian compact operator. Then
    \[\sigma_k^\downarrow(\mathscr O) = \underset{\dim H = k}{\max_{H\subseteq \mathcal H}} \;  \underset{\|g\|_2=1}{\min_{g\in H }} \langle g, \mathscr O g\rangle_{\mathcal H}.\]
    If $\mathscr O$ is injective, then
\[\sigma_k^\uparrow(\mathscr O^{-1}) = \underset{\dim H = k}{\min_{H\subseteq \mathscr O(\mathcal H)}} \;  \underset{\|g\|_2=1}{\max_{g\in H }} \langle g, \mathscr O^{-1} g\rangle_{\mathcal H}.\]
\label{thm:minmax}
\end{theorem}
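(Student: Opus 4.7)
The plan is to apply the spectral theorem (Theorem~\ref{thm:spectral}) to diagonalize $\mathscr O$ in an orthonormal eigenbasis $(e_k)_{k\geq 1}$ of $\mathcal H$ with eigenvalues $\lambda_k = \sigma_k^\downarrow(\mathscr O)$ ordered decreasingly. Then, for every $g\in \mathcal H$ and every $h$ in the range $\mathscr O(\mathcal H)$,
\[
\langle g, \mathscr O g\rangle_{\mathcal H} = \sum_{j\geq 1} \lambda_j |\langle g, e_j\rangle_{\mathcal H}|^2,
\qquad
\langle h, \mathscr O^{-1} h\rangle_{\mathcal H} = \sum_{j\geq 1} \frac{|\langle h, e_j\rangle_{\mathcal H}|^2}{\lambda_j}.
\]
Both formulas then reduce to extremal problems on weighted squared-coefficient sums.

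For the first identity, I would exhibit $H_\star = \mathrm{span}(e_1,\dots,e_k)$ and verify by direct computation that $\min_{g\in H_\star,\|g\|=1}\langle g, \mathscr O g\rangle_{\mathcal H} = \lambda_k$, proving that the max–min is at least $\lambda_k$. For the converse, given any $k$-dimensional $H\subseteq \mathcal H$, I would introduce the closed subspace $W_k = \overline{\mathrm{span}}(e_k, e_{k+1}, \dots)$, whose orthogonal complement $\mathrm{span}(e_1,\dots,e_{k-1})$ has dimension $k-1$. The elementary dimension count forces $H\cap W_k \neq \{0\}$, and any unit $g$ in this intersection satisfies
\[
\langle g, \mathscr O g\rangle_{\mathcal H} = \sum_{j\geq k}\lambda_j|\langle g, e_j\rangle_{\mathcal H}|^2 \leqslant \lambda_k,
\]
which bounds the min, and hence the max–min, from above by $\lambda_k$.

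For the second identity, I would first recall from \eqref{eq:order} that $\sigma_k^\uparrow(\mathscr O^{-1}) = 1/\lambda_k$ and observe that each $e_j = \mathscr O(\lambda_j^{-1} e_j) \in \mathscr O(\mathcal H)$, so $H_\star = \mathrm{span}(e_1,\dots,e_k)$ is admissible. Direct calculation gives $\max_{g\in H_\star,\|g\|=1}\langle g, \mathscr O^{-1} g\rangle_{\mathcal H} = 1/\lambda_k$, yielding the upper bound on the min–max. For the matching lower bound, the same dimension argument applied to any $k$-dimensional $H\subseteq \mathscr O(\mathcal H)$ produces a unit $g\in H\cap W_k$. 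Since $g\in \mathscr O(\mathcal H)$, the coefficients satisfy $\sum_j |\langle g,e_j\rangle|^2/\lambda_j^2 < \infty$, and Cauchy–Schwarz ensures $\langle g, \mathscr O^{-1} g\rangle_{\mathcal H} = \sum_{j\geq k}|\langle g,e_j\rangle|^2/\lambda_j$ is finite; using $1/\lambda_j \geqslant 1/\lambda_k$ for $j\geqslant k$ gives $\langle g, \mathscr O^{-1} g\rangle_{\mathcal H} \geqslant 1/\lambda_k$, so the max on $H$ is at least $1/\lambda_k$.

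The main obstacle is the unbounded nature of $\mathscr O^{-1}$, which is defined only on the dense but non-closed range $\mathscr O(\mathcal H)$: restricting the outer extremum to subspaces of $\mathscr O(\mathcal H)$ in the statement is precisely what keeps the Rayleigh quotient $\langle g, \mathscr O^{-1} g\rangle_{\mathcal H}$ finite and lets the dimension-count argument go through verbatim in infinite dimensions. Everything else is routine bookkeeping in the eigenbasis.
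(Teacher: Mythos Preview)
Your argument is correct and is the classical proof of the Courant--Fischer principle via the spectral decomposition. Note, however, that the paper does not supply its own proof of this statement: Theorem~\ref{thm:minmax} is quoted as a standard result with a citation to \citet[Problem~37]{brezis2010functional}, so there is no in-paper proof to compare against. Your treatment of the unbounded inverse---restricting the outer extremum to finite-dimensional subspaces of $\mathscr O(\mathcal H)$ so that the Rayleigh quotient stays finite, and then running the same codimension argument with $W_k = \overline{\mathrm{span}}(e_k,e_{k+1},\dots)$---is exactly the right way to handle the second identity, and matches the approach one finds in the cited reference.
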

Interestingly, if $\mathscr O$ is a positive Hermitian compact operator, then Theorem~\ref{thm:minmax} shows that $|||\mathscr O|||_2$ equals its largest eigenvalue. 
\begin{defi}[Orthogonal projection on $H_m$]
    We let $\Pi_m: H^s_{\mathrm{per}}([-2L,2L]^d, \mathbb C) \to H_m$ be the orthogonal projection with respect to $\langle\cdot, \cdot\rangle$, i.e., for all $f \in H^s_{\mathrm{per}}([-2L,2L]^d, \mathbb C),$ \[ \Pi_m f(y) = \sum_{\|{k}\|_\infty \leqslant m}\Big(\frac{1}{(4L)^{d}}\int_{[-2L,2L]^d}\exp\Big(-\frac{i\pi}{2L}\langle {k}, x\rangle\Big)f(x)dx\Big)\exp\Big(\frac{i\pi}{2L}\langle {k}, y\rangle\Big).\]
    Note that $\Pi_m$ is Hermitian and that, for all $f \in L^2([-2L,2L]^d, \mathbb C)$, $\lim_{m\to\infty}\|f-\Pi_m f\|_2 = 0$.
\end{defi}

\section{Theoretical results for PIKL}
\label{app:Theory}
\subsection{Proof of Proposition \ref{prop:geom_cube}}
We have
\begin{align*}
    \frac{1}{(4L)^{d}}\int_{[-L,L]^d} e^{\frac{i \pi}{2L}\langle k, x\rangle}dx &= \frac{1}{(4L)^{d}}\prod_{j=1}^d\int_{[-L,L]} e^{\frac{i \pi}{2L} k_j x}dx = \prod_{j=1}^d\Big[\frac{1}{2i\pi} e^{\frac{i \pi}{2L} k_j x}\Big]_{x=-L}^L\\
    &= \prod_{j=1}^d\frac{e^{\frac{i \pi}{2} k_j }-e^{-\frac{i \pi}{2} k_j }}{2i\pi k_j} = \prod_{j=1}^d\frac{\sin(\frac{\pi}{2} k_j)}{\pi k_j}.
\end{align*}
The characteristic function of the Euclidean ball is computed in \citet[][Table 13.4]{bracewell}.
\subsection{Operations on characteristic functions}
\begin{prop}[Operations on characteristic functions]
\label{prop:op_char}
    Consider $d \in \mathbb N^\star$, $L >0$, and $\Omega \subseteq [-L,L]^d$.
    \begin{itemize}
        \item Let $a \in [-1,1]$. Then $a\cdot\Omega \subseteq [-L,L]^d$ and 
        \[F_{a\cdot\Omega}(k) = |a|^d\times F_{\Omega}(a\cdot k).\]
        \item Let $\tilde \Omega \subseteq [-L,L]^d$ be a domain such that $\Omega \cap \tilde \Omega = \emptyset$. Then $\Omega \sqcup \tilde \Omega \subseteq [-L,L]^d$ and
        \[F_{\Omega \sqcup \tilde \Omega}(k) = F_{\Omega}(k) + F_{\tilde \Omega}(k).\]
        \item Assume that $\Omega \subseteq [-L/2,L/2]^d$, and let $z \in \mathbb R^d$ be such that $\|z\|_\infty < L/2$. Then $\Omega + z \subseteq [-L,L]^d$ and \[F_{\Omega + z}(k) = F_{\Omega}(k) \times \exp\Big(\frac{i \pi}{2L} \langle k,z\rangle\Big).\]
        \item Assume that $\Omega = \Omega_1 \times \Omega_2$, where $\Omega_1\subseteq [-L,L]^{d_1}$, $\Omega_2 \subseteq [-L,L]^{d_2}$, and $d_1+d_2 = d$.  Then
        \[F_{\Omega}(k) = F_{\Omega_1}(k_1, \hdots, k_{d_1}) \times F_{\Omega_2}(k_{d_1+1}, \hdots, k_{d}).\]   
    \end{itemize}
    \end{prop}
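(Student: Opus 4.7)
The plan is to verify each of the four identities by direct computation from the definition
\[F_\Omega(k) = \frac{1}{(4L)^d}\int_\Omega e^{\frac{i\pi}{2L}\langle k, x\rangle}dx,\]
using either an affine change of variables or Fubini's theorem, and to check in each case that the stated inclusion hypothesis guarantees that the transformed domain still lies in $[-L,L]^d$ so that $F$ is applicable on the right-hand side.

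For the scaling identity, I would apply the change of variable $y=ax$, whose Jacobian has absolute value $|a|^d$. This rewrites $\int_{a\cdot\Omega} e^{\frac{i\pi}{2L}\langle k, y\rangle}dy$ as $|a|^d\int_\Omega e^{\frac{i\pi}{2L}\langle k, ax\rangle}dx = |a|^d\int_\Omega e^{\frac{i\pi}{2L}\langle ak, x\rangle}dx$; dividing by $(4L)^d$ yields $F_{a\cdot\Omega}(k) = |a|^d\, F_\Omega(a\cdot k)$. Since $|a|\leqslant 1$, we have $a\cdot\Omega\subseteq[-L,L]^d$, so the left-hand side is well-defined. For the disjoint union identity, the additivity of the Lebesgue integral over disjoint measurable sets gives $\int_{\Omega\sqcup\tilde\Omega}=\int_\Omega+\int_{\tilde\Omega}$ immediately.

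For the translation identity, the change of variable $y=x+z$ has unit Jacobian, so
\[\int_{\Omega+z}e^{\frac{i\pi}{2L}\langle k, y\rangle}dy = \int_\Omega e^{\frac{i\pi}{2L}\langle k, x+z\rangle}dx = \exp\!\Big(\frac{i\pi}{2L}\langle k,z\rangle\Big)\int_\Omega e^{\frac{i\pi}{2L}\langle k, x\rangle}dx,\]
and dividing by $(4L)^d$ gives the claim. The assumption $\Omega\subseteq[-L/2,L/2]^d$ and $\|z\|_\infty<L/2$ guarantees $\Omega+z\subseteq[-L,L]^d$. Finally, for the Cartesian product, write $\langle k,x\rangle = \langle k^{(1)}, x^{(1)}\rangle + \langle k^{(2)}, x^{(2)}\rangle$ with $k^{(1)}=(k_1,\dots,k_{d_1})$, $x^{(1)}=(x_1,\dots,x_{d_1})$, and analogously for the second block. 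By Fubini's theorem the integral factorises as a product, and using the identity $(4L)^{d}=(4L)^{d_1}(4L)^{d_2}$ distributes the normalisation across the two factors, producing $F_{\Omega_1}(k^{(1)})\,F_{\Omega_2}(k^{(2)})$.

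There is no serious obstacle here; the entire proof consists of routine changes of variable together with Fubini. The only care needed is bookkeeping of the constant $(4L)^d$ in each case, and in the scaling identity verifying that the factor $|a|^d$ coming from the Jacobian ends up on the correct side, rather than being absorbed by a rescaling of the normalising constant.
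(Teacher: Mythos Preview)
Your proof is correct. The paper actually states this proposition without proof, presumably because the four identities are considered routine consequences of the definition of $F_\Omega$; your direct verification via change of variables and Fubini is exactly the natural argument one would supply, and there is nothing to compare against.
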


\subsection{Operator extensions}
\begin{defi}[Projection on $\Omega$]
    We define the projection $C: L^2(\mathbb R^d, \mathbb C) \to L^2(\mathbb R^d, \mathbb C)$ on $\Omega$  by $C(f) = 1_\Omega f$.
\end{defi}

\begin{defi}[Operator extensions]
    The operators $C_m: H_m \to H_m$, $M_m: H_m \to H_m$, and $M_m^{-1}: H_m \to H_m$ can be extended to $L^2([-2L,2L]^d,  \mathbb C)$ by $C_m = \Pi_m C \Pi_m$, $M_m = \Pi_m M_m \Pi_m$, and $M_m^{-1} = \Pi_m M_m^{-1} \Pi_m$.
\end{defi}
From now on, we consider the extensions of these operators, allowing us to express equivalently \[|||M_m^{-1}|||_2 = \underset{\|g\|_2 = 1}{\sup_{g \in H_m}} \|M_m^{-1}g\|_2  = \underset{\|g\|_2 = 1}{\sup_{g \in L^2([-2L, 2L]^d)}} \|M_m^{-1}g\|_2.\]
It is important to note that the extended operator $M_m^{-1}$ is no longer the inverse of the extended operator $M_m$.

\begin{prop}[Compact operator extension]
    Let $\mathscr O$ be a positive Hermitian compact  operator on $L^2([-2L,2L]^d, \mathbb R)$. Then its unique extension $\tilde{\mathscr O}$ to $L^2([-2L,2L]^d, \mathbb C)$ is a positive Hermitian compact operator with the same real eigenfunctions and positive eigenvalues.
    \label{prop:extension}
\end{prop}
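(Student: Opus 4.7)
The natural plan is to define $\tilde{\mathscr O}$ explicitly by the canonical $\mathbb C$-linear extension
\[
\tilde{\mathscr O}(f) = \mathscr O(\Re f) + i\,\mathscr O(\Im f), \quad f \in L^2([-2L,2L]^d, \mathbb C),
\]
and then verify each required property by reducing to the real case through the decomposition $f = \Re f + i\Im f$. Uniqueness is immediate: any $\mathbb C$-linear extension must send $\Re f$ and $i\Im f$ to $\mathscr O(\Re f)$ and $i\mathscr O(\Im f)$ respectively, so the formula above is forced. $\mathbb C$-linearity is then a direct check. Continuity follows from $\|f\|_2^2 = \|\Re f\|_2^2 + \|\Im f\|_2^2$ and the corresponding bound for $\tilde{\mathscr O}(f)$, which yields $|||\tilde{\mathscr O}|||_2 \leqslant |||\mathscr O|||_2$ (in fact equality, by restriction to real-valued $f$).

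Next, I would check that $\tilde{\mathscr O}$ is Hermitian by expanding $\langle \tilde{\mathscr O} f, g\rangle$ into its four bilinear components on $\Re f, \Im f, \Re g, \Im g$ and using that $\mathscr O$ is self-adjoint on the real space. Positivity follows from the identity
\[
\langle f, \tilde{\mathscr O} f\rangle = \langle \Re f, \mathscr O \Re f\rangle + \langle \Im f, \mathscr O \Im f\rangle \geqslant 0,
\]
where the cross terms cancel due to the reality of $\mathscr O$.

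For compactness, I would take a bounded sequence $(f_n)$ in $L^2([-2L,2L]^d, \mathbb C)$. Then $(\Re f_n)$ and $(\Im f_n)$ are bounded in the real $L^2$ space, so by compactness of $\mathscr O$ one can extract (in two successive diagonal steps) a subsequence along which both $\mathscr O(\Re f_n)$ and $\mathscr O(\Im f_n)$ converge in the real $L^2$ norm. Combining the two limits shows that $\tilde{\mathscr O}(f_n)$ has a convergent subsequence in $L^2([-2L,2L]^d, \mathbb C)$.

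Finally, to transfer the spectrum, let $(e_k)$ be a Hilbert basis of $L^2([-2L,2L]^d, \mathbb R)$ diagonalizing $\mathscr O$ with positive eigenvalues $(\lambda_k)$, provided by Theorem~\ref{thm:spectral}. The same family $(e_k)$ is a Hilbert basis of $L^2([-2L,2L]^d, \mathbb C)$, and by construction $\tilde{\mathscr O}(e_k) = \mathscr O(e_k) = \lambda_k e_k$, so these are eigenfunctions with the same eigenvalues. Conversely, if $\tilde{\mathscr O}(f) = \mu f$ for some $\mu \in \mathbb C$ and $f\neq 0$, the Hermitian property forces $\mu \in \mathbb R$, and splitting $f$ into real and imaginary parts yields $\mathscr O(\Re f) = \mu \Re f$ and $\mathscr O(\Im f) = \mu \Im f$, so every eigenvalue of $\tilde{\mathscr O}$ already appears in $\sigma(\mathscr O)$ and admits a real eigenfunction. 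I expect the mildest obstacle to be keeping track of conventions when checking the Hermitian identity on complex-valued functions, since the sesquilinearity introduces several sign changes on the cross terms; everything else is a routine unpacking of the real/imaginary decomposition.
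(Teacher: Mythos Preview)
Your proposal is correct and follows essentially the same route as the paper: define the extension by $\tilde{\mathscr O}(f) = \mathscr O(\Re f) + i\,\mathscr O(\Im f)$, argue uniqueness from $\mathbb C$-linearity, and transfer the spectral data from the real operator via Theorem~\ref{thm:spectral}. The only cosmetic difference is that the paper packages the Hermitian, positivity, and eigenvalue statements into a single step by writing $\tilde{\mathscr O}(f) = \sum_k \sigma_k^\downarrow(\mathscr O)\langle f, f_k\rangle f_k$ and reading everything off that formula, whereas you verify each property separately before reaching the spectral representation.
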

\begin{proof}
    Since $\tilde{\mathscr O}$ is $\mathbb C$-linear, we necessarily have $\tilde{\mathscr O} (f) = \mathscr O (\Re(f)) + i \mathscr O (\Im(f))$. Therefore, the extension is unique. Since $\mathscr O$ is compact, $\tilde{\mathscr O}$ is also compact. According to Theorem \ref{thm:spectral}, the operator $\mathscr O$ is diagonalizable in a Hermitian basis $(f_k)_{k\in \mathbb N^\star}$. Thus, for all $f \in L^2([-2L,2L], \mathbb R)$,
    \[\mathscr O(f) = \sum_{k\in\mathbb N^\star} \sigma_k^\downarrow(\mathscr O) \langle f, f_k\rangle_{L^2([-2L,2L], \mathbb R)}f_k.\] 
    Thus, for all $f\in L^2([-2L,2L], \mathbb C)$
    \begin{align*}
        \tilde{\mathscr O} (f) &= \sum_{k\in\mathbb N^\star} \sigma_k^\downarrow(\mathscr O) (\langle \Re(f), f_k\rangle_{L^2([-2L,2L], \mathbb R)}+ i\langle \Im(f), f_k\rangle_{L^2([-2L,2L], \mathbb R)})f_k\\
        &= \sum_{k\in\mathbb N^\star} \sigma_k^\downarrow(\mathscr O) \langle f, f_k\rangle_{L^2([-2L,2L], \mathbb C)}f_k.
    \end{align*}
    This formula shows that $\tilde{\mathscr O}$ is Hermitian and diagonalizable with the same real eigenfunctions and positive eigenvalues as $\mathscr O$.
\end{proof}
 Recall that $\mathscr O_n$ is the operator $\mathscr O_n = \lim_{m\to \infty} M_m^{-1}$, where the limit is taken in the sense of the operator norm \citep[see Proposition~B.2][]{doumeche2024physicsinformed}.  
\begin{defi}[Operator $M$]
    Proposition \ref{prop:extension} shows that the operator $\mathscr O_n$ can be extended to $L^2([-2L,2L], \mathbb C)$. We denote the extension of $\mathscr O_n$ by $M^{-1}$. 
\end{defi}
The uniqueness of the extension in Proposition \ref{prop:extension} implies that the extension of the operator $C\mathscr O_nC: L^2([-2L,2L]^d)\to L^2([-2L,2L]^d)$ to $\mathbb C$ is indeed $CM^{-1}C: L^2([-2L,2L]^d, \mathbb C)\to L^2([-2L,2L]^d, \mathbb C)$.
Proposition \ref{prop:extension} shows that $C\mathscr O_nC$ has the same eigenvalues as $CM^{-1}C$.

\subsection{Convergence of $M_m^{-1}$}
\begin{lem}[Bounding the spectrum of $M_m^{-1}$]
    Let $m \in \mathbb N^\star$. Then, for all $k\in \mathbb N^\star$, 
    \[\sigma_k^\downarrow(M_m^{-1}) \leqslant  \sigma_k^\downarrow(M^{-1}).\]
    \label{lem:bounding_spectrum}
\end{lem}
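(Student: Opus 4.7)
The strategy is to combine the Courant-Fischer minmax characterization of eigenvalues (Theorem~\ref{thm:minmax}) with the classical Legendre--Fenchel duality for quadratic forms. The picture to keep in mind is that $M_m^{-1}$ is the inverse of the restriction of the RKHS quadratic form to the finite-dimensional subspace $H_m$, whereas $M^{-1}$ corresponds to the inverse of the same quadratic form on the whole space $H^s_{\mathrm{per}}$, so ``inverting on a smaller space makes the form smaller,'' which after reordering gives the stated inequality on eigenvalues.

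\textbf{Step 1 (reduction of the range of $k$).} By construction, the extended operator $M_m^{-1} = \Pi_m M_m^{-1} \Pi_m$ vanishes on $H_m^\perp$, so at most $(2m+1)^d$ of its eigenvalues are nonzero. Consequently, for any $k > (2m+1)^d$ one has $\sigma_k^\downarrow(M_m^{-1}) = 0$, and the inequality holds trivially since $M^{-1}$ is positive. Assume from now on that $k \leqslant (2m+1)^d$.

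\textbf{Step 2 (duality inequality on $H_m$).} The key analytical fact is that for every $g \in H_m$,
\begin{equation*}
\langle g, M_m^{-1} g\rangle \;\leqslant\; \langle g, M^{-1} g\rangle.
\end{equation*}
To see this, observe that on $H_m$ the operator $M_m^{-1}$ agrees with the inverse of the restriction of the RKHS quadratic form $q(h) := \|h\|_{\mathrm{RKHS}}^2 = \langle h, Mh\rangle$. Applying the quadratic duality identity $\langle g, A^{-1} g\rangle = \sup_{h}\bigl[\,2\Re\langle g,h\rangle - \langle h, Ah\rangle\,\bigr]$ separately to $A = M|_{H_m}$ and to $A = M$ yields
\begin{equation*}
\langle g, M_m^{-1} g\rangle = \sup_{h \in H_m}\bigl[\,2\Re\langle g,h\rangle - q(h)\,\bigr], \qquad \langle g, M^{-1} g\rangle = \sup_{h \in H^s_{\mathrm{per}}}\bigl[\,2\Re\langle g,h\rangle - q(h)\,\bigr],
\end{equation*}
and since the second supremum is taken over a larger set, it dominates the first.

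\textbf{Step 3 (Courant--Fischer).} Because $M_m^{-1}$ vanishes on $H_m^\perp$, its top-$k$ eigenspace lies in $H_m$, so the maximum in
\begin{equation*}
\sigma_k^\downarrow(M_m^{-1}) \;=\; \max_{\dim V = k}\; \min_{\|g\|_2 = 1,\, g \in V}\; \langle g, M_m^{-1} g\rangle
\end{equation*}
is attained at a subspace $V \subseteq H_m$. Restricting both sides to such $V$ and applying Step 2 gives
\begin{equation*}
\sigma_k^\downarrow(M_m^{-1}) \;=\; \max_{V \subseteq H_m,\, \dim V = k}\; \min_{\|g\|_2 = 1,\, g \in V}\; \langle g, M_m^{-1} g\rangle \;\leqslant\; \max_{V \subseteq H_m,\, \dim V = k}\; \min_{\|g\|_2 = 1,\, g \in V}\; \langle g, M^{-1} g\rangle,
\end{equation*}
and enlarging the outer maximum to all subspaces $V \subseteq L^2([-2L,2L]^d,\mathbb{C})$ gives the right-hand side $\sigma_k^\downarrow(M^{-1})$ by Theorem~\ref{thm:minmax}.

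\textbf{Expected difficulty.} The routine parts are Steps 1 and 3; the delicate point is Step 2, namely the rigorous use of the duality formula when $M$ is unbounded (the form $q$ has domain $H^s_{\mathrm{per}}$, not all of $L^2$). This is handled by working with the quadratic form $q$ throughout, rather than with $M$ directly, which is legitimate because $M^{-1} = \mathscr{O}_n$ is a bounded, compact, positive Hermitian operator (extension of the well-defined object from \citet{doumeche2024physicsinformed}) whose quadratic form is identified through the Legendre transform of $q$.
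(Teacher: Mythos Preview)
Your proof is correct, but it takes a somewhat longer route than the paper's. Both arguments ultimately hinge on the inclusion $H_m \subseteq H^s_{\mathrm{per}}$ together with Courant--Fischer, but the paper avoids the Legendre--Fenchel detour entirely. It simply observes that for $f \in H_m$ one has the \emph{equality} $\langle f, M_m f\rangle = \langle f, Mf\rangle$ (both are the RKHS norm), then applies the min--max theorem in its second form to $M_m$ and $M$ directly:
\[
\sigma_k^\uparrow(M_m) = \min_{\substack{H\subseteq H_m\\ \dim H = k}}\max_{\substack{g\in H\\ \|g\|_2=1}} \langle g, Mg\rangle \;\geqslant\; \min_{\substack{H\subseteq H^s_{\mathrm{per}}\\ \dim H = k}}\max_{\substack{g\in H\\ \|g\|_2=1}} \langle g, Mg\rangle = \sigma_k^\uparrow(M),
\]
and then inverts via $\sigma_k^\downarrow(M_m^{-1}) = \sigma_k^\uparrow(M_m)^{-1}$. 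This is shorter because it never needs the pointwise comparison $\langle g, M_m^{-1}g\rangle \leqslant \langle g, M^{-1}g\rangle$: the monotonicity is obtained at the level of eigenvalues, not quadratic forms of the inverses. What your approach buys is a slightly stronger intermediate statement (the operator inequality $M_m^{-1} \leqslant M^{-1}$ on $H_m$), which is conceptually pleasant but not needed here; the paper's approach buys brevity and sidesteps the care you correctly flag about applying duality when $M$ is unbounded.
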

\begin{proof}
    Let $f \in H_m$. Then, \[\langle f, M_m f\rangle = \langle f, M f\rangle = \lambda_n \|f\|_{H^s_{\mathrm{per}}([-2L,2L]^d)}^2 + \mu_n \|\mathscr D(f)\|_{L^2(\Omega)}^2.\]
    Thus, using Theorem \ref{thm:minmax}, we deduce that 
    \begin{align*}
    \sigma_k^\uparrow(M_m) &= \underset{\dim H = k}{\min_{H\subseteq H_m}} \;  \underset{\|g\|_2=1}{\max_{g\in H }} \langle g, M_m g\rangle\\
    &= \underset{\dim H = k}{\min_{H\subseteq H_m}} \; \underset{\|g\|_2=1}{\max_{g\in H }} \langle g, M g\rangle\\
    &\geqslant  \underset{\dim H = k}{\min_{H\subseteq H^s_{\mathrm{per}}([-2L,2L]^d)}} \; \underset{\|g\|_2=1}{\max_{g\in H }} \langle g, M g\rangle\\
    &=\sigma_k^\uparrow(M).
    \end{align*} 
    From \eqref{eq:order}, we deduce that $\sigma_k^\downarrow(M_m^{-1}) \leqslant \sigma_k^\downarrow(M^{-1})$. 
\end{proof}

\begin{lem}[Spectral convergence of $M_m$]
    Let $m \in \mathbb N^\star$. Then, for all $k\in \mathbb N^\star$, one has
    \[\lim_{m\to\infty}\sigma_k^\downarrow(M_m^{-1}) = \sigma_k^\downarrow(M^{-1}).\]
    \label{lem:convergence_spectrum}
\end{lem}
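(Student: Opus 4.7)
The previous lemma already gives the upper bound $\sigma_k^\downarrow(M_m^{-1}) \leqslant \sigma_k^\downarrow(M^{-1})$, so it suffices to establish the matching lower bound $\liminf_{m\to\infty}\sigma_k^\downarrow(M_m^{-1}) \geqslant \sigma_k^\downarrow(M^{-1})$. The plan is to extract it from the Courant--Fischer minmax characterization (Theorem~\ref{thm:minmax}) combined with the operator norm convergence $M_m^{-1} \to M^{-1}$ on $L^2([-2L,2L]^d,\mathbb{C})$. The latter comes from Proposition~B.2 of \citet{doumeche2024physicsinformed} for the real-valued operator $\mathscr O_n$, and lifts to the complex setting since Proposition~\ref{prop:extension} guarantees that the extensions preserve eigenvalues (and hence the operator norm, which is the top eigenvalue for positive Hermitian compact operators).

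\textbf{Lower bound via minmax and perturbation.} Fix $k \in \mathbb{N}^\star$, and let $H^\star \subseteq L^2([-2L,2L]^d,\mathbb{C})$ be a $k$-dimensional subspace achieving the max in the Courant--Fischer formula for $M^{-1}$, that is,
\[
\sigma_k^\downarrow(M^{-1}) = \min_{g \in H^\star,\,\|g\|_2 = 1}\, \langle g, M^{-1} g\rangle.
\]
Applying Theorem~\ref{thm:minmax} to $M_m^{-1}$ and restricting the outer max to the specific subspace $H^\star$ gives the one-sided bound $\sigma_k^\downarrow(M_m^{-1}) \geqslant \min_{g \in H^\star,\,\|g\|_2 = 1} \langle g, M_m^{-1} g\rangle$. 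For any unit vector $g$, the Cauchy--Schwarz inequality yields $|\langle g, (M^{-1} - M_m^{-1}) g\rangle| \leqslant |||M^{-1} - M_m^{-1}|||_2$, so $\langle g, M_m^{-1} g\rangle \geqslant \langle g, M^{-1} g\rangle - |||M^{-1} - M_m^{-1}|||_2$. Taking the minimum over $g \in H^\star$ with $\|g\|_2 = 1$ then gives
\[
\sigma_k^\downarrow(M_m^{-1}) \geqslant \sigma_k^\downarrow(M^{-1}) - |||M^{-1} - M_m^{-1}|||_2.
\]
Since the right-hand side converges to $\sigma_k^\downarrow(M^{-1})$ as $m \to \infty$ by the operator norm convergence, the desired lower bound follows, and combined with Lemma~\ref{lem:bounding_spectrum} this closes the proof.

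\textbf{Main obstacle.} The only non-routine ingredient is verifying that Theorem~\ref{thm:minmax} applies to both $M_m^{-1}$ and $M^{-1}$ as operators on $L^2([-2L,2L]^d,\mathbb{C})$. For $M_m^{-1}$ this is immediate: its extension $\Pi_m M_m^{-1} \Pi_m$ is of finite rank $(2m+1)^d$ and is positive Hermitian since $M_m$ is. For $M^{-1}$, Proposition~\ref{prop:extension} ensures that the complex extension of the compact positive Hermitian operator $\mathscr O_n$ inherits compactness, positivity, and Hermitianness. With these checks in place, the minmax argument applies directly and the conclusion follows.
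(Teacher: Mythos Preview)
Your argument is correct, but it follows a different route from the paper's. The paper works on the ``unbounded'' side: it first establishes pointwise convergence of the quadratic form $\langle f, M_m f\rangle \to \langle f, M f\rangle$ from continuity of the RKHS norm on $H^s_{\mathrm{per}}$, polarizes to obtain $\langle f_j, M_m f_\ell\rangle \to \langle f_j, M f_\ell\rangle$ on the first $k$ eigenfunctions of $M$, and then tests the minmax characterization of $\sigma_k^\uparrow(M_m)$ on the projected subspace $\mathrm{Span}(\Pi_m(f_1),\ldots,\Pi_m(f_k)) \subseteq H_m$. This yields $\limsup_{m\to\infty} \sigma_k^\uparrow(M_m) \leqslant \sigma_k^\uparrow(M)$, which inverts to the desired lower bound. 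You instead take the operator-norm convergence $|||M_m^{-1} - M^{-1}|||_2 \to 0$ as given and run a one-line perturbation of the minmax for $M_m^{-1}$ on the optimal subspace $H^\star$ for $M^{-1}$.

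Your route is considerably shorter, but be aware of the dependency structure: within this paper the operator-norm convergence is (re)proved as Lemma~\ref{lem:operator_norm}, and that proof \emph{uses} the present lemma via Lemma~\ref{lem:eigenfunction_cv}. So inside the paper's own chain of lemmas your argument would be circular. You sidestep this by citing the external Proposition~B.2 of \citet{doumeche2024physicsinformed}, which the paper itself invokes when defining $\mathscr O_n$; that is legitimate, but it trades the paper's self-contained spectral argument for reliance on the external result (and on the claim that the real-to-complex extension preserves the operator-norm limit, which is fine for Hermitian operators by Proposition~\ref{prop:extension}). The paper's longer approach has the side benefit that its intermediate steps---pointwise convergence of the bilinear forms on eigenfunctions---are exactly what is reused in Lemmas~\ref{lem:eigenfunction_cv} and~\ref{lem:operator_norm}, whereas your shortcut bypasses that scaffolding.
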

\begin{proof}
    By continuity of the RKHS norm $f\mapsto \langle f, Mf\rangle$ on $H^s_{\mathrm{per}}([-2L,2L]^d, \mathbb C)$ \citep[][Proposition B.1]{doumeche2024physicsinformed}, we know that, for all function $f\in H^s_{\mathrm{per}}([-2L,2L]^d, \mathbb C)$, the quantity $ \lambda_n \|\Pi_m(f)\|_{H^s_{\mathrm{per}}([-2L,2L]^d, \mathbb C)}^2 + \mu_n \|\mathscr D(\Pi_m(f))\|_{L^2(\Omega, \mathbb C)}^2$ converges to $ \lambda_n \|f\|_{H^s_{\mathrm{per}}([-2L,2L]^d, \mathbb C)}^2 + \mu_n \|\mathscr D(f)\|_{L^2(\Omega, \mathbb C)}^2$ as $m$ goes to the infinity. Thus, 
    \begin{equation*}
    \forall f\in H^s_{\mathrm{per}}([-2L,2L]^d, \mathbb C),\quad \lim_{m\to \infty}\langle f, (M-M_m)f\rangle = 0.
    \end{equation*}

Next, consider $f_1, \hdots, f_k$ to be the eigenfunctions of $M$ associated with the ordered eigenvalue $\sigma_1^\uparrow(M), \hdots,\sigma_k^\uparrow(M)$. 
    Since, for any $1\leqslant j,\ell \leqslant k$, we have that $\lim_{m\to \infty}\langle f_j + f_\ell, M_m(f_j+f_\ell)\rangle = \langle f_j + f_\ell, M(f_j+f_\ell)\rangle$ and $\lim_{m\to \infty}\langle f_j + f_\ell, M_m(f_j+f_\ell)\rangle = \lim_{m\to \infty}\langle f_j, M_m f_j\rangle + \lim_{m\to \infty}\langle f_\ell, M_m f_\ell\rangle + 2\lim_{m\to \infty}\Re(\langle f_j, M_m f_\ell\rangle)$, we deduce that $\lim_{m\to \infty}\;\Re(\langle f_j, M_m f_\ell\rangle) = $ $\Re(\langle f_j, Mf_\ell\rangle)$. Using the same argument by developing $\langle f_j + if_\ell, M_m(f_j+if_\ell)\rangle$ shows that $\lim_{m\to \infty}\Im(\langle f_j, M_m f_\ell\rangle) = \Im(\langle f_j, Mf_\ell\rangle)$. Overall,
    \begin{equation}
        \forall 1\leqslant j,\ell \leqslant k,\quad \lim_{m\to \infty}\langle f_j, M_m f_\ell\rangle = \langle f_j, Mf_\ell\rangle.
    \label{eq:pointwise_cv}
    \end{equation}
     Now, observe that \[(g\in \mathrm{Span}(f_1, \hdots, f_k)  \hbox{ and } \|g\|_2=1) \Leftrightarrow (\exists (a_1, \hdots, a_k)\in \mathbb C^k, \;g = \sum_{j=1}^k a_j f_j \hbox{ and } \sum_{j=1}^k |a_j|^2 = 1).\] Thus,
     \begin{align*}
         \underset{\|g\|_2=1}{\max_{g\in \mathrm{Span}(f_1, \hdots, f_k) }} |\langle g, M_m g\rangle -\langle g, M g\rangle|&\leqslant \max_{\|a\|_2=1} \sum_{i,j=1}^k |a_i a_j| |\langle f_i, (M_m-M) f_j\rangle|\\
         & \leqslant k \max_{1\leqslant i,j\leqslant k}|\langle f_i, (M_m-M) f_j\rangle|\\
        &\xrightarrow{m\to \infty}0\quad \hbox{according to \eqref{eq:pointwise_cv}}.
     \end{align*}
     So,
     \begin{align}
         \lim_{m\to\infty}\underset{\|g\|_2=1}{\max_{g\in \mathrm{Span}(\Pi_m(f_1), \hdots, \Pi_m(f_k)) }} \langle g, M_m g\rangle &= \lim_{m\to\infty}\underset{\|g\|_2=1}{\max_{g\in \mathrm{Span}(f_1, \hdots, f_k) }} \langle g, M_m g\rangle\nonumber\\
         &= \underset{\|g\|_2=1}{\max_{g\in \mathrm{Span}(f_1, \hdots, f_k) }} \langle g, M g\rangle\nonumber\\
         &= \sigma_k^\uparrow(M)\label{eq:bound_vp}.
     \end{align}
     Note that $\mathrm{Span}(\Pi_m(f_1), \hdots, \Pi_m(f_k)) \subseteq H_m$. Moreover, for $m$ large enough, we have that $\dim \mathrm{Span}(\Pi_m(f_1), \hdots, \Pi_m(f_k)) = k$.
     Therefore, according to Theorem \ref{thm:minmax}, \[\sigma_k^\uparrow(M_m) = \underset{\dim H = k}{\min_{H\subseteq H_m}} \;  \underset{\|g\|_2=1}{\max_{g\in H }} \langle g, M_m g\rangle \leqslant \underset{\|g\|_2=1}{\max_{g\in \mathrm{Span}(\Pi_m(f_1), \hdots, \Pi_m(f_k)) }} \langle g, M_m g\rangle .\] Combining this inequality with identity \eqref{eq:bound_vp} shows that 
     $\limsup_{m\to\infty}\sigma_k^\uparrow(M_m) \leqslant \sigma_k^\uparrow(M)$.
     Equivalently, \[\liminf_{m\to\infty}\sigma_k^\downarrow(M_m^{-1}) \geqslant  \sigma_k^\downarrow(M^{-1}).\] 
     Finally, by Lemma \ref{lem:bounding_spectrum}, we have $\sigma_k^\downarrow(M_m^{-1}) \leqslant \sigma_k^\downarrow(M^{-1})$. We conclude that $\lim_{m\to\infty}\sigma_k^\downarrow(M_m^{-1}) = \sigma_k^\downarrow(M^{-1}).$
\end{proof}

\begin{lem}[Eigenfunctions convergence]
    Let $(f_{j,m})_{j\in\mathbb N^\star}$ be the eigenvectors of $M_m$ associated with the eigenvalues $(\sigma^\uparrow_j(M_m))_{j\in\mathbb N^\star}$. Let $E_j = \ker(M-\sigma^\uparrow_j(M)\mathrm{Id})$.
    Then
    \[\forall j\in \mathbb N^\star, \quad \lim_{m\to\infty} \min_{y\in E_j}\|f_{j,m}-y\|_2 = 0.\]
    \label{lem:eigenfunction_cv}
\end{lem}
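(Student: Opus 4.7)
The plan is to reduce the claim to a spectral-gap argument for the compact self-adjoint operator $M^{-1}$, leveraging the operator-norm convergence $M_m^{-1} \to M^{-1}$ recalled before the definition of $M^{-1}$ and Lemma~\ref{lem:convergence_spectrum}. Normalize $\|f_{j,m}\|_2 = 1$ (any unit eigenvector of the given eigenvalue will do). Since $M_m$ is positive definite on $H_m$, the eigenequation $M_m f_{j,m} = \sigma^\uparrow_j(M_m) f_{j,m}$ is equivalent to $M_m^{-1} f_{j,m} = \nu_j^{(m)} f_{j,m}$ with $\nu_j^{(m)} := 1/\sigma^\uparrow_j(M_m) = \sigma^\downarrow_j(M_m^{-1})$ by \eqref{eq:order}.

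Set $\nu_j := \sigma^\downarrow_j(M^{-1}) = 1/\sigma^\uparrow_j(M)$. Lemma~\ref{lem:convergence_spectrum} gives $\nu_j^{(m)} \to \nu_j$, while the operator-norm convergence of $M_m^{-1}$ to $M^{-1}$ together with the triangle inequality yields
\[
\|(M^{-1} - \nu_j \mathrm{Id}) f_{j,m}\|_2 \leq |||M^{-1} - M_m^{-1}|||_2 + |\nu_j^{(m)} - \nu_j| \xrightarrow{m\to\infty} 0,
\]
so the residual in the limiting eigenequation vanishes asymptotically.

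By Theorem~\ref{thm:spectral}, $M^{-1}$ is compact with spectrum $\{0\} \cup \{\sigma^\downarrow_k(M^{-1})\}_{k \in \mathbb N^\star}$, hence the positive eigenvalue $\nu_j$ is isolated. Let $P_j$ denote the orthogonal projection onto $E_j = \ker(M^{-1} - \nu_j \mathrm{Id})$ and let $\delta := \inf\{|\nu - \nu_j| : \nu \in \sigma(M^{-1}) \setminus \{\nu_j\}\} > 0$. The spectral theorem, applied on the invariant subspace $(I - P_j)(L^2([-2L,2L]^d, \mathbb C))$, gives the lower bound $\|(M^{-1} - \nu_j) v\|_2 \geq \delta \|v\|_2$ for every $v$ there. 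Decomposing $f_{j,m} = P_j f_{j,m} + (I - P_j) f_{j,m}$ and using $P_j f_{j,m} \in \ker(M^{-1} - \nu_j \mathrm{Id})$,
\[
\delta \|(I - P_j) f_{j,m}\|_2 \leq \|(M^{-1} - \nu_j)(I - P_j) f_{j,m}\|_2 = \|(M^{-1} - \nu_j) f_{j,m}\|_2 \to 0.
\]
Since $P_j f_{j,m} \in E_j$, this gives $\min_{y \in E_j} \|f_{j,m} - y\|_2 \leq \|(I - P_j) f_{j,m}\|_2 \to 0$, closing the argument.

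The main obstacle is the isolation of $\nu_j$ in $\sigma(M^{-1})$, which is crucial for the lower bound on $M^{-1} - \nu_j \mathrm{Id}$ on the orthogonal complement of $E_j$. This is precisely where the compactness of $M^{-1}$ matters: without it, the spectrum could accumulate near $\nu_j$, forcing $\delta = 0$ and making the argument collapse. A secondary check is that the operator-norm convergence $M_m^{-1} \to M^{-1}$ survives the complex extension, which follows from Proposition~\ref{prop:extension}, since extension preserves the eigenvalues and therefore the operator norms of compact self-adjoint operators.
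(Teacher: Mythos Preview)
Your argument is correct and considerably cleaner than the paper's, but it follows a genuinely different route, and you should be aware of a delicate point about logical dependence.

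\textbf{What you do versus what the paper does.} The paper does \emph{not} use operator-norm convergence of $M_m^{-1}$ to $M^{-1}$ in its proof of this lemma; in fact, the paper establishes that convergence as Lemma~\ref{lem:operator_norm} \emph{after} the present lemma and invokes the present lemma in its proof. Instead, the paper argues by contradiction on the smallest index $p$ for which the claim fails, expands $\langle f_j, M_m f_j\rangle$ in the eigenbasis $(f_{\ell,m})_\ell$ of $M_m$, combines the pointwise quadratic-form convergence $\langle f_j, M_m f_\ell\rangle \to \langle f_j, M f_\ell\rangle$ (established inside the proof of Lemma~\ref{lem:convergence_spectrum}) with the eigenvalue convergence of Lemma~\ref{lem:convergence_spectrum}, and squeezes the mass $\sum_{\ell}|\langle f_{\ell,m},f_j\rangle|^2$ into the block $k_1<\ell<k_2$ corresponding to $E_p$. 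Your approach bypasses all this bookkeeping: once $\|(M^{-1}-\nu_j\mathrm{Id})f_{j,m}\|_2\to 0$ is known, the spectral-gap bound for the compact self-adjoint $M^{-1}$ immediately forces $(I-P_j)f_{j,m}\to 0$.

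\textbf{What each approach buys.} Your proof is a one-line application of the classical ``approximate eigenvector $\Rightarrow$ close to eigenspace'' principle for isolated eigenvalues, and it makes the role of compactness (isolation of $\nu_j$) completely transparent. The price is that it relies on the operator-norm convergence being available \emph{beforehand}; you source it from the companion paper's Proposition~B.2 (recalled just before the definition of $M^{-1}$), which is legitimate, and your reduction of the complex case to the real one via Proposition~\ref{prop:extension} is correct since complex extension of a real self-adjoint operator preserves the operator norm. The paper's approach is more laborious but self-contained within the appendix: it shows that eigenvalue convergence together with pointwise form convergence already yields eigenfunction convergence, and then uses \emph{this} to deduce Lemma~\ref{lem:operator_norm}. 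If you were asked to keep the paper's internal order (C.5 before C.6) without citing the external result, your argument would be circular; as written, it is not.
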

\begin{proof}
Let $(f_j)_{j\in\mathbb N^\star}$ be the eigenvectors of $M$ associated with the eigenvalues $(\sigma^\uparrow_j(M))_{j\in\mathbb N^\star}$.

    The proof proceeds by contradiction. Assume that the lemma is false, and 
    consider the minimum integer $p\in \mathbb N^\star$ such that $\limsup_{m\to\infty} \min_{y\in E_p}\|f_{p,m}-y\|_2  > 0$. 
    Let $k_1 < p$ be the largest integer such that $\sigma^\uparrow_{k_1}(M_m) < \sigma^\uparrow_p(M_m)$, and let $k_2 > p$ be the smallest integer such that $\sigma^\uparrow_{k_2}(M_m) > \sigma^\uparrow_p(M_m)$. Observe that $E_p = \mathrm{Span}(f_{k_1+1}, \hdots, f_{k_2-1})$.

    Let $k_1 < j < k_2$. We know that
    $\langle f_j, M_m f_j\rangle = \sum_{\ell \in \mathbb N^\star} \sigma^\uparrow_j(M_m) |\langle f_{\ell, m}, f_j\rangle|^2$ from diagonalizing $M_m$. 
    The minimality assumption on $j$ ensures that, for all $\ell \leqslant k_1$, $\lim_{m\to\infty} \min_{y\in E_\ell}\|f_{\ell,m}-y\|_2 = 0$. Since $E_j \subseteq \mathrm{Span}(f_1, \hdots, f_{k_1})$, we deduce that $\forall \ell \leqslant k_1$, $\lim_{m\to\infty} \langle f_{\ell, m}, f_j\rangle = 0$. Thus, 
    \begin{equation}
        \sum_{\ell\geqslant  k_1}  |\langle f_{\ell, m}, f_j\rangle|^2 = 1+ o_{m\to \infty}(1),\label{eq:technical1}
    \end{equation}
    and
    \begin{equation}
        \sum_{k_1\leqslant \ell < k_2} \sigma^\uparrow_j(M_m) |\langle f_{\ell, m}, f_j\rangle|^2 = o_{m\to \infty}(1).\label{eq:technical2}
    \end{equation}
    By Lemma \ref{lem:convergence_spectrum}, using $|\langle f_{\ell, m}, f_j\rangle|^2 \leqslant 1$, we have 
    \begin{equation}
        \sum_{k_1\leqslant \ell \leqslant k_2} \sigma^\uparrow_j(M_m) |\langle f_{\ell, m}, f_j\rangle|^2 = \sigma^\uparrow_j(M) \sum_{k_1\leqslant \ell < k_2}  |\langle f_{\ell, m}, f_j\rangle|^2 + o_{m\to \infty}(1).\label{eq:technical3}
    \end{equation}
    Combining \eqref{eq:technical2} and \eqref{eq:technical3}, we deduce that 
    \[\langle f_j, M_m f_j\rangle = o_{m\to \infty}(1) +\sigma^\uparrow_j(M)\sum_{k_1\leqslant \ell < k_2} |\langle f_{\ell, m}, f_j\rangle|^2 + \sum_{ \ell \geqslant  k_2} \sigma^\uparrow_j(M_m) |\langle f_{\ell, m}, f_j\rangle|^2.\]
    Moreover, identity
    \eqref{eq:pointwise_cv} ensures that $\langle f_j, M_m f_j\rangle =  \sigma^\uparrow_j(M) +  o_{m\to \infty}(1).$
    Thus, 
    \begin{equation*}
        \sigma^\uparrow_j(M) (1 - \sum_{k_1\leqslant \ell < k_2} |\langle f_{\ell, m}, f_j\rangle|^2) = o_{m\to \infty}(1) +  \sum_{ \ell \geqslant  k_2} \sigma^\uparrow_j(M_m) |\langle f_{\ell, m}, f_j\rangle|^2.
    \end{equation*}
    However, according to Lemma \ref{lem:convergence_spectrum}, there is $\varepsilon > 0$ such that, for $m$ large enough,
    \[\forall \ell \geqslant  k_2, \quad \sigma^\uparrow_j(M_m) \geqslant  \sigma^\uparrow_k(M) + \varepsilon.\]
    Hence, 
    \begin{equation*}
        \sigma^\uparrow_j(M) (1 - \sum_{k_1\leqslant \ell < k_2} |\langle f_{\ell, m}, f_j\rangle|^2) \geqslant  o_{m\to \infty}(1) +  (\sigma^\uparrow_j(M)+\varepsilon)\sum_{ \ell \geqslant  k_2}  |\langle f_{\ell, m}, f_j\rangle|^2.
    \end{equation*}
    Combining this inequality with \eqref{eq:technical1}, this means that
    \begin{equation*}
        0 \geqslant  o_{m\to \infty}(1) +  \varepsilon\sum_{ \ell \geqslant  k_2}  |\langle f_{\ell, m}, f_j\rangle|^2.
    \end{equation*}
    Thus, $\lim_{m\to\infty} \sum_{ \ell \geqslant  k_2}  |\langle f_{\ell, m}, f_j\rangle|^2 = 0$ and $\lim_{m\to\infty} \sum_{ k_1 \leqslant \ell < k_2}  |\langle f_{\ell, m}, f_j\rangle|^2 = 1$. 

    We deduce that, for all $k_1 < j < k_2$, $\lim_{m\to\infty} \min_{y\in \mathrm{Span}(f_{k_1+1, m}, \hdots, f_{k_2-1, m})}\|f_{j}-y\|_2 = 0$. By symmetry of the $\ell^2$-distance between two spaces of the same dimension $k_2-k_1-1$, for all $k_1 < j < k_2$, $\lim_{m\to\infty} \min_{y\in \mathrm{Span}(f_{k_1+1}, \hdots, f_{k_2-1})}\|f_{j,m}-y\|_2 = 0$. This contradicts the fact that $\limsup_{m\to\infty} \min_{y\in E_p}\|f_{p,m}-y\|_2  > 0$.
\end{proof}

\begin{lem}[Convergence of $M_m^{-1}$] One has
    \[\lim_{m \to \infty} |||M^{-1}-M_m^{-1}|||_2 = 0.\]
    \label{lem:operator_norm}
\end{lem}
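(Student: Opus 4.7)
The plan is to combine a spectral tail truncation with the convergence information given by Lemmas \ref{lem:bounding_spectrum}, \ref{lem:convergence_spectrum}, and \ref{lem:eigenfunction_cv}. Fix $\epsilon > 0$. Since $M^{-1}$ is compact with positive eigenvalues tending to $0$, one can choose $K$ at a spectral gap of $M^{-1}$ satisfying $\sigma_{K+1}^\downarrow(M^{-1}) < \epsilon/2$. Let $P_K$ be the orthogonal projection onto the span of the eigenvectors associated with $\sigma_1^\downarrow(M^{-1}),\dots,\sigma_K^\downarrow(M^{-1})$, and let $P_K^m$ denote the analogous projection associated with $M_m^{-1}$. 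Write
\begin{equation*}
M^{-1} - M_m^{-1} = (I - P_K)M^{-1} - (I - P_K^m)M_m^{-1} + \bigl(P_K M^{-1} - P_K^m M_m^{-1}\bigr).
\end{equation*}
Since $P_K$ commutes with $M^{-1}$, the spectral theorem gives $|||(I - P_K)M^{-1}|||_2 = \sigma_{K+1}^\downarrow(M^{-1}) < \epsilon/2$, and by Lemma \ref{lem:bounding_spectrum}, $|||(I - P_K^m)M_m^{-1}|||_2 = \sigma_{K+1}^\downarrow(M_m^{-1}) \leq \sigma_{K+1}^\downarrow(M^{-1}) < \epsilon/2$, the bound being uniform in $m$.

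It then suffices to prove that the finite-rank head $P_K M^{-1} - P_K^m M_m^{-1}$ tends to $0$ in operator norm. Group the top $K$ eigenvalues of $M^{-1}$ into distinct values $\mu_1 > \cdots > \mu_J$ with index blocks $I_j$ of size $d_j = \dim E_j$, where $E_j = \ker(M^{-1} - \mu_j \mathrm{Id})$, and define $Q_j$ to be the orthogonal projection onto $E_j$ and $Q_j^m = \sum_{k \in I_j}\langle \cdot, f_{k,m}\rangle f_{k,m}$. Expanding both projections on the associated eigenbases gives
\begin{equation*}
P_K M^{-1} - P_K^m M_m^{-1} = \sum_{j=1}^J \mu_j(Q_j - Q_j^m) - \sum_{j=1}^J\sum_{k \in I_j}\bigl(\sigma_k^\downarrow(M_m^{-1}) - \mu_j\bigr)\langle \cdot, f_{k,m}\rangle f_{k,m}.
\end{equation*}
The second sum is a finite combination of rank-one operators whose scalar coefficients vanish by Lemma \ref{lem:convergence_spectrum}, so its operator norm tends to $0$. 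The problem thus reduces to proving $|||Q_j - Q_j^m|||_2 \to 0$ for each fixed~$j$.

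The main obstacle is precisely this last step, since when $d_j \geq 2$ Lemma \ref{lem:eigenfunction_cv} only ensures that each eigenvector $f_{k,m}$, $k \in I_j$, approaches the eigenspace $E_j$ rather than a specific vector. To handle this, decompose $f_{k,m} = g_{k,m} + \eta_{k,m}$ with $g_{k,m} = Q_j f_{k,m} \in E_j$ and $\|\eta_{k,m}\|_2 \to 0$. The orthonormality of $(f_{k,m})_{k \in I_j}$ forces the Gram matrix of $(g_{k,m})_{k \in I_j}$ to converge to the identity, so for $m$ large enough this family is a basis of $E_j$ (since $|I_j| = d_j$), and a Gram--Schmidt correction yields an orthonormal basis $(\tilde g_{k,m})$ of $E_j$ with $\|\tilde g_{k,m} - f_{k,m}\|_2 \to 0$. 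A straightforward rank-one manipulation then gives
\begin{equation*}
|||Q_j^m - Q_j|||_2 \leq \sum_{k \in I_j} |||\langle \cdot, f_{k,m}\rangle f_{k,m} - \langle \cdot, \tilde g_{k,m}\rangle \tilde g_{k,m}|||_2 \xrightarrow{m \to \infty} 0.
\end{equation*}
Combining all estimates yields $\limsup_m |||M^{-1} - M_m^{-1}|||_2 \leq \epsilon$, and since $\epsilon > 0$ was arbitrary, the conclusion follows.
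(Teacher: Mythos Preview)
Your proof is correct and takes a genuinely different route from the paper. The paper argues termwise: it writes $M^{-1}-M_m^{-1}$ as $\sum_j\bigl(\sigma_j^\downarrow(M^{-1})\langle f_j,\cdot\rangle f_j-\sigma_j^\downarrow(M_m^{-1})\langle f_{j,m},\cdot\rangle f_{j,m}\bigr)$, bounds each summand by $|\sigma_j^\downarrow(M_m^{-1})-\sigma_j^\downarrow(M^{-1})|+2\sigma_j^\downarrow(M^{-1})\|f_{j,m}-f_j\|_2$, and then invokes dominated convergence with the trace-class bound $\sum_j\sigma_j^\downarrow(M^{-1})<\infty$. Your head--tail splitting is more elementary in that it only uses compactness ($\sigma_k^\downarrow\to 0$) together with the uniform bound of Lemma~\ref{lem:bounding_spectrum}, never the summability of the eigenvalues. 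More importantly, your treatment of the finite-rank head via $Q_j-Q_j^m$ and the Gram--Schmidt correction is more careful about degenerate eigenspaces: Lemma~\ref{lem:eigenfunction_cv} only guarantees $\mathrm{dist}(f_{k,m},E_j)\to 0$, not $\|f_{k,m}-f_k\|_2\to 0$ for a fixed $f_k$, and the paper's proof uses the latter without further comment. Your argument closes that gap explicitly. The trade-off is that the paper's proof is shorter once one accepts the trace-class input and is willing to be informal about multiplicity, whereas yours is longer but self-contained under the weaker hypothesis.
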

\begin{proof}
    Let $(f_{j,m})_{j\in\mathbb N^\star}$ be the eigenvectors of $M_m$, each associated with the corresponding eigenvalues $(\sigma^\uparrow_j(M_m))_{j\in\mathbb N^\star}$. Let $(f_j)_{j\in\mathbb N^\star}$ be the eigenvectors of $M$, each associated with the eigenvalues $(\sigma^\uparrow_j(M))_{j\in\mathbb N^\star}$.
    By Lemma \ref{lem:bounding_spectrum}, $\sigma^\downarrow_j(M_m^{-1}) \leqslant \sigma^\downarrow_j(M^{-1})$; by Lemma \ref{lem:convergence_spectrum}, $\lim_{m\to\infty} \sigma^\downarrow_j(M_m^{-1}) = \sigma^\downarrow_j(M^{-1})$; and by Lemma \ref{lem:eigenfunction_cv} $\lim_{m\to\infty} \min_{y\in E_j}\|f_{j,m}-y\|_2 = 0.$

    Notice that $M_m^{-1} = \sum_{\ell \in \mathbb N } \sigma^\downarrow_j(M_m^{-1}) \langle f_{j,m}, \cdot\rangle f_{j,m}$ and that $M^{-1} = \sum_{\ell \in \mathbb N } \sigma^\downarrow_j(M^{-1}) \langle f_{j}, \cdot\rangle f_{j}$. Let $g\in L^2([-2L,2L]^d, \mathbb C)$ be such that $\|g\|_2 = 1$.
    Then,
    \begin{align*}
        \|(M^{-1}-M^{-1}_m)g\|_2 &\leqslant \sum_{j \in \mathbb N^\star} \|\sigma^\downarrow_j(M_m^{-1}) \langle f_{j,m}, g\rangle f_{j,m} - \sigma^\downarrow_j(M^{-1}) \langle f_{j}, g\rangle f_{j}\|_2\\
        &\leqslant \sum_{j \in \mathbb N^\star} (\sigma^\downarrow_j(M_m^{-1})-\sigma^\downarrow_j(M^{-1}))\| \langle f_{j,m}, g\rangle f_{j,m}\|_2 \\
        &\quad + \sum_{j \in \mathbb N^\star}\sigma^\downarrow_j(M^{-1})\| \langle f_{j,m}- f_{j}, g\rangle f_{j,m}\|_2\\
        &\quad + \sum_{j \in \mathbb N^\star}\sigma^\downarrow_j(M^{-1})\| \langle f_{j}, g\rangle (f_{j,m}-f_{j})\|_2.
    \end{align*}
    Since $\|f_{j,m}\|_2 = \|f_j\|_2 = 1$, it follows that $| \langle f_{j,m}, g\rangle| \leqslant 1$. Additionally, by the Cauchy-Schwarz inequality, $| \langle f_{j}, g\rangle| \leqslant 1$ and $| \langle f_{j,m}- f_{j}, g\rangle | \leqslant \| (f_{j,m}-f_{j})\|_2$. Thus, the above inequality can be simplified as 
    \begin{align*}
        \|(M^{-1}-M^{-1}_m)g\|_2 
        &\leqslant \sum_{j \in \mathbb N^\star} (\sigma^\downarrow_j(M_m^{-1})-\sigma^\downarrow_j(M^{-1}) + 2\sigma^\downarrow_j(M^{-1}) \| f_{j,m}-f_{j}\|_2).
    \end{align*}
    Thus, 
    \begin{align*}
        |||M^{-1}-M^{-1}_m|||_2 
        &\leqslant \sum_{j \in \mathbb N^\star} (\sigma^\downarrow_j(M_m^{-1})-\sigma^\downarrow_j(M^{-1}) + 2\sigma^\downarrow_j(M^{-1}) \| f_{j,m}-f_{j}\|_2).
    \end{align*}
    Clearly, since $|\sigma^\downarrow_j(M_m^{-1})-\sigma^\downarrow_j(M^{-1})| \leqslant 2\sigma^\downarrow_j(M^{-1})$ and $\| f_{j,m}-f_{j}\|_2 \leqslant 2$,  \[|\sigma^\downarrow_j(M_m^{-1})-\sigma^\downarrow_j(M^{-1}) + 2\sigma^\downarrow_j(M^{-1}) \| f_{j,m}-f_{j}\|_2|\leqslant 4\sigma^\downarrow_j(M^{-1}).\] Moreover, $\sum_{j \in \mathbb N^\star} \sigma^\downarrow_j(M^{-1}) < \infty$ \citep[] [Proposition B.6]{doumeche2024physicsinformed}. 
    Thus, since we have that $\lim_{m\to \infty}|\sigma^\downarrow_j(M_m^{-1})-\sigma^\downarrow_j(M^{-1})| = \lim_{m\to\infty }\| (f_{j,m}-f_{j})\|_2 = 0$, we conclude with the dominated convergence theorem that $\lim_{m\to\infty} |||M^{-1}-M^{-1}_m|||_2 = 0$, as desired.
\end{proof}

\subsection{Operator norms of $C_m$ and $C$}
\begin{lem}
    One has $|||C|||_2 \leqslant 1$ and $|||C_m|||_2\leqslant 1$, for all $m\in \mathbb N^\star$.
    \label{lem:C_op_norm}
\end{lem}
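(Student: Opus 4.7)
The plan is to establish both bounds by direct computation, using that $C$ is a multiplication by an indicator function and that $C_m$ is a sandwich of $C$ between orthogonal projections.

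For the first inequality, I would start from the definition $C(f) = \mathbf{1}_\Omega f$ and the fact that $\Omega \subseteq [-L,L]^d \subseteq [-2L,2L]^d$. For any $f \in L^2([-2L,2L]^d, \mathbb{C})$,
\[
\|Cf\|_2^2 = \int_{[-2L,2L]^d} |\mathbf{1}_\Omega(x) f(x)|^2\, dx = \int_\Omega |f|^2 \leqslant \int_{[-2L,2L]^d} |f|^2 = \|f\|_2^2.
\]
Taking the supremum over $f$ with $\|f\|_2 = 1$ yields $|||C|||_2 \leqslant 1$.

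For the second inequality, I would use that $\Pi_m$ is an orthogonal projection on $L^2([-2L,2L]^d, \mathbb{C})$, so Pythagoras' theorem gives $|||\Pi_m|||_2 \leqslant 1$. Since the operator norm is sub-multiplicative (as recalled just after Definition~\ref{defi:op_norm}), and using $C_m = \Pi_m C \Pi_m$,
\[
|||C_m|||_2 = |||\Pi_m C \Pi_m|||_2 \leqslant |||\Pi_m|||_2 \cdot |||C|||_2 \cdot |||\Pi_m|||_2 \leqslant 1.
\]
This concludes the proof. There is no real obstacle here: the argument is essentially a one-line verification relying only on $\Omega$ being a subset of the ambient hypercube and on the standard contractivity of orthogonal projections.
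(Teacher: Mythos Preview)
Your proof is correct. The first inequality is handled exactly as in the paper: $\|Cf\|_2^2 = \int_\Omega |f|^2 \leqslant \|f\|_2^2$.

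For the second inequality, your route differs from the paper's. The paper treats $C_m$ as a positive Hermitian compact operator and invokes the Courant--Fischer minmax theorem to bound its top eigenvalue: $\sigma_1^\downarrow(C_m) = \max_{\|h\|_2=1}\langle h, C_m h\rangle \leqslant \max_{\|h\|_2=1}\|\Pi_m h\|_2^2 \leqslant 1$, and then passes from $\sigma_1^\downarrow(C_m)$ to $|||C_m|||_2$ via $\sigma_1^\downarrow(C_m^2) = \max_{\|h\|_2=1}\|C_m h\|_2^2$. You instead use the factorization $C_m = \Pi_m C \Pi_m$ together with sub-multiplicativity of the operator norm and the contractivity $|||\Pi_m|||_2 \leqslant 1$ of orthogonal projections. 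Your argument is more elementary and avoids any appeal to compactness or the spectral theorem; the paper's approach, while slightly heavier, has the side benefit of explicitly recording that the largest eigenvalue of $C_m$ is at most $1$, which fits naturally with the spectral machinery used elsewhere in the appendix. Both are perfectly valid here.
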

\begin{proof}
    Let $g\in L^2([-2L,2L]^d, \mathbb C)$. Then, by definition, 
    $Cg = 1_\Omega g$, and $\|Cg\|_2 =  \|1_\Omega  g\|_2 \leqslant \|g\|_2$. Therefore, 
    $|||C|||_2 \leqslant 1$.
    
    Let $m\in \mathbb N^\star$. Then, since $C_m: L^2([-2L,2L]^d, \mathbb C) \to L^2([-2L,2L]^d, \mathbb C)$ is a positive Hermitian compact operator, Theorem \ref{thm:minmax} states that 
    \begin{align*}
        \sigma_1^\downarrow( C_m) &= \underset{\|h\|_2=1}{\max_{h\in L^2([-2L,2L]^d, \mathbb C)}}  \langle h,  C_m h\rangle =\underset{\|h\|_2=1}{\max_{h\in L^2([-2L,2L]^d, \mathbb C)}} \|\Pi_m h\|_2^2 \leqslant \underset{\|h\|_2=1}{\max_{h\in L^2([-2L,2L]^d, \mathbb C)}} \|h\|_2^2=1.
    \end{align*}
    Since $\sigma_1^\downarrow( C_m^2) = \sigma_1^\downarrow( C_m)^2 \leqslant 1$, we deduce that
    \begin{align*}
        1 &\geqslant  \sigma_1^\downarrow( C_m)^2 = \underset{\|h\|_2=1}{\max_{h\in L^2([-2L,2L]^d, \mathbb C)}}  \langle h,  C_m^2 h\rangle =\underset{\|h\|_2=1}{\max_{h\in L^2([-2L,2L]^d, \mathbb C)}}  \|C_m h\|_2^2.
    \end{align*}
    This shows that $|||C_m|||_2\leqslant 1$.
\end{proof}

\subsection{Proof of Theorem \ref{thm:convergence_eff_dim}}

Note that if $H$ is a linear subspace of $H_m$, then $C_mH$ is also a subspace of $H_m$, and $\dim H \geqslant  \dim C_mH$. Therefore, 
    \begin{align}
    \sigma_k^\downarrow(C_mM_m^{-1}C_m) &= \underset{\dim H = k}{\max_{H\subseteq H_m}} \;\underset{\|g\|_2=1}{\min_{g\in H }} \langle g, C_mM_m^{-1}C_m g\rangle\nonumber\\
    &= \underset{\dim H = k}{\max_{H\subseteq H_m}}\; \underset{\|g\|_2=1}{\min_{g\in H }} \langle (C_mg), M_m^{-1} (C_mg)\rangle\nonumber\\
    &\leqslant \underset{\dim H = k}{\max_{H\subseteq H_m}}\; \underset{\|g\|_2=1}{\min_{g\in H }} \langle g, M_m^{-1} g\rangle\nonumber\\
    &= \sigma_k^\downarrow(M_m^{-1})\nonumber\\
    &\leqslant \sigma_k^\downarrow(M^{-1}).\label{eq:cvd_bound}
    \end{align} 
Moreover, according to Lemma \ref{lem:operator_norm}, one has $|||M_m^{-1}-M^{-1}|||_2 \to 0$. Thus,  $\sup_{\|g\|_2=1}||(M_m^{-1}-M^{-1})(g)||_2 \to 0$. Using Lemma \ref{lem:C_op_norm}, we see that 
\begin{align*}
    &\|C_mM_m^{-1}C_mg - CM^{-1}Cg\|_2 \\
    &\leqslant  \|(C_m-C)M^{-1}Cg\|_2 +\|C_m(M_m^{-1}C_m - M^{-1}C)g \|_2 \\
    &\leqslant  |||(C_m-C)M^{-1}|||_2 +\|(M_m^{-1}C_m - M^{-1}C)g \|_2\\
    &\leqslant  |||(C_m-C)M^{-1}|||_2 +\|(M_m^{-1}-M^{-1})C_m g \|_2 + \|M^{-1}(C_m -C)g \|_2\\
    &\leqslant  |||(C_m-C)M^{-1}|||_2 +|||M_m^{-1}-M^{-1} |||_2 + |||M^{-1}(C_m -C)|||_2.
\end{align*}
Thus, \[|||C_mM_m^{-1}C_m - CM^{-1}C|||_2\leqslant |||(C_m-C)M^{-1}|||_2 +|||M_m^{-1}-M^{-1} |||_2 + |||M^{-1}(C_m -C)|||_2.\]
By diagonalizing $M^{-1}$ and using the facts that $\sum_{\ell\in\mathbb N^\star}\sigma^\downarrow_\ell(M^{-1}) < \infty$ and that $\lim_{m\to\infty}\|(C_m-C)f\|_2 = 0$ $\forall f \in L^2([-2L,2L]^d)$, it is easy to see that \[\lim_{m\to\infty} |||(C_m-C)M^{-1}|||_2 = \lim_{m\to\infty} |||M^{-1}(C_m-C)|||_2 = 0.\] Applying Lemma \ref{lem:operator_norm}, we deduce that 
\[\lim_{m\to\infty}|||C_mM_m^{-1}C_m - CM^{-1}C|||_2 = 0.\]
But, by Theorem \ref{thm:minmax},
\[\sigma_k^\downarrow(C_mM_m^{-1}C_m) = \underset{\dim H = k}{\max_{H\subseteq L^2([-2L,2L]^d, \mathbb C)}} \;\underset{\|g\|_2=1}{\min_{g\in H }} \langle g, C_mM_m^{-1}C_m g\rangle,\]
and 
\[\sigma_k^\downarrow(CM^{-1}C) = \underset{\dim H = k}{\max_{H\subseteq L^2([-2L,2L]^d, \mathbb C)}} \;\underset{\|g\|_2=1}{\min_{g\in H }} \langle g, CM^{-1}C g\rangle.\]
Clearly, for all $g\in L^2([-2L,2L]^d, \mathbb C)$,
\begin{align*}
|\langle g, CM^{-1}C g\rangle - \langle g, C_mM_m^{-1}C_m g\rangle| &= |\langle g, (CM^{-1}C-C_mM_m^{-1}C_m) g\rangle|\\&
\leqslant |||C_mM_m^{-1}C_m - CM^{-1}C|||_2.
\end{align*}
Therefore,
\[|\sigma_k^\downarrow(C_mM_m^{-1}C_m) - \sigma_k^\downarrow(CM^{-1}C)|\leqslant |||C_mM_m^{-1}C_m - CM^{-1}C|||_2,\]
and, in turn,
$\lim_{m\to\infty}\sigma_k^\downarrow(C_mM_m^{-1}C_m) = \sigma_k^\downarrow(CM^{-1}C)$.

To conclude the proof, observe that, on the one hand, 
\[
\frac{1}{1+\sigma_k^\downarrow(C_mM_m^{-1}C_m)^{-1}} = \frac{\sigma_k^\downarrow(C_mM_m^{-1}C_m)}{1+\sigma_k^\downarrow(C_mM_m^{-1}C_m)} \leqslant \sigma_k^\downarrow(C_mM_m^{-1}C_m) \leqslant \sigma_k^\downarrow(M^{-1}),
\] with $\sum_{k\in\mathbb N^\star} \sigma_k^\downarrow(M^{-1}) < \infty$.
On the other hand, 
\[\lim_{m\to\infty}\frac{1}{1+\sigma_k^\downarrow(C_mM_m^{-1}C_m)^{-1}} = \frac{1}{1+\sigma_k^\downarrow(CM^{-1}C)^{-1}},
\]
by continuity on $\mathbb R^+$ of the function $x\mapsto \frac{x}{1+x}$. Thus, applying the dominated convergence theorem, we are led to
\[\lim_{m\to\infty}\sum_{\lambda\in \sigma(C_mM_m^{-1}C_m)} \frac{1}{1+\lambda^{-1}} = \sum_{\lambda\in \sigma(CM^{-1}C)} \frac{1}{1+\lambda^{-1}}.\]

\section{Experiments}
\subsection{Numerical precision}
Enabling high numerical precision is crucial for efficient kernel inversion. Setting the default precision to \textit{Float32} and \textit{Complex64} can lead to significant numerical errors when approximating the kernel. For example, consider the harmonic oscillator case with $d=1$, $s=2$, and the operator $\mathscr D = \frac{d^2}{dx^2}u+\frac{d}{dx}u+u$, with $\lambda_n=0.01$ and $\mu_n=1$. Figure \ref{fig:float_precision} (left) shows the spectrum of $C_mM_m^{-1}C_m$ using \textit{Float32} precision, while Figure \ref{fig:float_precision} (right) shows the same spectrum with \textit{Float64} precision.
\begin{figure}
    \centering
    \includegraphics[scale=0.45]{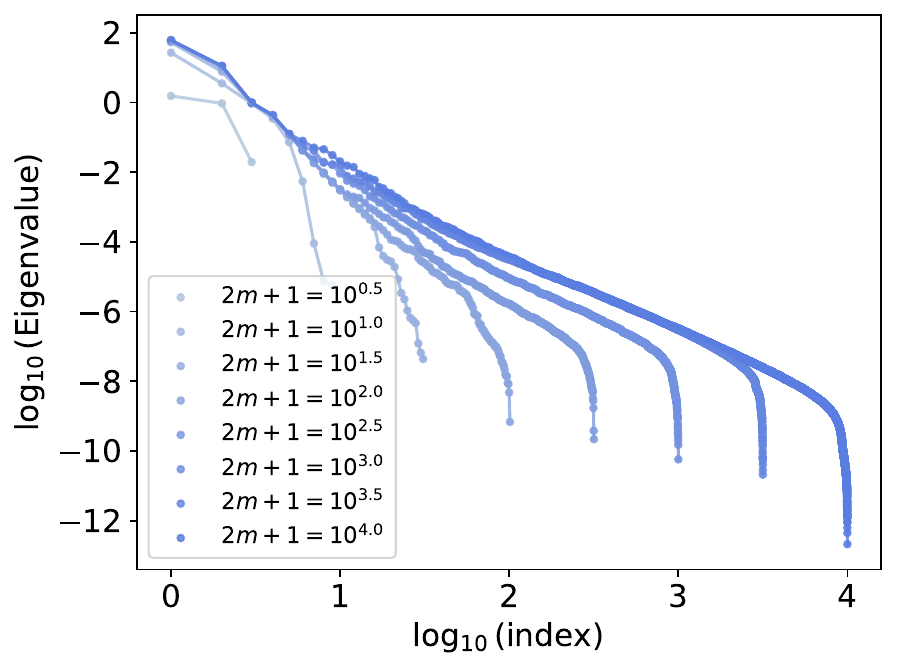}
    \includegraphics[scale=0.45]{oscillator_spectrum.pdf}
    \caption{Spectrum of $C_mM_m^{-1}C_m$. \textbf{Left:} \textit{Float32} precision. \textbf{Right:} \textit{Float64} precision.}
    \label{fig:float_precision}
\end{figure}
It is evident that with \textit{Float32}, the diagonalization results in lower eigenvalues compared to \textit{Float64}. 
In more physical terms, some energy of the matrix is lost when the last digits of the matrix coefficients are ignored. This leads to a problematic interpretation of the situation, as the \textit{Float64} estimation of the eigenvalues shows a clear convergence of the spectrum of $C_m M_m^{-1} C_m$, whereas the \textit{Float32} estimation appears to indicate divergence.

\subsection{Convergence of the effective dimension approximation}
\label{sec:eff_dim_m}
The PIKL algorithm relies on a Fourier approximation of the PIML kernel, as developed in Section~\ref{sec:finite_approx}. 
The precision of this approximation is determined by the number $m$ of Fourier modes used to compute the kernel.
However, determining an \textit{optimal} value of $m$ for a specific regression problem is challenging. There is a trade-off between the accuracy of the kernel estimation, which improves with higher values of $m$, and the computational complexity of the algorithm, which also increases with m. There is a trade-off between the accuracy of the kernel approximation, which improves with higher values of $m$, and the computational complexity of the algorithm, which also increases with $m$. 

An interesting tool to leverage here is the effective dimension, as it captures the underlying degrees of freedom of the PIML problem and, consequently, the precision of the method.  
Theorem~\ref{thm:convergence_eff_dim} states that the estimation of the effective dimension on $H_m$ converges to the effective dimension on $H^s(\Omega)$ as $m$ increases to infinity. Therefore, the smallest value $m^\star$ at which the effective dimension stabilizes is a strong candidate for balancing accuracy and computational complexity.

Figures~\ref{fig:eff_dim_cv_1d}, \ref{fig:eff_dim_cv_osc}, and \ref{fig:eff_dim_cv_heat_sm} illustrate the convergence of the effective dimension estimation, using the eigenvalues of $C_m M_M^{-1}C_m$, as $m$ increases, for different values of $n$.
\begin{figure}
    \centering
    \includegraphics[width=0.5\linewidth]{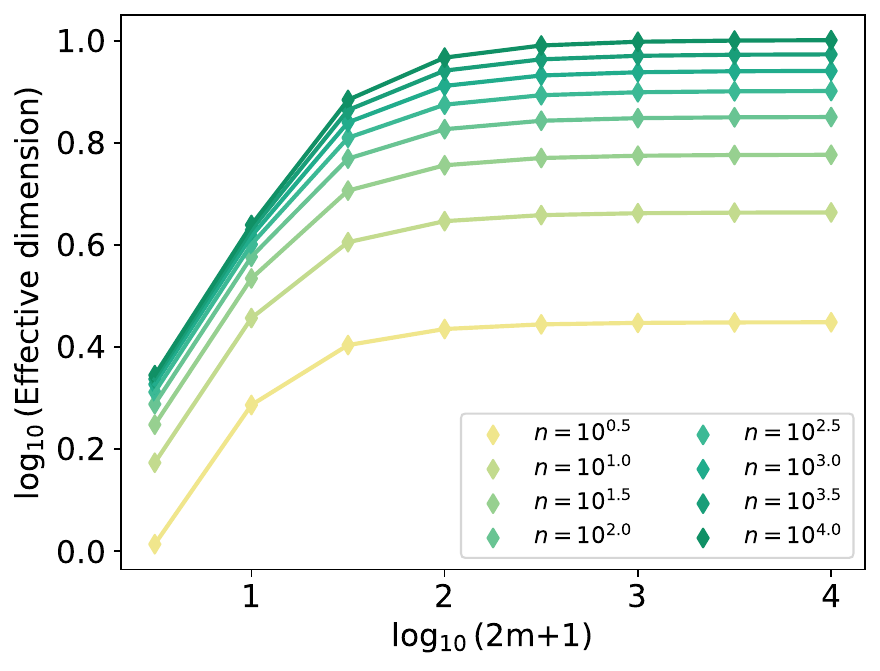}
    \caption{Convergence of the effective dimension as $m$ grows for $\mathscr D = \frac{d}{dx}$.}
    \label{fig:eff_dim_cv_1d}
\end{figure}
\begin{figure}
    \centering
    \includegraphics[width=0.5\linewidth]{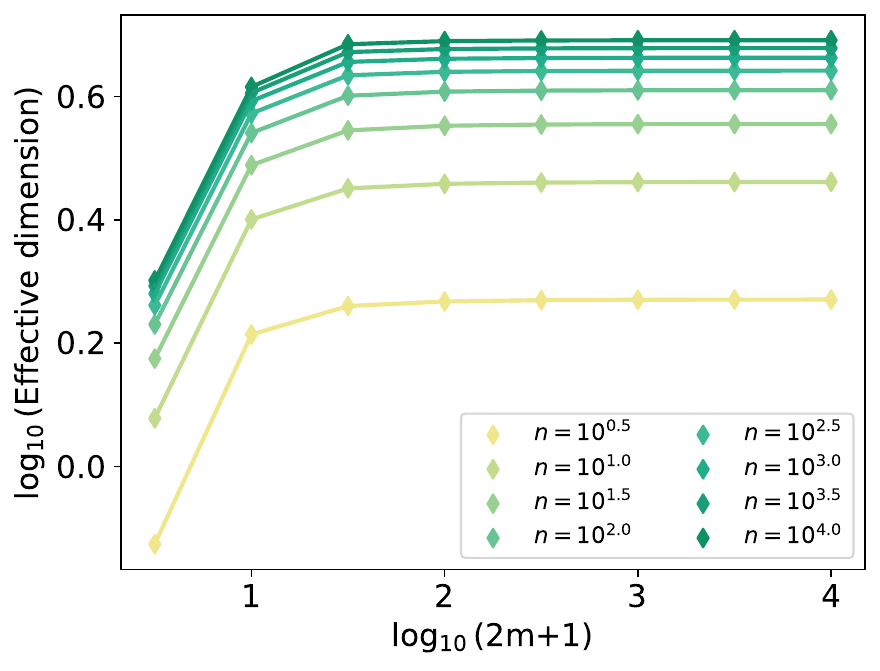}
    \caption{Convergence of the effective dimension as $m$ grows for the harmonic oscillator}
    \label{fig:eff_dim_cv_osc}
\end{figure}
\begin{figure}
    \centering
    \includegraphics[width=0.5\linewidth]{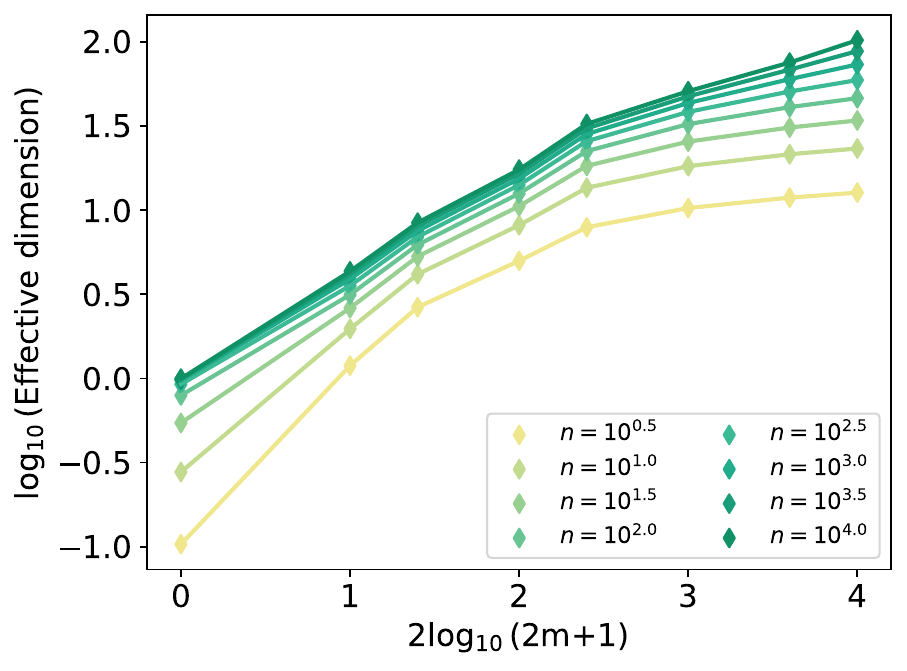}
    \caption{Convergence of the effective dimension as $m$ grows for the heat equation on the disk.}
    \label{fig:eff_dim_cv_heat_sm}
\end{figure}
These figures provide insights into the PIK algorithm. As expected, the Fourier approximations converge more slowly as the dimension $d$ increases. Specifically, Figures ~\ref{fig:eff_dim_cv_1d} and \ref{fig:eff_dim_cv_osc} show that in dimension $d=1$, $m^\star \simeq 10^2$ for $n \leqslant 10^4$, while Figure~\ref{fig:eff_dim_cv_heat_sm} indicates that in dimension $d=2$, $m^\star \simeq 10^2$ for $n \leqslant 10^3$.

\subsection{Numerical schemes}
\label{sec:numerical}
We detail below the numerical schemes used as benchmarks in Section~\ref{sec:PDE_solving} for solving the wave equation. All these numerical schemes are constructed by discretizing the domain $\Omega = [0,1]^2$ into the grid $(\ell_1^{-1} \mathbb Z/\ell_1 \mathbb Z) \times (\ell_2^{-1} \mathbb Z/\ell_2 \mathbb Z)$. The initial and boundary conditions are then enforced on $n = 2\ell_1 + \ell_2$ points, with the approximation $\hat{f}_{n}$ defined accordingly as
\begin{itemize}
    \item for all $0\leqslant \ell \leqslant \ell_2$, $\hat f_{n}(0, \ell / \ell_2) = \sin(\pi \ell / \ell_2) + \sin(4 \pi \ell / \ell_2)/2$,
    \item for all $0\leqslant \ell \leqslant \ell_1$, $\hat f_{n}(\ell / \ell_1, 0) = 0$,
    \item for all $0\leqslant \ell \leqslant \ell_1$, $\hat f_{n}(\ell / \ell_1, 1) = 0$.
\end{itemize}
Let the discrete Laplacian $\Delta_{(\ell_1,\ell_2)}$ be defined for all $(a,b) \in \mathbb Z/\ell_1 \mathbb Z \times \mathbb Z/\ell_2 \mathbb Z$ by 
\[(\Delta_{(\ell_1,\ell_2)} \hat f_{n})(a/\ell_1, b / \ell_2) = \ell_2^{2}(\hat f_{n}(a/\ell_1, (b+1) / \ell_2) - 2\hat f_{n}(a/\ell_1, b / \ell_2) + \hat f_{n}(a/\ell_1, (b-1) / \ell_2)).\]
If $f^\star \in C^2([0,1]^2)$, its  Taylor expansion leads to $(\Delta_{(\ell_1,\ell_2)} f^\star)(a/\ell_1, b / \ell_2) = \partial^2_{x,x} f^\star(a/\ell_1, b / \ell_2) + o_{\ell_2 \to 0}(1)$.
Similarly, let the second-order time partial derivative operative $\partial^2_{t,t, (\ell_1,\ell_2)}$ be defined for all $(a,b) \in \mathbb Z/\ell_1 \mathbb Z \times \mathbb Z/\ell_2 \mathbb Z$ by \[(\partial^2_{t,t, (\ell_1,\ell_2)} \hat f_{n})(a/\ell_1, b / \ell_2) = \ell_2^{2}(\hat f_{n}((a+1)/\ell_1, b / \ell_2) - 2\hat f_{n}(a/\ell_1, b / \ell_2) + \hat f_{n}((a-1)/\ell_1, b / \ell_2)).\]
\paragraph{Euler explicit.} The Euler explicit scheme is initialized using the Taylor expansion $f(t, x) = f(0, x) + t \partial_t f(0, x) + t^2 \partial^2_{t,t} f(0, x)/2 + o_{t\to 0}(t^2)$. With the initial condition $\partial_t f(0, x) = 0$ and the wave equation $\partial^2_{t,t} f(0, x) = 4 \partial^2_{x,x}f(0,x)$, this simplifies to $f(t, x) = f(0, x) +  2 t^2 \partial^2_{x,x} f(0, x) + o_{t\to 0}(t^2)$. This leads to the initialization 
\[\forall 0\leqslant b \leqslant \ell_2, \quad \hat f_{n}(1/\ell_1, b / \ell_2) = \hat f_{n}(0, b / \ell_2) + 2\ell_1^{-2}(\Delta_{(\ell_1,\ell_2)} \hat f_{n})(0, b / \ell_2).\]
The wave equation $\partial^2_{t,t} f^\star = 4 \partial^2_{x,x} f^\star$ can then be discretized as $\partial^2_{t,t, (\ell_1,\ell_2)} \hat f_n = 4 \Delta_{(\ell_1,\ell_2)} \hat f_n$. This leads to the explicit Euler recursive formula
\[\hat f_{n}((a+1)/\ell_1, b / \ell_2) = 2\hat f_{n}(a/\ell_1, b / \ell_2) - \hat f_{n}((a-1)/\ell_1, b / \ell_2) + 4\ell_1^{-2} (\Delta_{(\ell_1,\ell_2)} \hat f_{n})(a/\ell_1, b / \ell_2).\]
This formula allows to compute $\hat f_{n}((a+1)/\ell_1, \cdot)$ given the values of $\hat f_{n}(0, \cdot), \hdots, \hat f_{n}(a/\ell_1, \cdot)$.
\paragraph{Runge-Kutta 4.} The RK4 scheme is a numerical scheme applied on both $f^\star$ and its derivative $\partial_t f^\star$. Here, $\hat g_n$ represents the approximation of $\partial_t f^\star$. The initial condition $\partial_t f(0, \cdot)=0$ translates into 
\[\forall 0\leqslant b \leqslant \ell_2, \quad \hat g_{n}(0, b / \ell_2) = 0.\]
To infer $\hat f_{n}((a+1)/\ell_1, \cdot)$ and $\hat g_{n}((a+1)/\ell_1, \cdot)$ given the values of $\hat f_{n}(0, \cdot), \hdots, \hat f_{n}(a/\ell_1, \cdot)$ and $\hat g_{n}(0, \cdot), \hdots, \hat g_{n}(a/\ell_1, \cdot)$, the RK4 scheme introduces intermediate estimates as follows:
\begin{itemize}
    \item $\tilde f_1 = \hat g_{n}(a/\ell_1, \cdot) /\ell_1$,
    \item $\tilde g_1 = 4 (\Delta_{(\ell_1,\ell_2)} \hat f_{n})(a/\ell_1, \cdot)/\ell_1$,
    \item $\tilde f_2 = (\hat g_{n}(a/\ell_1, \cdot) +0.5 \tilde f_1)/\ell_1$,
    \item $\tilde g_2 = 4 (0.5 \tilde g_1 + (\Delta_{(\ell_1,\ell_2)} \hat f_{n})(a/\ell_1, \cdot))/\ell_1$,
    \item $\tilde f_3 = (\hat g_{n}(a/\ell_1, \cdot) +0.5 \tilde f_2)/\ell_1$,
    \item $\tilde g_3 = 4 (0.5 \tilde g_2 + (\Delta_{(\ell_1,\ell_2)} \hat f_{n})(a/\ell_1, \cdot))/\ell_1$,
    \item $\tilde f_4 = (\hat g_{n}(a/\ell_1, \cdot) + \tilde f_3)/\ell_1$,
    \item $\tilde g_4 = 4 (\tilde g_3 + (\Delta_{(\ell_1,\ell_2)} \hat f_{n})(a/\ell_1, \cdot))/\ell_1$,
    \item $\hat f_{n}((a+1)/\ell_1, \cdot) = \hat f_{n}(a/\ell_1, \cdot) + (\tilde f_1+2\tilde f_2+2\tilde f_3+\tilde f_4)/6$,
    \item $\hat g_{n}((a+1)/\ell_1, \cdot) = \hat g_{n}(a/\ell_1, \cdot) + (\tilde g_1+2\tilde g_2+2\tilde g_3+\tilde g_4)/6$.
\end{itemize}
Similarly to the Euler explicit scheme, the RK4 relies on a recursive formulas to compute $\hat f_n$.

\paragraph{Crank-Nicolson}. The CN scheme is an implicit scheme defined as follows. Similar to the Euler explicit scheme, the $\partial_t f(0, \cdot) = 0$ initial condition is implemented as
 \[\forall 0\leqslant \ell \leqslant \ell_2, \quad \hat f_{n}(1/\ell_1, \ell / \ell_2) = \hat f_{n}(0, \ell / \ell_2) + 2\ell_1^{-2}(\Delta_{(\ell_1,\ell_2)} \hat f_{n})(0, \ell / \ell_2).\]
 Then, the recursive formula of this scheme takes the form 
 \[\partial^2_{t,t, (\ell_1,\ell_2)} \hat f_n(a/\ell_1, \cdot) = 2 ((\Delta_{(\ell_1,\ell_2)} \hat f_n)(a/\ell_1, \cdot) + (\Delta_{(\ell_1,\ell_2)} \hat f_n)((a+1)/\ell_1, \cdot)).\]
 This leads to the recursion
 \[\hat f_n((a+1)/\ell_1, \cdot) = (\mathrm{Id} + 2\ell_2^2\ell_1^{-2} \Delta)^{-1}(3 \hat f_n(a/\ell_1, \cdot)- \hat f_n((a-1)/\ell_1, \cdot)) - \hat f_n((a-1)/\ell_1, \cdot),\]
 where $\Delta = \begin{pmatrix}
     2 & -1 & 0 & \hdots & 0\\
     -1 & 2 & -1 & \ddots & \vdots\\
     0 & -1& 2 & \ddots &  0\\
     \vdots & \ddots & \ddots & \ddots & -1 \\
     0 & \hdots & 0 & -1 & 2& 
 \end{pmatrix}$ is the discrete Laplacian matrix.
 
 \subsection{PINN training}
Figures~\ref{fig:nn_noise} and \ref{fig:PINN_BC} illustrates the performance of the PINNs during training while solving the 1d wave equation with noisy boundary conditions.

\begin{figure}
    \centering
    \includegraphics[width=\linewidth]{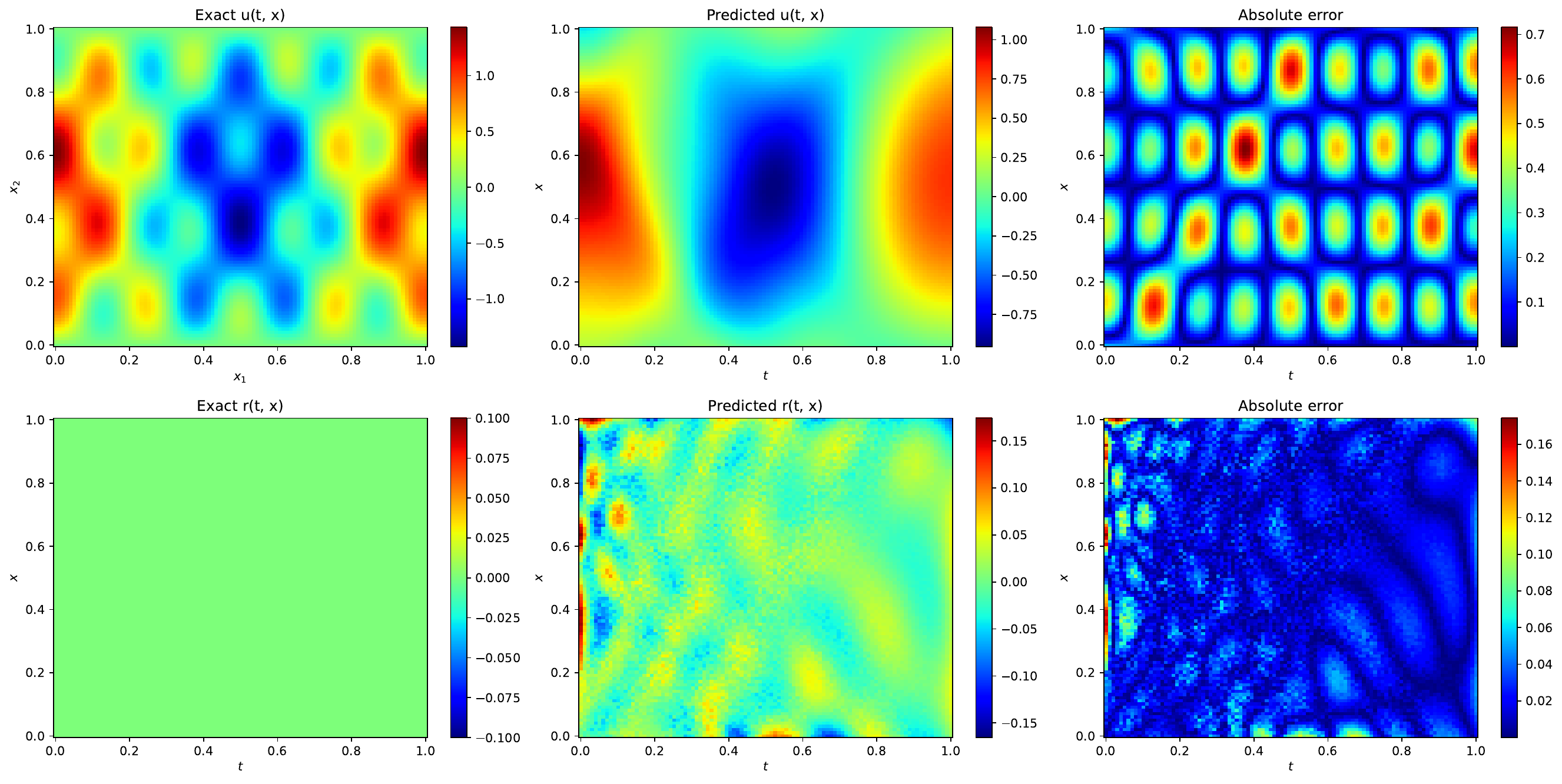}
    \caption{\textbf{Left}: Ground truth. \textbf{Middle}: PINN estimator. \textbf{Right}: Error $=$ PINN - Ground truth.}
    \label{fig:nn_noise}
\end{figure}
\begin{figure}
    \centering
    
    \includegraphics[width=0.7\linewidth]{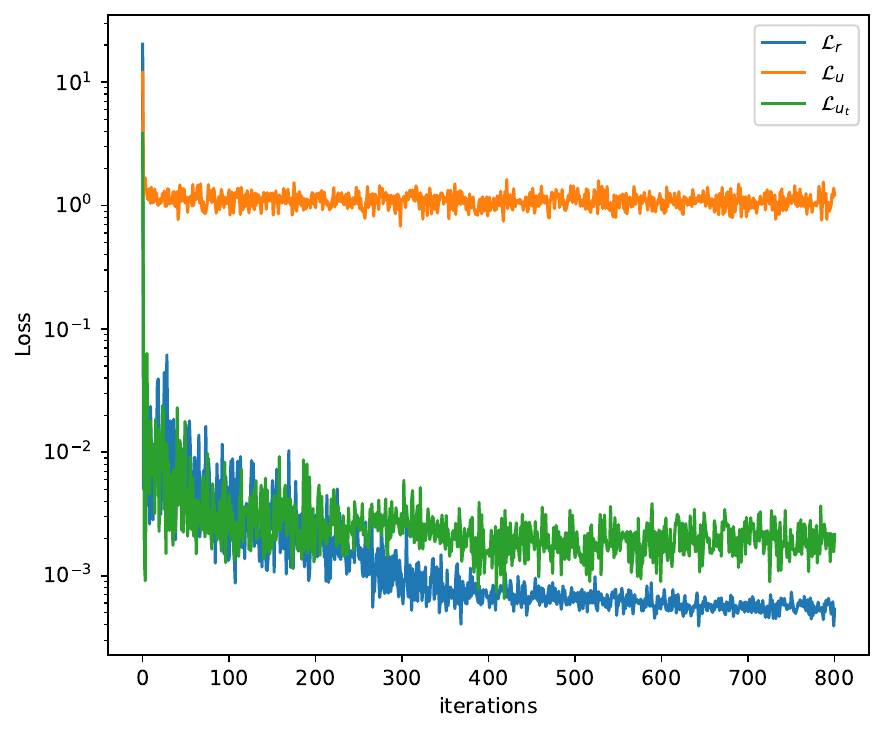}
    \caption{PINN training with noisy boundary conditions.}
    \label{fig:PINN_BC}
\end{figure}

\vskip 0.2in
\bibliography{biblio}

\end{document}